\newcommand*{\scost}[1]{c_s(#1)\xspace} 
\newcommand*{\dcost}[1]{c_d(#1)\xspace} 
\newcommand*{\rcost}[1]{c_r(#1)\xspace} 
\newcommand*{\fcost}[1]{c_f(#1)\xspace} 
\newcommand*{\flow}[1]{f_{#1}} 
\newcommand*{\sv}[1]{s_{#1}} 
\newcommand*{\dd}[1]{d_{#1}} 
\newcommand*{\rv}[1]{r_{#1}} 
\DeclareMathOperator{\indeg}{indeg}
\DeclareMathOperator{\outdeg}{outdeg}
\pgfplotsset{compat=1.18}
\date{}
\title{The Single Robot Line Coverage Problem:\\ Theory, Algorithms, and Experiments}
\author{Saurav Agarwal\thanks{GRASP Laboratory, University of Pennsylvania, USA. The presented work was done at University of North Carolina at Charlotte, USA. E-mail: {\tt SauravAg@upenn.edu}}\; and Srinivas Akella%
\thanks{Department of Computer Science, University of North Carolina at Charlotte, USA. E-mail: {\tt sakella@charlotte.edu}\\ This work was supported in part by NSF award IIP-1919233, a UNC IPG award, and DARPA HR00111820055.}}
\begin{document}
\maketitle

\begin{abstract}
	Line coverage is the task of servicing a given set of one-dimensional features in an environment.
It is important for the inspection of linear infrastructure such as road networks, power lines, and oil and gas pipelines.
This paper addresses the \textit{single robot line coverage} problem for aerial and ground robots by modeling it as an optimization problem on a graph.
The problem belongs to the broad class of arc routing problems and is closely related to the rural postman problem (RPP) on asymmetric graphs.
The paper presents an integer linear programming formulation with proofs of correctness.
Using the minimum cost flow problem, we develop approximation algorithms with guarantees on the solution quality.
These guarantees also improve the existing results for the asymmetric RPP.
The main algorithm partitions the problem into three cases based on the structure of the \textit{required graph}, i.e., the graph induced by the features that require servicing.
We evaluate our algorithms on road networks from the 50 most populous cities in the world, consisting of up to 730 road segments.
The algorithms, augmented with improvement heuristics, run within 3\,s and generate solutions that are within 10\% of the optimum.
We experimentally demonstrate our algorithms with commercial UAVs.

\end{abstract}

\section{Introduction}
\label{sc:introduction}
\textit{Line coverage} is the task of servicing linear environment features using sensors or tools mounted on a robot.
The features to be serviced are modeled as one-dimensional segments (or curves); all points along the segments must be visited.
Consider a natural disaster scenario such as flooding in which an uncrewed aerial vehicle (UAV) with cameras is deployed for the assessment of connectivity of a road network for emergency services.
The UAV must traverse the line segments corresponding to the road network and use its cameras to capture images.
It may also travel at higher speeds from one point to another while not capturing images.
The following question then arises: How should a tour for the robot be planned such that it traverses each road network segment and minimizes the flight time?
\fgref{fig:uncc_example} depicts such a scenario with an optimal tour for a UAV and an orthomosaic generated from the images collected during the flight.
Power lines and oil and gas pipelines have similar linear infrastructure that require frequent inspection.
Additional applications arise in perimeter inspection and surveillance, traffic analysis of road networks, and welding operations.
Line coverage algorithms can also be used as a subroutine for routing in area coverage problems, i.e.,~coverage of 2D regions, by decomposing the environment into line segments.

Line coverage is closely related to arc routing problems (ARPs)~\citep{CorberanL14}.
ARPs have been used for snow plowing, spraying salt, and cleaning road networks \citep{CorberanEHPS21}.
ARPs and their algorithms have been traditionally designed for human-operated vehicles.
The above tasks can potentially be automated with uncrewed ground vehicles (UGVs).
Recently, ARP variants, such as the drone arc routing problem, have been developed specifically for UAVs.
However, line coverage has received limited attention in the robotics community.
Developing algorithms that rapidly generate high-quality solutions is essential for planning robot motions.
Algorithms with low computation requirements can be deployed conveniently on robots and executed to replan routes if the environment changes.
In this paper, we design algorithms for line coverage using autonomous systems such as UAVs and UGVs.
The algorithms provide theoretical guarantees on the quality of the solutions, which improve existing results for related single vehicle ARPs.
The simulations and experiments further demonstrate the effectiveness of our algorithms for line coverage applications using robots.

The line coverage problem, modeled using a graph, has two defining attributes:
(1)~The edges in the graph are classified as required and non-required, and
(2)~Robots have two modes of travel---servicing and deadheading.
{\em Required edges} correspond to the linear features to be covered, and the {\em non-required edges} can be used by a robot to travel from one vertex to another to reduce cost.
The vertices in the graph represent the endpoints of the edges.
The robot is said to be {\em servicing} a required edge when it performs task-specific actions such as collecting sensor data.
Each required edge needs to be serviced exactly once.
The robot may also traverse an edge without performing servicing tasks to optimize the travel time, conserve energy, or reduce the amount of sensor data.
This is known as {\em deadheading}, and both types of edges may be used any number of times for this purpose.

\begin{figure}[htbp]
	\centering
	\begin{subfigure}[c]{.3\textwidth}
		\centering
		\includegraphics[width=\textwidth]{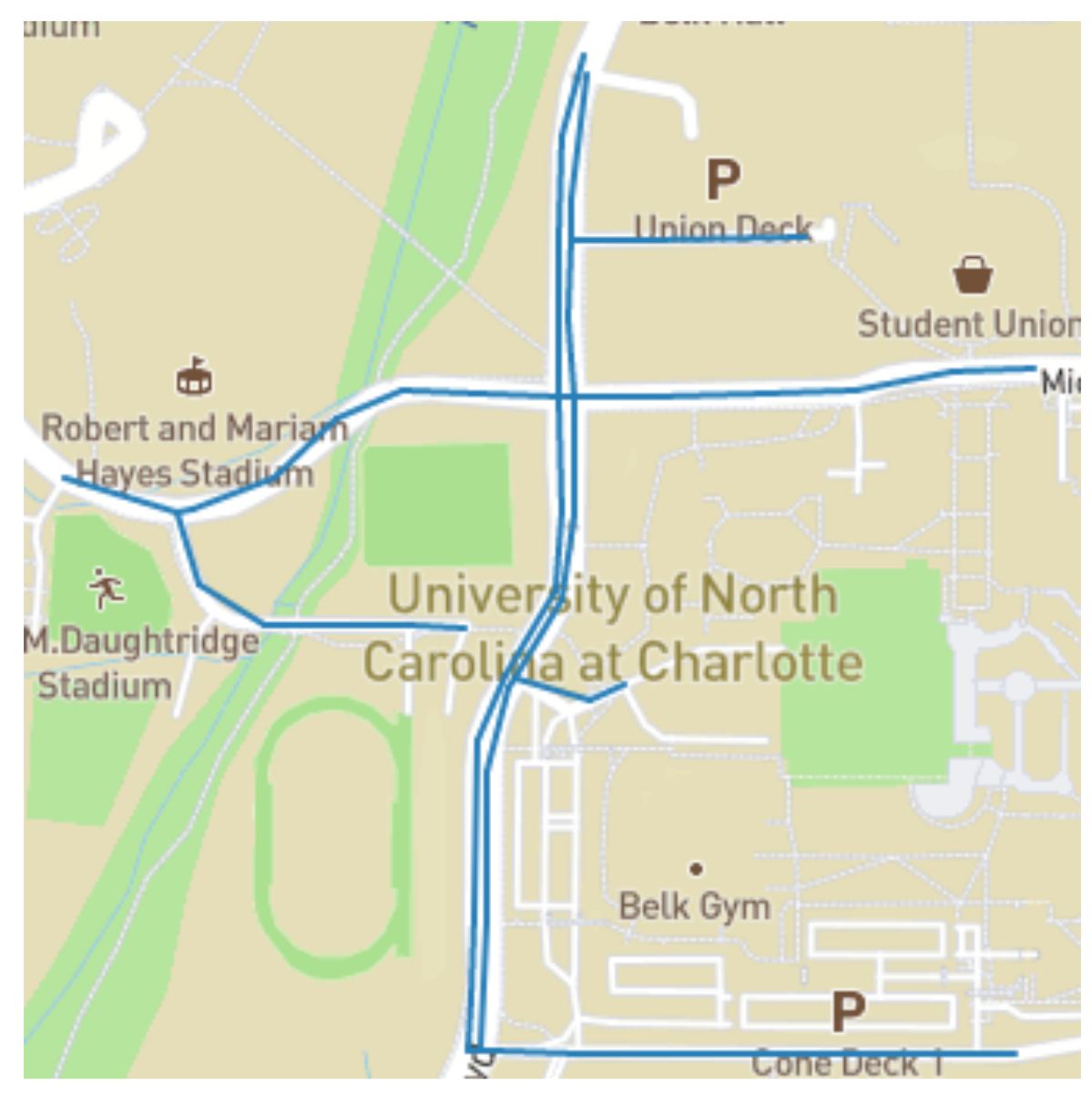}
		\caption{Road network}
	\end{subfigure}%
	\hfill
	\begin{subfigure}[c]{.3\textwidth}
		\centering
		\includegraphics[width=\textwidth]{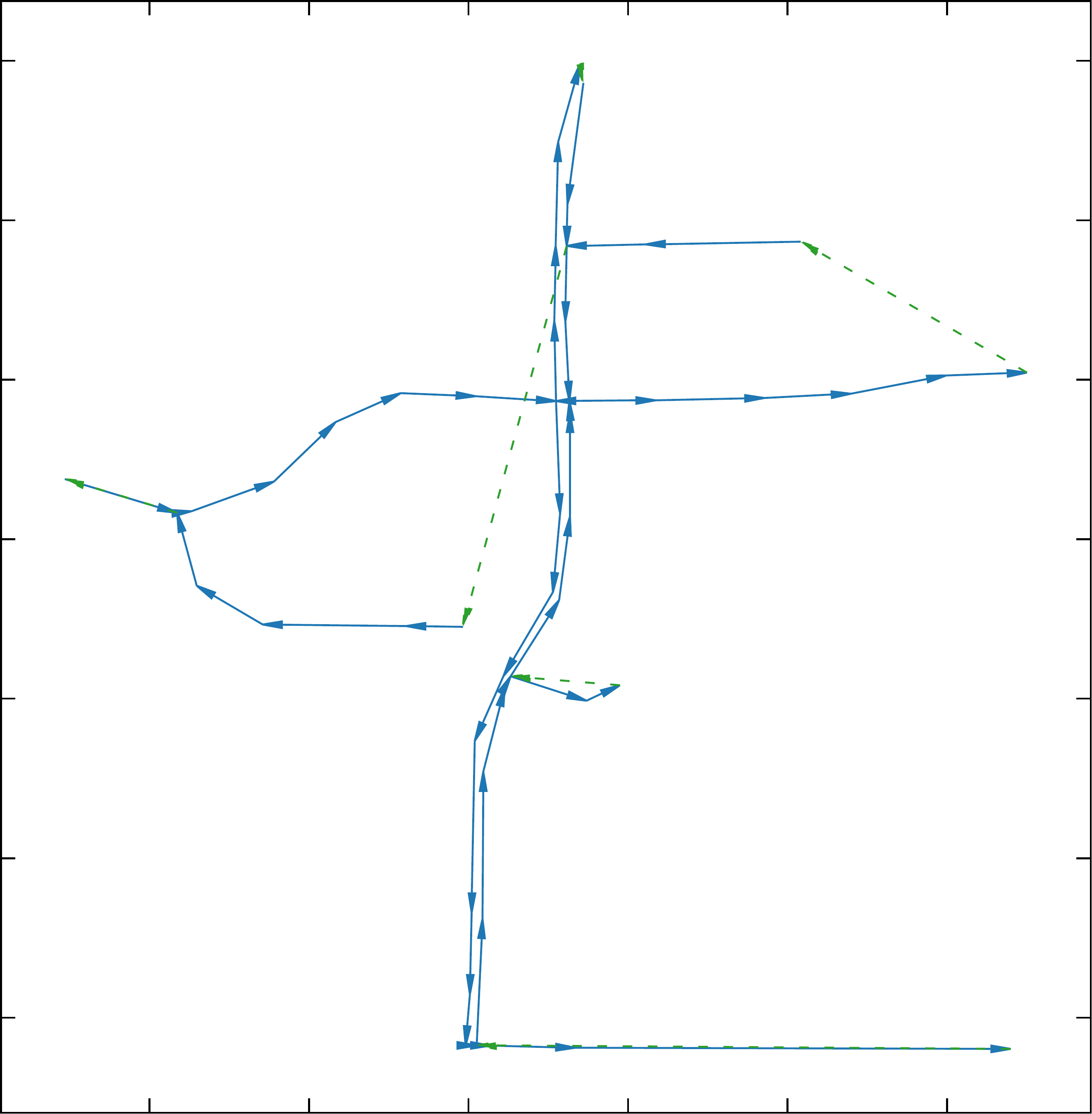}
		\caption{Line coverage tour}
	\end{subfigure}%
	\hfill
	\begin{subfigure}[c]{.3\textwidth}
		\centering
		\includegraphics[width=\textwidth]{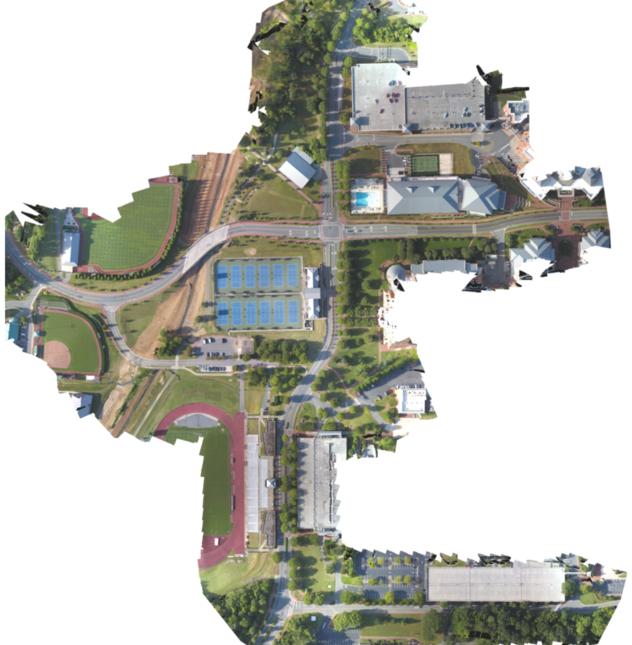}
		\caption{Orthomosaic}
	\end{subfigure}
	\caption{%
		Line coverage of a road network by an autonomous UAV:
		(a)~A region of the UNC Charlotte campus road network; blue lines show required edges to be serviced.
		Non-required edges, not shown, are straight lines between pairs of vertices.
		The network data was extracted using OpenStreetMap.
		(b)~An optimal coverage tour for the road network is shown; dashed segments indicate deadheading travel.
		(c)~An orthomosaic map of the road network generated from photos taken by the UAV along the required edges of the coverage tour.%
	\label{fig:uncc_example}}
\end{figure}

A service cost and a deadhead cost (e.g., travel time) are associated with each required edge, and they are incurred each time an edge is serviced or deadheaded, respectively.
Only the deadhead cost is associated with the non-required edges.
The sum of the service and deadhead costs is to be minimized.
As task-related actions are performed only when servicing a required edge, not while deadheading, the service costs are considered to be greater than or equal to the deadhead costs.
For example, with travel time as the cost, a UAV servicing an edge by recording images may need to travel slower than when deadheading to avoid motion blur.
In contrast, the service and deadhead costs are usually assumed to be identical for the required edges in standard ARPs, and therefore, the line coverage problem generalizes the standard problems.

In many robotics applications, the cost of travel is direction dependent.
For example, for ground robots, the cost of traveling uphill can be significantly higher than that of traveling downhill.
Similarly, for UAVs, the cost of an edge may differ along the two directions due to wind conditions.
Hence, we consider the graph to have asymmetric edge costs for both servicing and deadheading.
Asymmetric edges can also model one-way streets for ground robots.

The {\em single robot line coverage problem} is the problem of finding a coverage tour that minimizes the total travel cost while ensuring that each linear feature is serviced exactly once.
The formulation allows distinct costs for servicing and deadheading and permits asymmetric costs.
The single robot line coverage problem is a generalization of the rural postman problem (RPP) introduced by Orloff~\cite{Orloff74}.
The NP-hardness of the RPP, shown by Lenstra and Kan~\cite{LenstraK76}, implies that the single robot line coverage problem is NP-hard.
This makes it imperative to develop approximation algorithms.
Agarwal and Akella~\cite{AgarwalA20ICRA} addressed the line coverage problem with multiple robots and presented two heuristic algorithms.
These heuristic algorithms do not have theoretical guarantees, though they were empirically shown to give results close to optimal.
The single robot line coverage problem, as discussed in this paper, is a special case of the multiple robot version that does not consider the robot resource capacity and the resource demands of edges.
Algorithms for the single robot problem can be used to develop algorithms for multiple robots.

{\em Contributions:}
In this paper, we elucidate the single robot line coverage problem and develop approximation algorithms.
We analyze the problem in stages---going from a simpler problem to the most general version.
The problems are based on the structure of the {\em required graph}, i.e., the graph induced by only the required edges.
The contributions of the paper are:
\begin{enumerate}
	\item We pose the single robot line coverage problem as an optimization problem and develop an integer linear programming (ILP) formulation that gives an optimal solution.
		Additionally, we provide formal proofs for the correctness of the formulation.
	\item We develop a relaxation of the ILP formulation with continuous decision variables and model it using a minimum cost flow problem.
		The model is used to design an optimal algorithm for graphs with Eulerian required graphs.
		A 2-approximation algorithm is then developed for connected required graphs.
	\item An $(\alpha(C)+2)$-approximation algorithm is presented for the general case of a required graph with multiple connected components, where $C$ is the number of connected components, and $\alpha(C)$ is the approximation factor for an algorithm for the asymmetric traveling salesperson problem.
		Proofs for the approximation guarantee are provided for all the algorithms.
	\item	We publish a dataset\footnote{The dataset is available at: \url{https://github.com/UNCCharlotte-CS-Robotics/LineCoverage-dataset}} consisting of road networks of the 50 most populous cities in the world and perform simulations.
		The results show that the algorithms compute solutions within 10\% of the optimum.
		The algorithms find solutions to the instances within 3\,s and are fast enough to enable rapid robot replanning.
		Experimental validation of the algorithms is performed.
		We also provide an open-source implementation\footnote{The source code is available at: \url{https://github.com/UNCCharlotte-CS-Robotics/LineCoverage-library}} of our algorithms.
\end{enumerate}

Building on the earlier publication by the authors~\cite{AgarwalA20WAFR}, this paper develops a thorough theoretical analysis of the formulations and the algorithms, provides extensive simulation results, and validates the algorithms in experiments with UAVs.
In particular, formal proofs for the correctness of the ILP formulation are provided.
Detailed theoretical analysis of the problem, with a running example, furnishes insights into the structure of the problem that has been instrumental in developing the algorithms.
Further improvements to the algorithms are made using heuristic subroutines.
These lead to an efficient and fast implementation that we demonstrate on a new dataset of road networks.
We additionally demonstrate the application of our algorithms through two experiments on a campus road network.

{\em Organization:} The rest of the paper is organized as follows.
The related work is discussed in \scref{sc:related_work}.
The single robot line coverage problem is formally described in \scref{sc:problemDef}.
The section provides the ILP formulation and a continuous relaxation along with formal proofs.
The approximation algorithms are developed in \scref{sc:approx_algos}.
The simulations and experiments are discussed in \scref{sc:sim}.
\scref{sc:conclusion} concludes the paper.

\section{Related Work}
\label{sc:related_work}
The line coverage problem belongs to the broad class of arc routing problems (ARPs).
A hierarchy of standard arc routing problems and the single robot line coverage problem is shown in \fgref{fig:hierarchySingle}.
The ARPs have traditionally been applied to transportation problems in which servicing is related to tasks such as delivery and pick up of goods \cite{CorberanL14}.
Hence, the travel distances are used as costs and often have the same value whether the edge is serviced or deadheaded.
Separate and asymmetric service costs are typically not considered.

\begin{figure}[htbp]
	\centering
	\begin{tikzpicture}
	\node[myBlock,fill=none] (lcs) at (-5,0) [text width=3cm, align=center] {Single Robot Line Coverage};
	\node[myBlock, fill=none] (wrpp)[below=1.2cm of lcs] {Asymmetric Rural Postman Problem};
	\node[myBlock, fill=none] (wpp) [right=1.5cm of wrpp] {Asymmetric\\Postman Problem};
	\node[myBlock, fill=none] (cpp) [below=1.2cm of wpp] {Chinese\\ Postman Problem};
	\node[myBlock, fill=none] (rpp) [below=1.2cm of wrpp] {Rural\\Postman Problem};
	\draw[Arc] (lcs) -- (wrpp);
	\draw[Arc] (wrpp) -- (wpp);
	\draw[Arc] (wrpp) -- (rpp);
	\draw[Arc] (wpp) -- (cpp);
	\draw[Arc] (rpp) -- (cpp);
\end{tikzpicture}
	\caption{A hierarchy of arc routing problems with a single vehicle/robot.
		An arrow from problem A to problem B indicates that B is a special case of A.
		The single robot line coverage problem generalizes all the other depicted arc routing problems.
	Postman problems on asymmetric graphs are also termed {\em windy}, for example, the windy postman problem (WPP) and the windy rural postman problem.}
	\label{fig:hierarchySingle}
\end{figure}
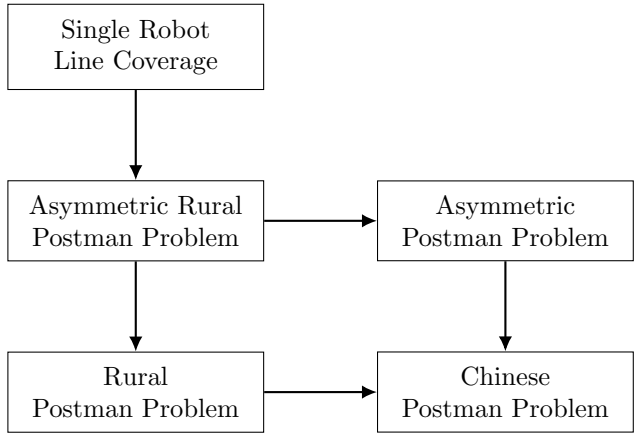

{\bf Arc Routing Problems for a Single Vehicle:}
The Chinese postman problem (CPP) is to find an optimal tour such that every edge in a given undirected and connected graph is traversed at least once.
Edmonds and Johnson~\cite{EdmondsJ73} used matching and network flows to solve the CPP on undirected, directed, and Eulerian mixed graphs.
The CPP on mixed or asymmetric graphs is an NP-hard problem.
Frederickson~\cite{Frederickson79} presented a $5/3$-approximation algorithm for the CPP on mixed graphs by using a combination of two approximation algorithms.
The approximation factor was later improved to $3/2$ by Raghavachari and Veerasamy~\cite{RaghavachariV99}.

The asymmetric postman problem, also known as the windy postman problem (WPP), is the CPP with asymmetric edge costs.
This problem is NP-hard, as shown by Guan~\cite{Guan84}.
Win~\cite{Win89} solved the WPP for Eulerian graphs in polynomial time by modeling it as a minimum cost flow problem.
Win also designed a 2-approximation algorithm for WPP on general graphs using matching (to make the graph Eulerian) and minimum cost network flow.
Raghavachari and Veerasamy~\cite{RaghavachariV99SODA} gave a $3/2$-approximation for the WPP.
The CPP and the WPP do not allow non-required edges in the graph.

When the edges to be serviced are a subset of the edges in the graph, we have the rural postman problem (RPP).
It was proved that the RPP is NP-hard by Lenstra and Kan~\cite{LenstraK76}.
For the RPP, Frederickson~\cite{Frederickson79} gave a $3/2$-approximation algorithm similar to the algorithm by Christofides~\cite{Christofides76} for the metric traveling salesperson problem~(TSP).
The asymmetric RPP considers undirected graphs with asymmetric edge costs, i.e., the cost of traversal of an edge can be different in the two directions.
This problem is more commonly referred to as the windy rural postman problem in the arc routing literature.
Lower bounds and heuristic algorithms were proposed by Benavent et al.~\cite{BenaventCCSV07,BenaventCPPS04} for the windy rural postman problem.
However, these algorithms do not provide an approximation guarantee and have known pathological instances for which the ratio of the solution cost to the optimal cost is infinite.
van Bevern et al.~\cite{vanBevernKS17} showed that if the $n$-vertex asymmetric traveling salesperson problem (ATSP), subject to the triangle inequality constraint, is $\alpha(n)$-approximable in $t(n)$ time, then $n$-vertex RPP on an asymmetric and mixed graph is $(\alpha(C)+3$)-approximable in $O(t(C) + n^3\log n)$ time, where $C$ is the number of weakly connected components in the subgraph induced by required arcs and edges.
The single robot line coverage problem is closely related to the asymmetric RPP.
However, in the asymmetric RPP, the costs of deadheading and servicing a required edge are the same.
In contrast, the formulation in this paper allows distinct and asymmetric servicing and deadheading costs for required edges.
The algorithms presented in this paper are applicable to the asymmetric RPP as any instance of the asymmetric RPP can be converted to that of the single robot line coverage problem by setting the cost of deadheading a required edge to the cost of the edge.
The guarantee on the approximation factor for our algorithms is an improvement over the previously best-known algorithms given by van Bevern et al.~\cite{vanBevernKS17} for the asymmetric RPP.

Exact and metaheuristic methods have been proposed for the ARPs, and they are covered in the survey paper by Corber{\'a}n and Prins~\cite{CorberanP10} and the monograph by Corber{\'a}n and Laporte~\cite{CorberanL14}.
Exact methods include branch-and-cut with specific cutting plane procedures, branch-and-price, and column generation.
One of the key techniques for these algorithms is to incorporate additional constraints that tighten the feasible space of the linear relaxation.
Metaheuristic algorithms, such as scatter search, tabu search, and variable neighborhood descent, have also been used to solve the ARPs.
However, these algorithms are not particularly suitable for robotics applications as they require significant computational resources.
Moreover, they typically require a good initial solution as an additional input to upper bound the optimal cost of the instance.
The algorithms presented in this paper can be used to provide such an initial solution with a guaranteed upper bound provided by the approximation factor.

{\bf Asymmetric Traveling Salesperson Problem (ATSP):}
A dynamic programming algorithm that runs in $\mathcal O(n^22^n)$ computation time, where $n$ is the number of vertices in the input graph, was given by Held and Karp~\cite{HeldK62} and Bellman~\cite{Bellman62}.
The algorithm gives optimal solutions and can effectively solve small instances.
Svensson et al.~\cite{SvenssonTV18ATSP} were the first to present a constant-factor approximation algorithm for the ATSP with the triangle inequality.
Traub and Vygen~\cite{TraubV20ATSP} improved the approximation ratio to $22+\epsilon,$ $\epsilon > 0$, for the ATSP.
These results are relevant for the theoretical guarantees on the approximation factor of our algorithms.

{\bf Line Coverage in Robotics:}
Line coverage has been used in robotics for the inspection of road networks and object boundaries.
Dille and Singh~\cite{DilleS13} model the problem of road network coverage through tessellation of the road network by circles corresponding to the sensor footprint and finding a subset of the circular regions that covers the entire road network.
Algorithms for node routing problems, such as the TSP and multiple TSP, are then used to find tours for the robots.
Oh et al.~\cite{OhKTW14} presented a mixed integer linear programming formulation and a heuristic algorithm for coverage of road networks using Dubins curves with Euclidean distances as costs.
The nearest insertion heuristic method, originally designed for the TSP, is used to find a sequence of edges to be visited while incorporating Dubins curves.
The sequence is then split across a robot team using an auction algorithm.
Easton and Burdick~\cite{EastonB05} proposed a constructive heuristic algorithm for the RPP with $k$ vehicles for coverage of 2D object boundaries.
The algorithm first groups the required edges into $k$ clusters and computes a representative edge for each cluster.
Additional edges are added to each cluster to ensure connectivity.
Tours are computed for each cluster independently using the polynomial-time CPP algorithm proposed by Edmonds and Johnson~\cite{EdmondsJ73}.
Williams and Burdick~\cite{WilliamsB06} developed algorithms for boundary inspection while considering revisions to the path plan for the robots to account for the changes in the environment and in the robot team sizes.
Xu and Stentz~\cite{XuS10} use CPP and RPP formulations for line coverage and consider the case where the prior map information may be incomplete.
They propose heuristic algorithms that can regenerate solutions rapidly when new map information is incorporated with the prior map.
They extended this work to multiple robots \cite{XuS11ICRA} using $k$-means clustering to decompose networks into smaller components, similar to the algorithm presented by Easton and Burdick~\cite{EastonB05}.
Campbell et al.~\cite{CampbellCPS18} presented an application of ARPs to generate route for a single UAV, where they allow the UAV to service a required edge in parts.
The UAV may service a part of a required edge, move to some other edge, and come back later to service the remaining parts of the required edge.
They convert the problem into standard ARPs by discretizing each required edge.
The costs are considered to be Euclidean distances.
Our algorithms are directly applicable to this discretized version of the problem.

These papers illustrate various applications of the line coverage problem.
However, they do not consider asymmetric edge costs or distinct service and deadhead costs.
Moreover, the heuristic algorithms do not provide theoretical guarantees on the quality of the solutions.
The formulation and the algorithms presented in this paper address these shortcomings of the prior work.

{\bf Arc Routing Problems in Area Coverage:}
Arc routing problems have been used in robotics primarily as a subroutine in area coverage problems to generate efficient routes for a robot.
Arkin et al.~\cite{ArkinFM00} use an algorithm similar to the one given by Edmonds and Johnson~\cite{EdmondsJ73} for the CPP to find routes for a robot for the {\em milling problem}, a variant of the area coverage problem wherein the tool is constrained within the workspace.
Mannadiar and Rekleitis~\cite{MannadiarR10} formulate the area to be covered in terms of edges in a Reeb graph.
Optimal solutions to the CPP were used to compute an Euler tour for coverage of available free space while minimizing the path length.
Karapetyan et al.~\cite{KarapetyanBMTR17} use the CPP formulation for $k$ robots to find routes for multiple robots on a Reeb graph.
They used the CPP to compute a large Euler tour and then break it into smaller tours using the algorithm given by Frederickson et al.~\cite{FredericksonHK76}.
Our algorithms for the single robot line coverage problem are directly applicable to the above techniques to generate routes for the area coverage problem.

\section{The Single Robot Line Coverage Problem}
\label{sc:problemDef}
We now model the single robot line coverage problem as an optimization problem on a graph.
We are given a connected undirected graph $G=(V, E, E_r)$, where $V$ is the set of vertices, $E$ is the set of edges, and $E_r\subseteq E$ is the set of required edges.
The set $E$ can contain parallel edges between two vertices, i.e., we allow $G$ to be a multigraph.
The service and deadhead costs are given as inputs along with the graph.
The {\em single robot line coverage problem} is to find a coverage tour that minimizes the total cost of travel on the graph, such that each of the required edges in $E_r$ is serviced exactly once.

For each edge $e$ in $E$, we associate two directional arcs $a_e$ and $\bar a_e$ that are opposite in direction to one another.
If a robot {\em services} a required edge $e \in E_r$ in the direction $a_e$, then a service cost $\scost{a_e}$ is incurred; similarly for the direction $\bar a_e$.
If a robot traverses an edge without servicing it, the robot is said to be {\em deadheading}; for example, this occurs when a robot is traveling from a vertex of an edge to that of another edge using a non-required edge.
Both required and non-required edges may be deadheaded.
Deadhead costs for an edge $e$ are denoted by $\dcost{a_e}$ and $\dcost{\bar a_e}$.
We use $\scost{A}$ and $\dcost{A}$ to denote the corresponding sums of the service and deadhead costs for a set of arcs $A$.
We denote by $\bar A$ the set of arcs oppositely directed to the arcs in $A$.

We consider the edge costs, for both servicing and deadheading, to be direction dependent, i.e., the graph is asymmetric.
For example, $\scost{a_e}$  may differ from $\scost{\bar a_e}$.
The service and deadhead costs can be arbitrary positive numbers, with the constraint that the service cost of an edge is no less than the deadhead cost in the same direction.
The costs, such as travel time, appear in the objective function of the problem.
Since we allow edge costs to be asymmetric, the model allows both directed and mixed graphs.
This can be achieved by setting the cost of the arcs that the robot is not allowed to travel to infinity or a very large constant.
Additionally, we need to ensure that the graph is strongly connected so that there is a feasible solution.
We also allow multiple copies of the edges and can model repeated servicing of segments.

\subsection{Preliminaries}
\label{sec:prelims}
Let $G=(V, E, E_r)$ be a connected undirected graph for the line coverage problem, such that $E_r\subseteq E$.
The subgraph $G_r=(V_r, E_r)$ induced by the set of required edges $E_r$ is called the {\em required graph} of $G$; $V_r\subseteq V$ is the set of vertices that have at least one edge in $E_r$ incident on them.
The set of non-required edges is denoted by $E_n=E\setminus E_r$.
We define the set of all arcs to be $\mathcal A = \bigcup \{a_e, \bar a_e\},\; \forall e\in E$.
Similarly, $\mathcal A_r$ is defined for the set of required edges.
If an arc~$a$ represents the travel direction from vertex $u$ to vertex $v$, then the vertices $u$ and $v$ are called the tail $t(a)$ and  head $h(a)$ of $a$, respectively.
We denote by $H(\mathcal A, v)$ all the arcs $a\in \mathcal A$ that have $v$ as the head.
Similarly, $T(\mathcal A, v)$ is defined for the tail.
The {\em degree} of a vertex $v\in V$ is the number of edges incident on $v$.
A {\em walk} in a graph $G$ is a non-empty alternating sequence $v_1e_1v_2e_2\ldots e_{k}v_{k+1}$ of vertices in $V$ and edges in $E$ such that $v_i$ and $v_{i+1}$ are the end vertices of an edge $e_i$ for all $1 \leq i \leq k$.
A {\em closed walk} is a walk with the same start and end vertex, i.e., $v_1=v_{k+1}$.
An {\em Euler tour} is a closed walk such that every edge in the graph is traversed exactly once.
A graph that has an Euler tour is called {\em Eulerian}.
It is well established that an undirected graph is Eulerian if and only if every vertex has an even degree, see, e.g., \cite{PapadimitriouSBook}.

Let $D=(V, A)$ be a {\em directed graph} (digraph) with $V$ as the set of vertices and $A$ as the set of (directed) arcs.
The digraph is strongly connected if there exists a directed path from any vertex in~$V$ to any other vertex in~$V$.
Analogous to the undirected graph, $T(A, v)$ and $H(A, v)$ are defined for the arc set $A$ and a vertex $v\in V$.
The {\em indegree} of a vertex $v\in V$, denoted by $\indeg(v)$, is the number of arcs entering the vertex~$v$.
Similarly, the {\em outdegree} of a vertex $v \in V$, denoted by $\outdeg(v)$, is the number of arcs going out of the vertex~$v$.
A digraph is Eulerian if and only if the graph is strongly connected and {\em balanced}, i.e., $\indeg(v) = \outdeg(v), \forall v \in V$.
{\em Imbalance} $\mc I(A, v)$ for the arc set $A$ at a vertex $v$ is given by $\outdeg(v) - \indeg(v) = |T(A, v)| - |H(A, v)|$.
Analogous to the undirected graph, a {\em diwalk} is a sequence $v_1a_1v_2a_2\ldots a_{k}v_{k+1}$ of vertices and arcs in a digraph $D=(V, A)$ such that the tail of $a_i$ is $v_i$ and head of $a_i$ is $v_{i+1}$.
A {\em closed diwalk} is a walk with the same start and end vertices.
An {\em Eulerian tour} on an Eulerian digraph is a closed diwalk such that each arc is traversed exactly once.
An Euler tour can be constructed from an Eulerian graph (or digraph) in $\mathcal O(|A|)$ computational time, see, e.g., \cite{PapadimitriouSBook}.

\begin{definition}
	Coverage Tour:\\
	Given a connected graph $G=(V, E, E_r)$, a {\em coverage tour} is a closed walk in the graph~$G$ such that each required edge $e\in E_r$ is serviced exactly once.
\end{definition}
Note that in a coverage  tour, a required or a non-required edge may be used multiple times for deadheading.
We define the following variables:
\begin{equation}
	\begin{aligned}
	&\sv{a_e}, \sv{\bar a_e} \in \{0, 1\}, \text{ and } \sv{a_e} + \sv{\bar a_e} = 1\ \quad\forall e \in E_r\\
	&\dd{a_e}, \dd{\bar a_e} \in \mathbb N\cup \{0\}\ \quad\forall e\in E
	\end{aligned}
\end{equation}
The variables $\sv{a_e}$ and $\sv{\bar a_e}$ represent the two opposite directions of servicing the edge $e$; exactly one of the two can be equal to $1$ for a valid coverage tour.
The variables $\dd{a_e}$ and $\dd{\bar a_e}$ represent the number of times an edge is deadheaded in the corresponding direction.
The cost of a coverage tour~$\tau$ is to be minimized and is given by:
\begin{equation}
	c(\tau) = \sum_{e\in E_r}\big[\scost{a_e} \sv{a_e}+ \scost{\bar a_e}\sv{\bar a_e}\big] 
	+ \sum_{e\in E}\big[\dcost{a_e} \dd{a_e}+ \dcost{\bar a_e}\dd{\bar a_e}\big]  \label{eqn:costCoverageTour}
\end{equation}
For a valid coverage tour $\tau$, we can create an Eulerian digraph $D_{\tau}=(V, A_{\tau})$ from these variables by adding each arc as many times as the value of its corresponding variable.
The digraph will have the same total cost as the coverage tour, i.e., $c(\tau) = c(A_{\tau})$.
A closed diwalk can be obtained from $D_{\tau}$ in $\mathcal O(|A_{\tau}|)$ time.
We will often use this equivalent Eulerian digraph representation of a coverage tour in the rest of the paper.

The following subsection presents an integer linear programming (ILP) formulation for the single robot line coverage problem.
The ILP formulation allows us to formally define the problem in the form of an objective and a set of constraints, and provides an optimal solution.
In subsequent sections, we provide a relaxation of the ILP formulation with continuous decision variables and relate the relaxation to a network flow model.
The network flow model forms the basis of our approximation algorithms in Section~\ref{sc:approx_algos}.

\subsection{Integer Linear Programming (ILP) Formulation}
The standard ILP formulations for arc routing problems involve an exponential number of constraints for ensuring connectivity \citep{CorberanL14} and are usually difficult to incorporate into standard ILP solvers.
Therefore, we adopt the formulation presented by Gouveia et al.~\cite{GouveiaMP10} as later specialized by Agarwal and Akella~\cite{AgarwalA20ICRA}.

The primary decision variables of the formulation comprise the service variables $\sv{a_e}$ and $\sv{\bar a_e}$ and the deadheading variables $\dd{a_e}$ and $\dd{\bar a_e}$, as described in \scref{sec:prelims}.
The formulation needs an additional set of variables $z_a,\ \forall a\in \mc A$.
These variables help in ensuring that the solution digraph computed by the ILP formulation is strongly connected, and therefore provides the corresponding coverage tour.
The variables~$z_a$ can be interpreted as generalized flows across the arcs and are illustrated further in Theorem~\ref{thm:ilp}.
Additionally, we select an arbitrary vertex $v_0\in V_r$, i.e., a vertex corresponding to one of the ends of a required edge and will be traversed in a coverage tour.
This vertex acts as a source node for the generalized flow and is critical for proving the connectivity of the solution digraph.\\

\noindent {\bf SRLC-ILP} (single robot line coverage ILP formulation)\\
\noindent Minimize:\\
\begin{equation}
	c(\tau) = \sum_{e\in E_r}\big[\scost{a_e} \sv{a_e}+ \scost{\bar a_e}\sv{\bar a_e}\big] 
	+ \sum_{e\in E}\big[\dcost{a_e} \dd{a_e}+ \dcost{\bar a_e}\dd{\bar a_e}\big]  \label{eqn:srlc:obj}
\end{equation}
subject to:
\begin{align}
	& \sumSr{a_1 \in H(\mc A_r, v)}\sv{a_1}
	\,+
	\sumSr{b_1\in H(\mc A, v)}\dd{b_1}
	\, -
	\sumSr{a_2 \in T(\mc A_r, v)}\sv{a_2}
	\,-
	\sumSr{b_2\in T(\mc A, v)} \dd{b_2} =0,\quad\forall v \in V\label{eqn:srlc:symmetry}\\
	&\sv{a_e} + \sv{\bar a_e} = 1, \quad\forall e \in E_r\label{eqn:svlc:service}\\
	&\sumSr{a\in H(\mc A, v)}z_a \,- \sumSr{a\in T(\mc A, v)} z_a \,= \sumSr{a\in H(\mc A_r, v)}s_a, \quad \forall v \in V\setminus \{v_0\}\label{eqn:srlc:edgeFlow}\\
	&\sumSr{a\in T(\mc A, v_0)} z_a \,= \sumS{a\in \mc A_r}s_a \,=| E_r|\label{eqn:srlc:flowDepot}\\
	&z_a \leq |E_r|(d_a+s_a), \quad \forall a\in \mc A_r\label{eqn:srlc:flowLimit1}\\
	&z_a \leq |E_r|d_a, \quad \forall a\in \mc A\setminus \mc A_r\label{eqn:srlc:flowLimit}\\
	&\sv{a_e}, \sv{\bar a_e} \in \{0, 1\},\quad \forall e \in E_r\label{eqn:srlc:trivialEr}\\
	& \dd{a_e}, \dd{\bar a_e}  \in \mathbb N\cup \{0\},\quad \forall e\in E\label{eqn:srlc:trivialE}\\
	& z_{a_e}, z_{\bar a_e}  \in \mathbb N\cup \{0\},\quad \forall e\in E\label{eqn:srlc:trivialF}
\end{align}

The objective function~\eqref{eqn:srlc:obj} minimizes the cost of a coverage tour.
The  {\em balance} (or symmetry) constraints~\eqref{eqn:srlc:symmetry} state that for
each vertex, the number of arc traversals into a vertex should be equal to the
number of arc traversals out of the vertex.
The {\em servicing} constraints \eqref{eqn:svlc:service} ensure that each required edge is serviced exactly once
and in only one direction.
Constraints~\eqref{eqn:srlc:edgeFlow}--\eqref{eqn:srlc:flowLimit} are {\em connectivity} constraints and are a type of generalized flow constraints.
The vertex~$v_0$ is a source to a flow equal to the number of required edges $|E_r|$ as given by constraints~\eqref{eqn:srlc:flowDepot}.
Constraints~\eqref{eqn:srlc:edgeFlow} state that a flow of one unit is absorbed each time a required edge is serviced.
Building on the analysis by Gouveia et al.~\cite{GouveiaMP10}, we show that these constraints ensure that the solution digraph is connected in Lemma~\ref{lem:connectivity}.
The balance and the connectivity constraints together ensure that the resulting solution digraph has an Euler tour.
The {\em integrality} constraints \eqref{eqn:srlc:trivialEr}--\eqref{eqn:srlc:trivialF} ensure that the decision variables are integers.
In the case of a mixed or a directed graph, where edges are not allowed to be traversed in one of the directions, we have an infinite\footnote{The \href{doi.org/10.1109/IEEESTD.2019.8766229}{IEEE Standard for Floating-Point Arithmetic (IEEE 754)} supports infinite values.
General MILP solvers also provide the functionality to specify infinity.} or a very large constant as the cost.
If the solution digraph generated by the ILP formulation has such a high cost, the solution is treated as infeasible, and there is no corresponding coverage tour.
An input graph with its optimal tour is shown in \fgref{fig:input}.

\begin{figure}[htbp]
	\centering
		\begin{tikzpicture}[scale=0.5]
		\centering
		\small
		\clip (-6,-8) rectangle (19.8, 5.5);   
		\node[place] (1) at (0,-5)				{};
		\node[place] (2) at (1.67,-2.5)		{};
		\node[place] (3) at (-1.67,-2.5)	{};
		\node[place] (4) at (4.33, 2.5)		{};
		\node[place] (5) at (1.33, 2.69)	{};
		\node[place] (6) at (3,-0.19)			{};
		\node[place] (7) at (-4.33,2.5)		{};
		\node[place] (8) at (-3,-0.19)		{};
		\node[place] (9) at (-1.33,2.69)	{};

		\tikzstyle{scEnd}=[near end, fill=none, mDarkRed]
		\tikzstyle{scSt}=[near start, fill=none, black]
		\tikzstyle{scMid}=[midway, fill=none, black]

		\draw[rounded corners] (-6,-5.2) rectangle (-4,-2.8);
		\node[place,inner sep=0.3mm] (11) at (-5,-5) {};
		\node[place,inner sep=0.3mm] (12) at (-5,-3) {};
		\path[Arc, mDarkRed, thin] (12)  edge [bend right=20]  node[left]  {\small{$4$}} (11);
		\path[Arc, black, thin] (11)  edge [bend right=20]  node[right]  {\small{$8$}} (12);

		\draw[Edge, mBlue] (1) -- (2) node [scEnd, right] {$4$} node [scSt, right] {$8$};
		\draw[Edge, mBlue] (2) -- (3) node [scEnd, above] {$4$} node [scSt, above] {$8$};
		\draw[Edge, mBlue] (3) -- (1) node [scEnd, left]	 {$4$} node [scSt, left]  {$8$};

		\draw[Edge, mBlue] (4) -- (5) node [scEnd, above]  {$4$} node [scSt, above]  {$8$};
		\draw[Edge, mBlue] (5) -- (6) node [scEnd, left] {$4$} node [scSt, left] {$8$};
		\draw[Edge, mBlue] (6) -- (4) node [scEnd, right] {$4$} node [scSt, right] {$8$};

		\draw[Edge, mBlue] (7) -- (8) node [scEnd, left] {$4$} node [scSt, left] {$8$};
		\draw[Edge, mBlue] (8) -- (9) node [scEnd, right]  {$4$} node [scSt, right]  {$8$};
		\draw[Edge, mBlue] (9) -- (7) node [scEnd, above] {$4$} node [scSt, above] {$8$};

		\draw[Edge, mBlue] (5) -- (9) node [scEnd, above] {$2$} node [scSt, above] {$4$};
		\draw[Edge, mGreen, dashed] (2) -- (6);
		\draw[Edge, mGreen, dashed] (3) -- (8);
		\draw[Edge, mGreen, dashed] (1) to [bend right=60](4);
		\draw[Edge, mGreen, dashed] (4) to [bend right=60](7);
		\draw[Edge, mGreen, dashed] (7) to [bend right=60](1);

		\coordinate (off) at (14, 0);
		\foreach \x in {1,...,9} {
			\node[place] (\x) at ( $ (\x) +  (off) $)			{};
		}
	
		\draw[mid2arc, mBlue] (1) -- (2) node [scMid, right, mBlue] {$4$};
		\draw[mid2arc, mBlue] (2) -- (3) node [scMid, above, mBlue] {$4$};
		\draw[mid2arc, mBlue] (3) -- (1) node [scMid, left, mBlue] {$4$};

		\draw[mid2arc, mBlue] (4) -- (5) node [scMid, above, mBlue] {$4$};
		\draw[mid2arc, mBlue] (5) -- (6) node [scMid, left, mBlue] {$4$};
		\draw[mid2arc, mBlue] (6) -- (4) node [scMid, right, mBlue] {$4$};

		\draw[mid2arc, mBlue] (7) -- (8) node [scMid, left, mBlue] {$4$};
		\draw[mid2arc, mBlue] (8) -- (9) node [scMid, right, mBlue] {$4$};
		\draw[mid2arc, mBlue] (9) -- (7) node [scMid, above, mBlue] {$4$};

		\path[mBlue] (5)  edge [mid2arc, bend right]  node[scMid, above, mBlue]  {$2$} (9);
		\path[mGreen] (9)  edge [mid2arc, dashed, bend right]  node[scMid, below, mGreen]  {$2$} (5);
		\path[mGreen] (1)  edge [mid2arc, dashed, bend left=40]  node[scMid, right, mGreen]  {$1$} (7);
		\path[mGreen] (7)  edge [mid2arc, dashed, bend right=60]  node[scMid, left, mGreen]  {$1$} (1);
		\node[fill=none] at (0,-7) {\small (a) Input graph $G=(V, E, E_r)$};
		\node[fill=none, align=center] at (14.00,-7) {\small(b) Optimal coverage tour\\represented as an Eulerian digraph};
	\end{tikzpicture}
	\caption[Example optimal coverage tour]{In the input graph (a), the solid blue lines and the dashed green lines represent required and non-required edges, respectively.
		The service costs in the two directions are shown in the input graph---the costs are closer to the head of the corresponding arc.
		Deadhead costs for the required edges are half the service costs in the respective directions.
		Non-required edges are set to have a unit cost in both directions.
		Deadheading costs are not shown.
		In the optimal coverage tour (b), the servicing arcs are shown as solid blue arcs, while the deadheadings are shown as green dashed arcs.
		The numbers in Figure (b) indicate the costs of the arcs in the final solution.
	The cost of the coverage tour is 42.\label{fig:input}}
\end{figure}
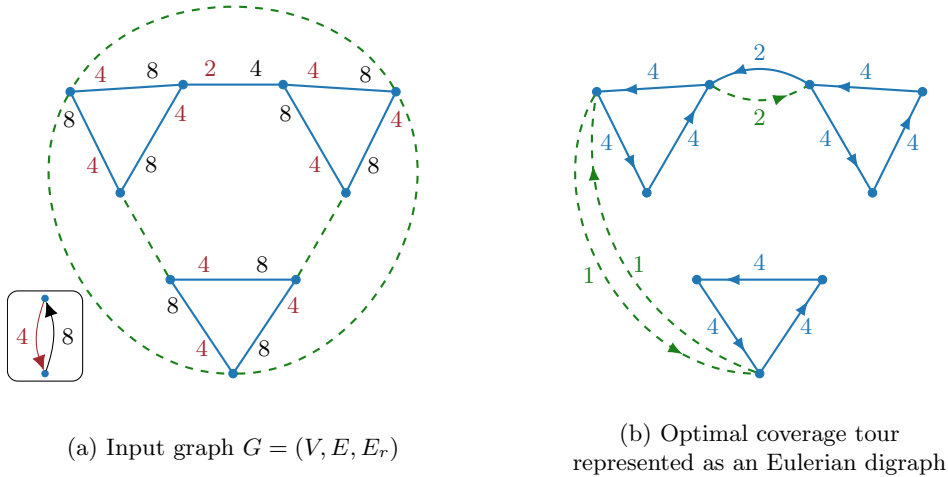

We now prove the correctness of the ILP formulation by showing that the formulation gives an optimal coverage tour (Theorem~\ref{thm:ilp}).
There are two components to the proof:
(1)~The solutions obtained from the SRLC-ILP formulation are Eulerian digraphs that correspond to feasible coverage tours (Lemma~\ref{lem:connectivity}), and
(2)~Any feasible coverage tour has an equivalent feasible solution to the ILP (part of Theorem~\ref{thm:ilp}).
Using the above two statements and the fact that the ILP formulation is an optimization problem with the cost of the coverage tour as the objective function, it follows that the optimal solution to the formulation has an equivalent optimal coverage tour.


\begin{lemma}
	\label{lem:connectivity}
	Given a connected graph $G=(V, E,E_r)$, a solution to the SRLC-ILP formulation has an equivalent Eulerian digraph.
	\label{lem:connected}
\end{lemma}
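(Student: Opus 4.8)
The plan is to form the digraph $D_\tau=(V,A_\tau)$ from the solution exactly as described just before the lemma—inserting each serviced arc $a$ with $\sv{a}=1$ and each arc $a$ with multiplicity $\dd{a}$—and then to verify the two conditions of the characterization stated in the preliminaries: a digraph is Eulerian iff it is balanced and strongly connected. Balance comes essentially for free. At every vertex $v$, the quantity $\indeg(v)-\outdeg(v)$ computed in $D_\tau$ is literally the left-hand side of the symmetry constraint \eqref{eqn:srlc:symmetry}, which is forced to zero; hence $\indeg(v)=\outdeg(v)$ everywhere. So the substance of the lemma, and the place I expect the main difficulty, is strong connectivity, which is precisely what the generalized-flow constraints \eqref{eqn:srlc:edgeFlow}--\eqref{eqn:srlc:flowLimit} are designed to supply.

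The connectivity argument uses the $z_a$ variables to certify reachability from $v_0$. First I would record that the capacity constraints \eqref{eqn:srlc:flowLimit1} and \eqref{eqn:srlc:flowLimit} force $z_a>0$ to imply $\sv{a}+\dd{a}>0$ (respectively $\dd{a}>0$ for a non-required arc); that is, every arc carrying positive flow actually occurs in $D_\tau$. Next I would read \eqref{eqn:srlc:edgeFlow} as a conservation law with demands: by \eqref{eqn:srlc:flowDepot} the vertex $v_0$ emits $|E_r|$ units, while every other vertex $v$ absorbs exactly $\sum_{a\in H(\mc A_r,v)}\sv{a}$ units, a nonnegative quantity that is strictly positive precisely when $v$ is the head of some serviced required arc.

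The heart of the proof is then a cut argument. Let $R$ be the set of vertices reachable from $v_0$ along arcs of $D_\tau$, and suppose for contradiction that some vertex $w$ which is the head of a serviced required arc lies in $\bar R=V\setminus R$. Summing \eqref{eqn:srlc:edgeFlow} over all $v\in\bar R$ (note $v_0\notin\bar R$), the flow on arcs internal to $\bar R$ cancels, so the left side equals the flow entering $\bar R$ minus the flow leaving it, while the right side equals the total demand $\sum_{v\in\bar R}\sum_{a\in H(\mc A_r,v)}\sv{a}\ge 1>0$. But by definition of $R$ no arc of $D_\tau$ runs from $R$ into $\bar R$, so no flow-carrying arc crosses into $\bar R$ and the net flow into $\bar R$ is at most $0$, a contradiction. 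Hence every head of a serviced required arc belongs to $R$. Since each required edge is serviced in exactly one direction by \eqref{eqn:svlc:service}, one endpoint of every required edge lies in $R$ and its serviced arc ties the other endpoint to it, so all vertices incident to required edges share a single component with $v_0$. Because $D_\tau$ is balanced, each of its weakly connected components is strongly connected, hence Eulerian by the characterization above, so the component containing $v_0$ is an Eulerian digraph that services every required edge.

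The main obstacle is making this cut/flow step airtight: one must use the capacity constraints to confine flow to arcs present in $D_\tau$ and then get the sign contradiction from summing conservation over $\bar R$. A secondary subtlety I would address is that a feasible solution could a priori contain spurious deadhead cycles disconnected from $v_0$; these satisfy balance locally and carry zero flow, so the reachability argument does not touch them. I would note that such components can be deleted without violating any constraint (removing a balanced, zero-flow cycle preserves balance and the flow system), which only lowers the cost and leaves an Eulerian digraph equivalent as a coverage tour.
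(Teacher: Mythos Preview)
Your proposal is correct and follows essentially the same approach as the paper: both build the digraph from the $s$ and $d$ variables, read balance off the symmetry constraint~\eqref{eqn:srlc:symmetry}, and then run a cut argument using the flow conservation~\eqref{eqn:srlc:edgeFlow} together with the capacity constraints~\eqref{eqn:srlc:flowLimit1}--\eqref{eqn:srlc:flowLimit} to force connectivity through $v_0$, finally discarding any deadhead-only components. The only cosmetic difference is that you take the cut as the complement of the forward-reachable set from $v_0$ and sum over $\bar R$, whereas the paper takes a connected component $S$ not containing $v_0$ and sums over $S$; both yield the same sign contradiction.
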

\begin{proof}
	A digraph is Eulerian if (a)~each vertex in the digraph is balanced, and (b)~the digraph is strongly connected.
	Given a solution to the SRLC-ILP formulation, we create a digraph $D_{\epsilon} = (V, A_{\epsilon})$ with the same vertex set as the input graph and an arc set $A_{\epsilon}$ with $d_a + s_a$ copies of each arc $a$ in $\mc A_r$ and $d_a$ copies for each arc in $\mc A\setminus \mc A_r$.
	Note that the indegree equals the outdegree for each vertex in digraph $D_{\epsilon}$ because of balance constraints~\eqref{eqn:srlc:symmetry}.

	It remains to show that the digraph $D_{\epsilon}$ is strongly connected.
	In particular, we will show that any arc $a\in A_{\epsilon}$ with $s_a > 0$ and/or $d_a>0$ is connected to an arbitrary vertex $v_0\in V_r$, where $V_r$ is the vertex set corresponding to the required graph.
	The vertex~$v_0$ must be traversed by a feasible solution of the SRLC-ILP formulation as the vertex~$v_0$ is connected to at least one of the required edges in~$E_r$.

	For our proof by contradiction, assume that $D_{\epsilon}$ is not connected, i.e., it has more than one connected component.
	Consider one such connected component such that it is not connected to the vertex $v_0$.
	Since the solution digraph is balanced, we can form an Eulerian diwalk that traverses {\em all} arcs in the selected connected component.
	We will assume that this Eulerian diwalk contains at least one arc that is being serviced, for otherwise, it is a diwalk of only deadheading arcs and can be eliminated without an increase in cost.
	Let $S\subset V$ be the set of vertices corresponding to this Eulerian diwalk such that $v_0\notin S$.
	Define $\bar S = V\setminus S$.
	Note that $v_0\in \bar S$.

	Summing the constraints~\eqref{eqn:srlc:edgeFlow} over all the vertices in $S$ gives the following equation:
	\begin{equation}
		\sumSr{v\in S}\left(\sumSr{a\in H(\mc A, v)}z_a \,- \sumSr{a\in T(\mc A, v)} z_a\right) \,= \sumSr{v\in S}\left(\sumSr{a\in H(\mc A_r, v)}s_a\right)\label{eqn:lemmaConnected:sum}
	\end{equation}
	For the purposes of this proof, define the following for any pair of sets $F, G \subseteq V$:
	\begin{equation}
		\begin{split}
			\delta(F, G)&= \{a\in A_{\epsilon} \mid t(a) \in F, h(a)\in G\}\\
			R(F, G) &= \sumSr{a\in \delta(F, G)}s_a, \quad N(F, G) = \sumSr{a\in \delta(F, G)}d_a\\
			Z(F, G) &= \sumSr{a\in \delta(F, G)}z_a\\
		\end{split}
	\end{equation}
	Then \eqref{eqn:lemmaConnected:sum} can be written as:
	\begin{equation}
		Z(\bar S, S) - Z(S, \bar S) = R(S, S) + R(\bar S, S)
	\end{equation}
	Note that $R(S, S) > 0$ and $R(\bar S, S) = 0$ from our assumption for contradiction.
	The term $Z(S, \bar S)$ is non-negative.
	This implies that $Z(\bar S, S)$ is strictly positive.
	Summing the constraints~\eqref{eqn:srlc:flowLimit1} and \eqref{eqn:srlc:flowLimit} over all arcs $a\in \delta(\bar S, S)$ gives:
	\begin{equation}
		Z(\bar S, S) \leq |E_r| \left(R(\bar S, S) + N(\bar S, S)\right)
	\end{equation}
	Since $Z(\bar S, S) > 0$ and $R(\bar S, S) = 0$, it must be that $N(\bar S, S)$ is strictly positive.
	Thus, there is at least one arc with its tail in $\bar S$ and its head in $S$ that is being deadheaded.
	There must also be a deadheading arc with its tail in $S$ and its head in $\bar S$ because of the balance constraints~\eqref{eqn:srlc:symmetry}.
	Hence, $S$ and $\bar S$ are strongly connected, leading to a contradiction.

	Thus, the digraph $D_{\epsilon}=(V, A_{\epsilon})$ is balanced and strongly connected, i.e., the digraph is Eulerian.
	A coverage tour can be obtained from the Eulerian digraph by computing an Eulerian diwalk in $\mc O(|A_{\epsilon}|)$ computation time. 
	The cost of an Eulerian diwalk, and the corresponding coverage tour, on the digraph $D_{\epsilon}$ is $c(A_{\epsilon})=c(\tau)$, where $c(\tau)$ is the value of the objective function for a solution to the SRLC-ILP formulation.
	A solution to the SRLC-ILP formulation has a corresponding feasible coverage tour.
\end{proof}
\begin{theorem}
	Given a connected graph $G=(V, E,E_r)$, the SRLC-ILP formulation gives an optimal coverage tour.
	\label{thm:ilp}
\end{theorem}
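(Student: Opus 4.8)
The plan is to prove the theorem by establishing a cost-preserving correspondence, in both directions, between feasible SRLC-ILP solutions and feasible coverage tours, and then comparing optimal values. One direction is already in hand: Lemma~\ref{lem:connectivity} shows that every feasible SRLC-ILP solution yields an equivalent Eulerian digraph $D_{\tau}$, and hence a feasible coverage tour of equal cost $c(A_{\tau}) = c(\tau)$. Consequently the minimum coverage-tour cost is at most the minimum SRLC-ILP objective. The remaining work is the reverse embedding: given an \emph{arbitrary} feasible coverage tour $\tau$, I would produce a feasible SRLC-ILP solution whose objective equals $c(\tau)$.

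For the reverse direction, I would read off the service and deadhead variables directly from the tour's Eulerian digraph representation $D_{\tau} = (V, A_{\tau})$. Set $\sv{a_e} = 1$ exactly when $\tau$ services $e$ in the direction $a_e$ (and $0$ otherwise), and set $\dd{a_e}$ equal to the number of times $\tau$ deadheads $e$ in the direction $a_e$. By the definition of a coverage tour each required edge is serviced exactly once and in one direction, so the servicing constraints~\eqref{eqn:svlc:service} and the integrality constraints~\eqref{eqn:srlc:trivialEr}--\eqref{eqn:srlc:trivialE} hold by construction. Because $D_{\tau}$ is Eulerian every vertex is balanced, which is precisely the balance constraints~\eqref{eqn:srlc:symmetry}, and the objective~\eqref{eqn:srlc:obj} then equals $c(A_{\tau}) = c(\tau)$ term by term.

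The only nontrivial step is constructing the generalized-flow variables $z_a$ so as to satisfy the connectivity constraints~\eqref{eqn:srlc:edgeFlow}--\eqref{eqn:srlc:flowLimit}. I would exploit the strong connectivity of $D_{\tau}$ (guaranteed since it is Eulerian) to route a single unit of flow from $v_0$ to the head of each serviced required edge: for a serviced edge whose head is distinct from $v_0$ I route along a simple directed path from $v_0$ in $D_{\tau}$, and for a serviced edge whose head is $v_0$ itself I route along a simple directed cycle through $v_0$. Defining $z_a$ as the number of these $|E_r|$ routed units that traverse arc $a$ gives nonnegative integers, so~\eqref{eqn:srlc:trivialF} holds. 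Each unit contributes exactly one to the outflow of $v_0$, so the total outflow from $v_0$ is $|E_r|$, matching~\eqref{eqn:srlc:flowDepot}; and at every other vertex $v$ the net inflow equals the number of routed units terminating there, i.e.\ the number of serviced required arcs with head $v$, yielding~\eqref{eqn:srlc:edgeFlow}. Since only $|E_r|$ units are routed and they traverse only arcs present in $D_{\tau}$ (those with $\dd{a} + \sv{a} \ge 1$, respectively $\dd{a} \ge 1$), each $z_a$ is bounded by $|E_r|$ while unused arcs carry no flow, so the capacity constraints~\eqref{eqn:srlc:flowLimit1}--\eqref{eqn:srlc:flowLimit} are met.

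With both embeddings cost-preserving, the minimum SRLC-ILP objective equals the minimum coverage-tour cost; applying Lemma~\ref{lem:connectivity} to an optimal SRLC-ILP solution then recovers a coverage tour of that common value, which is therefore optimal. I expect the flow construction to be the main obstacle---specifically, handling the serviced edges directed \emph{into} $v_0$ so that the strict outflow equality~\eqref{eqn:srlc:flowDepot} is satisfied (routing them along cycles rather than trivial paths), and confirming via global flow conservation that the net-flow bookkeeping at $v_0$ and at the remaining vertices is mutually consistent.
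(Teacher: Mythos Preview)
Your proposal is correct and follows the same overall architecture as the paper's proof: use Lemma~\ref{lem:connectivity} for the SRLC-ILP $\to$ tour direction, and for the reverse direction read off $\sv{a},\dd{a}$ from the tour's Eulerian digraph $D_\tau$, then manufacture the auxiliary variables $z_a$ to satisfy the connectivity constraints. The balance, servicing, integrality, and objective-matching arguments are identical in both.

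The one substantive difference is in how the $z_a$ are built. The paper traverses the closed diwalk in order, starting from $v_0$, initializing $z_{a_1}=|E_r|$, and propagating the value along the walk while subtracting one at each vertex for every serviced arc entering it; multiple visits to $v_0$ are then patched by a per-loop subtraction and a final rebalancing so that~\eqref{eqn:srlc:flowDepot} holds. Your construction instead treats~\eqref{eqn:srlc:edgeFlow}--\eqref{eqn:srlc:flowDepot} directly as a single-source flow problem and routes $|E_r|$ units from $v_0$ to the heads of serviced arcs along simple paths (cycles when the head is $v_0$) inside $D_\tau$; strong connectivity of $D_\tau$ guarantees these paths exist, simplicity ensures each unit contributes exactly one to the outflow at $v_0$, and the capacity bounds~\eqref{eqn:srlc:flowLimit1}--\eqref{eqn:srlc:flowLimit} follow because at most $|E_r|$ units traverse any arc and only arcs present in $D_\tau$ carry flow. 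Your argument is more conceptual and avoids the loop-adjustment bookkeeping; the paper's is more algorithmic and tied to the specific walk ordering. Both are valid, and your verification of the $v_0$-inflow consistency (via the cycle routing for serviced arcs headed into $v_0$) correctly anticipates the only subtle point.
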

\begin{proof}
	We first prove that any feasible coverage tour~$\tau$ has a corresponding feasible solution for the SRLC-ILP formulation with the same cost.
	In other words, the feasible solution space of the SRLC-ILP formulation includes all the feasible coverage tours.
	Represent the given coverage tour as a closed diwalk $v_1a_1\ldots a_{i-1}v_ia_i \ldots v_ka_kv_1$, along with information on whether an arc in the diwalk is serviced or deadheaded.
	For each variable $s_a, \forall a\in \mc A_r$ assign its value according to the direction in which the edge is serviced, and for each variable $d_a, \forall a \in \mc A$ assign its value equal to the number of times the arc is deadheaded.
	Also, let $D_{\epsilon}=(V, A_{\epsilon})$ be the corresponding digraph.
	Note that $c(\tau) = c(A_{\epsilon})$.
	Since the indegree of each vertex equals its outdegree in a connected closed diwalk, constraints~\eqref{eqn:srlc:symmetry} are satisfied.
	The diwalk must service each edge exactly once, satisfying~\eqref{eqn:svlc:service}.

	Without loss of generality, assume that the first arc $a_1$ in the diwalk corresponds to a required edge; we can always perform a cyclic shift of the sequence of the arcs in the diwalk to obtain an equivalent diwalk that satisfies the assumption.
	Set $v_0 = v_1$, i.e., the first vertex of the diwalk.
	Assume that the coverage tour visits $v_0$ only once; we will generalize this later.
	This implies that there are exactly two arcs in $A_{\epsilon}$ that are connected to $v_0$, one leaving and one entering.
	Set $z_a=0$ for each arc in $\mc A$. 
	Set $z_{a_1} = |E_r|$, satisfying constraint~\eqref{eqn:srlc:flowDepot}.
	Now traverse the edges in the sequence and direction given by the diwalk.
	Following the notation for a diwalk, arc $a_i$ leaves vertex $v_i$.
	During the coverage tour traversal, if a vertex $v_i$ is traversed for the first time, then set $z_{a_i}$ to be $z_{a_{i-1}}$ minus the number of arcs that are marked for servicing and enter the vertex $v_i$.
	If $v_i$ has already been traversed by an arc whose tail is $v_i$, then set $z_{a_i} $ to $z_{a_i} + z_{a_{i-1}}$.
	The value of $z_a$ remains zero for the arcs that were not traversed by the diwalk.
	Thus, constraints \eqref{eqn:srlc:flowLimit1} and \eqref{eqn:srlc:flowLimit} are satisfied for all the arcs.
	The constraints~\eqref{eqn:srlc:edgeFlow} will be satisfied at each vertex, other than $v_0$, by construction.

	Now we address the case when the coverage tour traverses the vertex $v_0$ multiple times.
	This will result in $l$ loops at~$v_0$, which we index by $j=1,\ldots,l$.
	We first perform the same procedure to assign the values of $z_{a}$ as in the preceding paragraph.
	Let the first arc (leaving $v_0$) and last arc (entering $v_0$) for a loop $j$ be $a_{j_1}$ and $a_{j_k}$, respectively.
	Now for each loop $j$, reduce the value of $z$ for all arcs in the loop $j$ by the value of $z_a$ for the last arc in the loop $a = a_{j_k}$.
	Finally, increase the value of $z$ for all arcs in any one of the loops by $|E_r| - \sum_{j\in \{1,\ldots, l\}}z_{a_{j_1}}$, to satisfy \eqref{eqn:srlc:flowDepot}.
	An example is shown in~\fgref{fig:ilp}.
	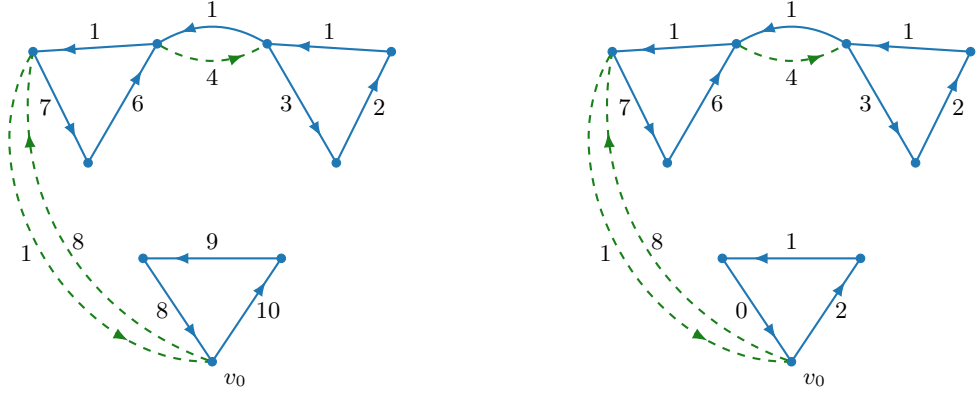
\begin{figure}[ht]
		\centering
		\begin{center}
	\begin{tikzpicture}[scale=0.55]
		\small
		\node[place, label=below right:$v_0$] (1) at (0,-5)				{};
		\node[place] (2) at (1.67,-2.5)		{};
		\node[place] (3) at (-1.67,-2.5)	{};
		\node[place] (4) at (4.33, 2.5)		{};
		\node[place] (5) at (1.33, 2.69)	{};
		\node[place] (6) at (3,-0.19)			{};
		\node[place] (7) at (-4.33,2.5)		{};
		\node[place] (8) at (-3,-0.19)		{};
		\node[place] (9) at (-1.33,2.69)	{};

		\tikzstyle{scEnd}=[near end, fill=none, mDarkRed]
		\tikzstyle{scSt}=[near start, fill=none, black]
		\tikzstyle{scMid}=[midway, fill=none, black]

		\draw[mid2arc, mBlue] (1) -- (2) node [scMid, right] {$10$};
		\draw[mid2arc, mBlue] (2) -- (3) node [scMid, above] {$9$};
		\draw[mid2arc, mBlue] (3) -- (1) node [scMid, left] {$8$};
		\path[mGreen] (1)  edge [mid2arc, dashed, bend left=40]  node[scMid, right]  {$8$} (7);
		\draw[mid2arc, mBlue] (7) -- (8) node [scMid, left] {$7$};
		\draw[mid2arc, mBlue] (8) -- (9) node [scMid, right] {$6$};
		\path[mGreen] (9)  edge [mid2arc, dashed, bend right]  node[scMid, below]  {$4$} (5);
		\draw[mid2arc, mBlue] (5) -- (6) node [scMid, left] {$3$};
		\draw[mid2arc, mBlue] (6) -- (4) node [scMid, right] {$2$};
		\draw[mid2arc, mBlue] (4) -- (5) node [scMid, above] {$1$};
		\path[mBlue] (5)  edge [mid2arc, bend right]  node[scMid, above]  {$1$} (9);
		\draw[mid2arc, mBlue] (9) -- (7) node [scMid, above] {$1$};
		\path[mGreen] (7)  edge [mid2arc, dashed, bend right=60]  node[scMid, left]  {$1$} (1);

		\coordinate (off) at (14, 0);
		\foreach \x in {2,...,9} {
			\node[place] (\x) at ( $ (\x) +  (off) $)			{};
		}
		\node[place, label=below right:$v_0$] (1) at ($ (1) + (off) $) {};

		\draw[mid2arc, mBlue] (1) -- (2) node [scMid, right] {$2$};
		\draw[mid2arc, mBlue] (2) -- (3) node [scMid, above] {$1$};
		\draw[mid2arc, mBlue] (3) -- (1) node [scMid, left] {$0$};
		\path[mGreen] (1)  edge [mid2arc, dashed, bend left=40]  node[scMid, right]  {$8$} (7);
		\draw[mid2arc, mBlue] (7) -- (8) node [scMid, left] {$7$};
		\draw[mid2arc, mBlue] (8) -- (9) node [scMid, right] {$6$};
		\path[mGreen] (9)  edge [mid2arc, dashed, bend right]  node[scMid, below]  {$4$} (5);
		\draw[mid2arc, mBlue] (5) -- (6) node [scMid, left] {$3$};
		\draw[mid2arc, mBlue] (6) -- (4) node [scMid, right] {$2$};
		\draw[mid2arc, mBlue] (4) -- (5) node [scMid, above] {$1$};
		\path[mBlue] (5)  edge [mid2arc, bend right]  node[scMid, above]  {$1$} (9);
		\draw[mid2arc, mBlue] (9) -- (7) node [scMid, above] {$1$};
		\path[mGreen] (7)  edge [mid2arc, dashed, bend right=60]  node[scMid, left]  {$1$} (1);
	\end{tikzpicture}
\end{center}
		\caption[Two stages of assigning flow variables]{Two stages of assigning values of $z_a$ to arcs from a given feasible coverage tour, as discussed in the proof of Theorem~\ref{thm:ilp}.
			The service and deadhead arcs are represented by solid and dashed lines, respectively.
			The arrows indicate the direction of travel.
			There are two loops connected to the depot vertex $v_0$.
			The numbers in the figures indicate values of $z_a$ in the two stages, with the right figure showing the final values.
			The $z$ values of the service arcs in the bottom triangle loop are reduced by $8$.
		Note that $z_a=0$ for all other arcs not shown in the digraph.\label{fig:ilp}}
	\end{figure}

	Hence, we can compute a feasible solution to the SRLC-ILP formulation from a feasible connected closed diwalk with the same cost, i.e., the feasible solution space has all the feasible closed diwalks (coverage tours).
	From Lemma~\ref{lem:connected}, an optimal solution to the SRLC-ILP formulation gives a feasible connected closed diwalk of the same cost.
	As the objective function of the SRLC-ILP formulation corresponds to the cost of a coverage tour, an optimal feasible solution to the formulation will also be an optimal coverage tour for the single robot line coverage problem.
\end{proof}

\subsection{Continuous Relaxation of SRLC-ILP}
We consider a continuous relaxation of the SRLC-ILP formulation that is closely related to the minimum cost flow problem.
To obtain this relaxation, we remove the connectivity constraints and relax the integer variables to make them continuous.
The relaxation, as we shall see later, gives insights into the structure of the problem and enables the development of the approximation algorithms.
The key idea is first to build a min-cost digraph, which is not necessarily balanced, and then use an LP formulation, modeled as a minimum cost flow problem, to reverse some of the service arcs in the digraph and add deadheading arcs such that the resulting digraph is balanced.
We will use the flow problem to develop approximation algorithms for the different cases of the single robot line coverage problem based on the structure of the required graph.

As before, let the input graph be $G=(V, E, E_r)$ with a required graph $G_r=(V_r, E_r)$.
Modify the SRLC-ILP formulation as follows:
\begin{enumerate}
	\item Remove the connectivity constraints~\eqref{eqn:srlc:edgeFlow}--\eqref{eqn:srlc:flowLimit} from the SRLC-ILP formulation.
		When the required graph $G_r$ is connected, the ILP formulation is still valid.
	\item Generate a digraph $D_m=(V, A_m)$ using the algorithm  \textsc{MinCostDigraph}, which selects the arc with the minimum service cost for each required edge.
		The min-cost digraph $D_m$ for an input graph is shown in~\fgref{fig:minCostDigraph}.
	\item Introduce a {\em reverse} variable $\rv{a}$ for each arc $a\in A_m$, to represent the reversal of service direction of the arc $a$.
		The reverse variables $\rv{a}$ take values from $\{0, 2\}$; $\rv a = 0$ when the service direction is not changed, and $\rv a = 2$ when the direction is reversed.
	\item Relax the integrality constraints~\eqref{eqn:srlc:trivialE} so that the deadheading variables $\dd{a_e}, \dd{\bar a_e}$, and the new reverse variables $\rv{a}$ are now continuous.
\end{enumerate}

If an arc's service direction is reversed from $a$ to ${\bar a}$, $\rv{a} = 2$, the imbalance changes by 2, and the total cost changes by $\scost{\bar a} - \scost{a}$.
We assign a {\em reversal cost} $\rcost{a}$ for each arc $a\in A_m$ and set $\rcost{a}$ to $\frac{\scost{\bar a} - \scost{a}}{2}$.

The above modifications to the ILP formulation do not lead to an exact linear relaxation of the ILP formulation, as we are removing the connectivity constraints.
However, we can formulate the resulting relaxation with continuous variables as a linear program.\\

\noindent {\bf SRLC-LP} (single robot line coverage linear program)
\begin{align}
&\text{Minimize:}&&\nonumber\\
&&&\scost{A_m} + \sum_{a\in A_m}\rcost{a} \rv{a}
+ \sum_{e\in E}\big[\dcost{a_e} \dd{a_e}+ \dcost{\bar a_e}\dd{\bar a_e}\big]  \label{eqn:lp:obj}\\
&\text{subject to:}&&\nonumber\\
&&&
\sumSr{a_1\in H(A_m, v)} -\rv{a_1}
\,+
\sumSr{b_1\in H(\mathcal A, v)}\dd{b_1}
\,	+
\sumSr{a_2\in T(A_m, v)}\rv{a_2}
\,	-
\sumSr{b_2\in T(\mathcal A, v)}\dd{b_2} =\mc I(A_m, v),\quad\forall v \in V\label{eqn:lp:symmetry}\\
&&&0\leq \rv{a} \leq  2, \quad \forall a\in A_m\label{eqn:lp:ra}\\
&&&\dd{a_e}, \dd{\bar a_e} \geq 0, \quad \forall e\in E\label{eqn:lp:dd}
\end{align}

Expression~\eqref{eqn:lp:obj} is the modified objective function.
The first term in the objective function $\scost{A_m}$ is the sum of the service costs of all the arcs in the digraph $D_m$ and is independent of the variables.
The imbalance in the digraph~$D_m$ at a vertex~$v$ is represented by~$\mc I(A_m, v)$.
The conditions~\eqref{eqn:lp:symmetry} ensure that the digraph corresponding to a feasible solution will be  balanced, i.e., the indegree will equal the outdegree at every vertex.
The constraints~\eqref{eqn:lp:ra} state that the variable~$\rv{a}$, corresponding to the reversal of service direction, is between $0$ and $2$; if~$\rv{a}=0$, the direction of travel is the same as that of the arc in~$A_m$, and if $\rv{a} = 2$, the direction of the arc is reversed, thereby reversing the service direction.

\begin{figure}[htbp]
	\centering
	\begin{center}
	\begin{tikzpicture}[scale=0.5]
		\small
		\node[place] (1) at (0,-5)				{};
		\node[place] (2) at (1.67,-2.5)		{};
		\node[place] (3) at (-1.67,-2.5)	{};
		\node[place] (4) at (4.33, 2.5)		{};
		\node[place] (5) at (1.33, 2.69)	{};
		\node[place] (6) at (3,-0.19)			{};
		\node[place] (7) at (-4.33,2.5)		{};
		\node[place] (8) at (-3,-0.19)		{};
		\node[place] (9) at (-1.33,2.69)	{};

		\tikzstyle{scEnd}=[near end, fill=none, mDarkRed]
		\tikzstyle{scSt}=[near start, fill=none, black]
		\tikzstyle{scMid}=[midway, fill=none, black]

		\draw[mid2arc, mBlue] (1) -- (2) node [scMid, right] {$4$};
		\draw[mid2arc, mBlue] (2) -- (3) node [scMid, above] {$4$};
		\draw[mid2arc, mBlue] (3) -- (1) node [scMid, left] {$4$};

		\draw[mid2arc, mBlue] (4) -- (5) node [scMid, above] {$4$};
		\draw[mid2arc, mBlue] (5) -- (6) node [scMid, left] {$4$};
		\draw[mid2arc, mBlue] (6) -- (4) node [scMid, right] {$4$};

		\draw[mid2arc, mBlue] (7) -- (8) node [scMid, left] {$4$};
		\draw[mid2arc, mBlue] (8) -- (9) node [scMid, right] {$4$};
		\draw[mid2arc, mBlue] (9) -- (7) node [scMid, above] {$4$};

		\draw[mid2arc, mBlue] (5) -- (9) node [scMid, above] {$2$};



	\end{tikzpicture}
\end{center}
	\caption[Example min-cost digraph]{The min-cost digraph $D_m=(V, A_m)$ for the input graph given in \fgref{fig:input}.
		Note that the graph is neither balanced nor connected.
	\label{fig:minCostDigraph}}
\end{figure}
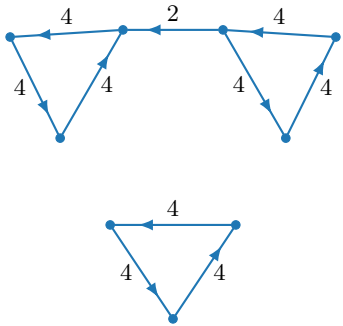

\subsection{A Network Flow Graph Model}
\label{sc:eulerian:networkFlow}
Arc routing problems are often solved by modeling them as network flow graphs and finding a minimum cost flow.
Using this approach,  algorithms for the CPP and the WPP were presented by Edmonds and Johnson~\cite{EdmondsJ73} and Win~\cite{Win89}, respectively.
Inspired by such techniques, we present a network flow graph model for solving the linear programming formulation SRLC-LP and establish its equivalence.
We will then use the model to develop approximation algorithms in \scref{sc:approx_algos}.

\begin{algorithm}[htbp]
	\Input{Graph $G=(V, E, E_r)$}
	\Output{Minimum cost digraph $D_m=(V, A_m)$}
	$A_m \gets \emptyset$\;
	\For{$e \in E_r$ }{
		\eIf{$\scost{a_e} \leq \scost{\bar a_e}$} {
			$a_e.\textsc{service}\gets \textsc{True}$;\,
		$A_m.\textsc{insert}(a_e)$\; }
		{
			$\bar a_e.\textsc{service}\gets \textsc{True}$;\,
		$A_m.\textsc{insert}(\bar a_e)$\;}
	}
	\caption{\textsc{MinCostDigraph}}
	\label{alg:mincostD}
\end{algorithm}

Let $G=(V, E, E_r)$ be the input graph.
First, generate a min-cost digraph $D_m=(V, A_m)$ using the algorithm \textsc{MinCostDigraph}($G$).
Now construct a network flow graph $D_f=(V, A_f)$, in $\mathcal O(|V| + |E|)$ time (Algorithm~\ref{alg:genFlow}),  as follows:
\begin{enumerate}
	\item For each service arc $a \in A_m$, add three arcs $a$, $\bar a$, and $a'$ to $A_f$ with the costs per unit flow $\fcost{\cdot}$ and capacities as given in Table~\ref{tb:flow}, which defines the Flow Model.
		The direction of arc $a$ is the same as that in $A_m$, whereas the direction of arcs $\bar a$ and $a'$ are opposite to that of the corresponding arc in $A_m$.
		In the flow digraph $D_f$, the arcs $a$ and $\bar a$ will represent deadheadings, and the arc $a'$ will represent service reversal.
	\item For each non-required edge $e_n \in  E\setminus E_r$, add two arcs $b$ and $\bar b$ to $A_f$, with the costs per unit flow and capacities in Table~\ref{tb:flow}.
		These two arcs, $b$ and $\bar b$, represent deadheadings of a non-required edge in the two directions.
		\begin{table*}[htbp]
			\begin{center}
				\renewcommand{\arraystretch}{1.5}
				\begin{tabular}{c@{\hskip 14pt}l@{\hskip 14pt}c@{\hskip 14pt}c}
					Arc & Description&Unit Flow Cost $\fcost{\cdot}$ & Capacity\\
					\midrule
					$a$ & Forward deadheading & $\dcost{a}$ & $\infty$ \\
					$\bar a$ & Backward deadheading & $\dcost{\bar a}$ & $\infty$ \\
					$a'$ & Service reversal & $\rcost{a'} = \big(\scost{\bar a} - \scost{a}\big)/{2}$ & $2$ \\
					$b$ & Non-required forward deadheading &$\dcost{b}$ & $\infty$ \\
					$\bar b$ & Non-required reverse deadheading & $\dcost{\bar b}$ & $\infty$ \\
					\bottomrule
				\end{tabular}
			\end{center}
			\captionsetup{justification=justified}
			\caption[Flow Model (FM) with arc costs and capacities]{Flow Model (FM): arc costs and capacities.
			Three arcs $(a, \bar a, a')$ are added for each required edge, and two arcs $(b, \bar b)$ for each non-required edge.\label{tb:flow}}
		\end{table*}
	\item For each vertex $v\in V$, assign the following node flow demand based on the degree of $v$ in $D_m$:%
		\begin{equation}
			d(v) = \outdeg(v) - \indeg(v)=\mc I(A_m, v)
			\label{eqn:flowDemand}
		\end{equation}
\end{enumerate}
Let $\flow{a}$ be the flow along the arc $a \in A_f$, and let the {\em flow vector} be $\mv f=[\flow{a}\ \mid\  a\in A_f]$.
The cost $c(\mv f)$ of a flow $\mv f$ is:%
\begin{equation}
	c(\mv f) = \sum_{ a\in A_f} \fcost{ a}\flow{a}
\end{equation}
A flow digraph $D_f$ is shown in \fgref{fig:flow}(a) for the input graph of \fgref{fig:input}, with the min-cost digraph shown in \fgref{fig:minCostDigraph}.

\begin{algorithm}[htbp]
	\Input{Graph $G=(V, E, E_r)$,\\ Digraph $D_m=(V, A_m)$}
	\Output{Flow Digraph $D_f=(V, A_f)$}
	$A_f \gets \emptyset$\;
	\For{$a\in A_m$ }{
		Insert arcs $a$, $\bar a$ and $a'$ into $A_f$, with costs and capacities from Table~\ref{tb:flow}\;
	}
	\For{$e\in E\setminus E_r$ }{
		Let $b_e$ and $\bar b_e$ be the arcs corresponding to $e$\;
		Insert arcs $b_e$ and $\bar b_e$ into $A_f$, with costs and capacities from Table~\ref{tb:flow}\;
	}
	\For{$v\in V$ }{
		$d(v) \gets \mc I(A_m, v)$\;
	}
	\caption{\textsc{ConstructFlowDigraph}}
	\label{alg:genFlow}
\end{algorithm}

We now formulate a minimum cost flow problem for the network flow graph $D_f=(V, A_f)$ and show that it models the continuous relaxation SRLC-LP.
\begin{definition}
	\label{def:optimalFlow}
	Minimum Cost Flow Problem:\\
	Let $D_f=(V, A_f)$ be a given flow digraph, along with costs, capacities, and node flow demands.
	Then the minimum cost flow problem is to find a feasible flow $\mv f$ such that:
	\begin{enumerate}
		\item the cost of flow $c(\mv f)$ is minimized, and
		\item demand $d(v)$ is satisfied for all $v\in V$.
	\end{enumerate}
\end{definition}
\begin{theorem}
	Let $G=(V, E, E_r)$ be an input graph for the single robot line coverage problem, with minimum cost digraph $D_m=(V, A_m)$ and flow digraph $D_f=(V, A_f)$.
	The minimum cost flow problem for network flow digraph $D_f$ models the continuous relaxation SRLC-LP of the SRLC-ILP formulation for the single robot line coverage problem.
	\label{thm:lp}
\end{theorem}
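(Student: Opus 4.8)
The plan is to exhibit an explicit cost-preserving bijection between the feasible flows of the minimum cost flow problem on $D_f$ (Definition~\ref{def:optimalFlow}) and the feasible solutions of SRLC-LP, and then to argue that this bijection carries minimizers to minimizers. The natural correspondence merely relabels variables: for each service arc $a\in A_m$ the flow $\flow{a'}$ on the reversal arc is identified with the reverse variable $\rv{a}$, and for every arc of $\mc A$ the flow on the corresponding deadheading arc in $D_f$ is identified with the deadheading variable $\dd{\cdot}$. Since the reversal arc $a'$ has capacity $2$ while all deadheading arcs have capacity $\infty$ (Table~\ref{tb:flow}), and since all flows are nonnegative, the capacity and nonnegativity constraints of the flow problem coincide exactly with the bound constraints \eqref{eqn:lp:ra} and \eqref{eqn:lp:dd}. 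Thus the two problems share the same box-type feasible set before the balance constraints are imposed.

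First I would verify that the two objectives agree up to the additive constant $\scost{A_m}$. Reading the unit costs off Table~\ref{tb:flow}, the flow cost $c(\mv f)=\sum_{a\in A_f}\fcost{a}\flow{a}$ expands, under the variable identification and using $\rcost{a'}=\rcost{a}=(\scost{\bar a}-\scost{a})/2$, to $\sum_{a\in A_m}\rcost{a}\rv{a}+\sum_{e\in E}\big[\dcost{a_e}\dd{a_e}+\dcost{\bar a_e}\dd{\bar a_e}\big]$. This is precisely the SRLC-LP objective \eqref{eqn:lp:obj} minus the constant $\scost{A_m}$, so $c(\mv f)+\scost{A_m}$ equals the SRLC-LP objective of the image solution. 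Consequently a flow minimizes $c(\mv f)$ if and only if its image minimizes \eqref{eqn:lp:obj}.

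The crux is matching the flow-conservation constraints with the balance constraints \eqref{eqn:lp:symmetry}, and the bookkeeping for the oppositely oriented reversal arcs is where care is required. The key observation is that in $D_f$ the reversal arc $a'$ is directed opposite to $a\in A_m$, so $a'$ leaves $v$ exactly when $a\in H(A_m,v)$ and enters $v$ exactly when $a\in T(A_m,v)$, whereas every deadheading arc retains the orientation of its arc in $\mc A$. Adopting the convention that the net flow entering $v$ equals the demand $d(v)=\mc I(A_m,v)$ from \eqref{eqn:flowDemand}, the reversal arcs contribute $\sum_{a\in T(A_m,v)}\rv{a}-\sum_{a\in H(A_m,v)}\rv{a}$ to inflow-minus-outflow and the deadheading arcs contribute $\sum_{b\in H(\mc A,v)}\dd{b}-\sum_{b\in T(\mc A,v)}\dd{b}$; summing these reproduces exactly the left-hand side of \eqref{eqn:lp:symmetry}. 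I expect this sign reconciliation to be the main obstacle, since it is the step where the reversed orientation of the $a'$ arcs and the chosen sign convention for the node demands must be made to line up; getting this right is what forces the demand to be read as net inflow rather than net outflow.

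Finally, having established that the variable identification is a bijection with identical feasible regions (capacities and nonnegativity matching \eqref{eqn:lp:ra}--\eqref{eqn:lp:dd}, and conservation matching \eqref{eqn:lp:symmetry}) and with objectives differing only by the constant $\scost{A_m}$, I would conclude that an optimal flow on $D_f$ maps to an optimal SRLC-LP solution and conversely. This shows the minimum cost flow problem on $D_f$ models the continuous relaxation SRLC-LP, as claimed.
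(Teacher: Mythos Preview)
Your proposal is correct and follows essentially the same approach as the paper: exhibit the variable identification $f_{a'}=\rv a$, $f_a=\dd a$, $f_{\bar a}=\dd{\bar a}$ (and similarly for non-required arcs), and check that capacities, balance, and objective line up. Your write-up is in fact more detailed than the paper's own proof, which simply asserts the correspondence in \eqref{eqn:flowrelax} and states that capacities yield \eqref{eqn:lp:ra}, the demand equation yields \eqref{eqn:lp:symmetry}, and the objectives match up to $\scost{A_m}$; your explicit sign-tracking for the oppositely oriented reversal arcs and the resulting ``net inflow equals demand'' convention is exactly the bookkeeping the paper leaves implicit.
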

\begin{proof}
	Observe that any feasible solution to the minimum cost flow problem is a feasible solution to the linear programming formulation SRLC-LP, and vice versa, using the following relation between the variables:
	\begin{equation}
		\begin{aligned}
			 &f_a = \dd{a},\ f_{a'} = \rv{a},\ f_{\bar a}=\dd{\bar a}, \quad &&\forall a\in A_m\\
			 &f_{a_e} = \dd{a_e},\ f_{\bar a_e} =\dd{\bar a_e}, \quad &&\forall e\in E\setminus E_r\label{eqn:flowrelax}
		\end{aligned}
	\end{equation}
	As the capacity of the reversal arcs $\rv{a'}$ is set to $2$, the flow $f_{a'}$ across any such arc will be no greater than $2$, and the constraints~\eqref{eqn:lp:ra} are satisfied.
	The flow problem resolves the demand $d(v) = \mc I(A_m, v)$ for each vertex $v\in V$, and thus, satisfies the balance constraints~\eqref{eqn:lp:symmetry}.
	The objective of the minimum cost flow problem $c(\bf f)$ summed with $\scost{A_m}$ is then exactly the objective function~\eqref{eqn:lp:obj} of SRLC-LP.
	Thus, the theorem follows.
\end{proof}
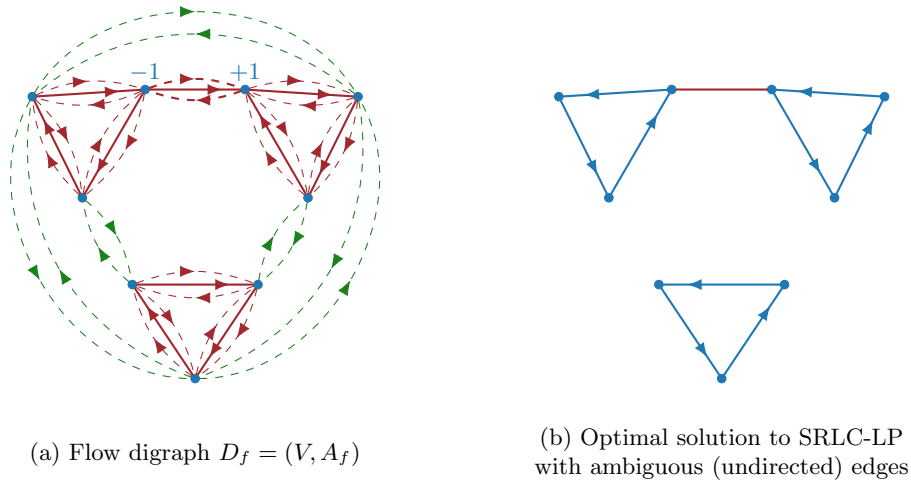
\begin{figure}[htpb]
	\centering
	\begin{center}
	\begin{tikzpicture}[scale=0.5]
		\small
		\node[place] (1) at (0,-5)				{};
		\node[place] (2) at (1.67,-2.5)		{};
		\node[place] (3) at (-1.67,-2.5)	{};
		\node[place] (4) at (4.33, 2.5)		{};
		\node[place] (5) at (1.33, 2.69)	{};
		\node[place] (6) at (3,-0.19)			{};
		\node[place] (7) at (-4.33,2.5)		{};
		\node[place] (8) at (-3,-0.19)		{};
		\node[place] (9) at (-1.33,2.69)	{};

		\node[fill=none, above, mBlue] at (5)	{$+1$};
		\node[fill=none, above, mBlue] at (9)	{$-1$};

		\tikzstyle{scEnd}=[near end, fill=none, mDarkRed]
		\tikzstyle{scSt}=[near start, fill=none, black]

		\draw[midarc, mDarkRed, thin, dashed] (1) to [bend left=20](2);
		\draw[midarc, mDarkRed, thin, dashed] (2) to [bend left=20](1);
		\draw[midarc, mDarkRed, thin, dashed] (3) to [bend left=20](2);
		\draw[midarc, mDarkRed, thin, dashed] (2) to [bend left=20](3);
		\draw[midarc, mDarkRed, thin, dashed] (3) to [bend left=20](1);
		\draw[midarc, mDarkRed, thin, dashed] (1) to [bend left=20](3);
		\draw[mid2arc, mDarkRed] (2) -- (1);
		\draw[mid2arc, mDarkRed] (3) -- (2);
		\draw[mid2arc, mDarkRed] (1) -- (3);

		\draw[midarc, mDarkRed, thin, dashed] (5) to [bend left=20](4);
		\draw[midarc, mDarkRed, thin, dashed] (4) to [bend left=20](5);
		\draw[midarc, mDarkRed, thin, dashed] (5) to [bend left=20](6);
		\draw[midarc, mDarkRed, thin, dashed] (6) to [bend left=20](5);
		\draw[midarc, mDarkRed, thin, dashed] (4) to [bend left=20](6);
		\draw[midarc, mDarkRed, thin, dashed] (6) to [bend left=20](4);
		\draw[mid2arc, mDarkRed] (5) -- (4);
		\draw[mid2arc, mDarkRed] (6) -- (5);
		\draw[mid2arc, mDarkRed] (4) -- (6);

		\draw[midarc, mDarkRed, thin, dashed] (8) to [bend left=20](7);
		\draw[midarc, mDarkRed, thin, dashed] (7) to [bend left=20](8);
		\draw[midarc, mDarkRed, thin, dashed] (8) to [bend left=20](9);
		\draw[midarc, mDarkRed, thin, dashed] (9) to [bend left=20](8);
		\draw[midarc, mDarkRed, thin, dashed] (7) to [bend left=20](9);
		\draw[midarc, mDarkRed, thin, dashed] (9) to [bend left=20](7);
		\draw[mid2arc, mDarkRed] (8) -- (7);
		\draw[mid2arc, mDarkRed] (9) -- (8);
		\draw[mid2arc, mDarkRed] (7) -- (9);

		\draw[mid2arc, mDarkRed] (9) -- (5);
		\draw[midarc, mDarkRed, dashed] (5) to [bend left=20](9);
		\draw[midarc, mDarkRed, dashed] (9) to [bend left=20](5);

		\draw[midarc, mGreen, thin, dashed] (2) to [bend left=20](6);
		\draw[midarc, mGreen, thin, dashed] (6) to [bend left=20](2);
		\draw[midarc, mGreen, thin, dashed] (3) to [bend left=20](8);
		\draw[midarc, mGreen, thin, dashed] (8) to [bend left=20](3);
		\draw[midarc, mGreen, thin, dashed] (4) to [bend left=40](1);
		\draw[midarc, mGreen, thin, dashed] (1) to [bend right=60](4);
		\draw[midarc, mGreen, thin, dashed] (4) to [bend right=40](7);
		\draw[midarc, mGreen, thin, dashed] (7) to [bend left=60](4);
		\draw[midarc, mGreen, thin, dashed] (1) to [bend left=40](7);
		\draw[midarc, mGreen, thin, dashed] (7) to [bend right=60](1);

		\coordinate (off) at (14, 0);
		\foreach \x in {1,...,9} {
			\node[place] (\x) at ( $ (\x) +  (off) $)			{};
		}
		\draw[mid2arc, mBlue] (1) -- (2);
		\draw[mid2arc, mBlue] (2) -- (3);
		\draw[mid2arc, mBlue] (3) -- (1);

		\draw[mid2arc, mBlue] (4) -- (5);
		\draw[mid2arc, mBlue] (5) -- (6);
		\draw[mid2arc, mBlue] (6) -- (4);

		\draw[mid2arc, mBlue] (7) -- (8);
		\draw[mid2arc, mBlue] (8) -- (9);
		\draw[mid2arc, mBlue] (9) -- (7);

		\draw[Edge, mDarkRed] (5) -- (9);

		\node[fill=none] at (0,-7) {\small (a) Flow digraph $D_f=(V, A_f)$};
		\node[fill=none, align=center] at (14,-7) {\small (b) Optimal solution to SRLC-LP\\ with ambiguous (undirected) edges};
	\end{tikzpicture}
\end{center}
	\caption[Flow digraph and optimal solution for an example]{Flow digraph and an optimal solution to the SRLC-LP problem for the input graph $G$ in \fgref{fig:input} with the min-cost digraph shown in \fgref{fig:minCostDigraph}.
		(a) The red (darker) solid arcs correspond to service reversal arcs, and the red dashed arcs correspond to the deadheading of required edges.
		The dashed green (lighter) arcs are flow arcs corresponding to deadheading across non-required edges.
		The nonzero imbalances for the vertices are shown.
		(b) The set of arcs corresponding to ambiguous edges $A_u$---those that remain undirected after solving the flow problem---is shown in red (darker).
	\label{fig:flow}}
\end{figure}
\begin{remark}
	The minimum cost flow for a graph $G=(V, E)$ can be computed in time $\mc O\big((m\log n)(m +n\log n)\big)$, as shown by Orlin~\cite{Orlin93}, where $m=|E|$ and $n=|V|$.
	Optimal solutions to the minimum cost flow algorithms are integers as the imbalance at each vertex is also an  integer~(see, e.g., \cite{PapadimitriouSBook}), thus $f_{a'}\in\{0, 1, 2\}$.
	When we set $\rv a = f_{a'}$ we may have $\rv a = 1$.
	This, in turn, violates the integrality constraints~\eqref{eqn:srlc:trivialEr}, and the corresponding service variables are half-integral: $\sv{a} = \sv{\bar a} = 0.5$.
	For any coverage tour, there exists an equivalent network flow digraph with a corresponding flow.
	However, not all solutions to the network flow digraph have an equivalent coverage tour.
\end{remark}

\section{Approximation Algorithms}
\label{sc:approx_algos}
\newcommand{\algLP}{\textsc{LP-Solve}}
\newcommand{\algTwoApx}{\textsc{SRLC-2Approx}}

We now present approximation algorithms for the single robot line coverage problem by partitioning it into three cases, as illustrated in \fgref{fig:alg_chart}.
The cases are based on the structure of the required graph, i.e., the subgraph induced by only the required edges:
\begin{enumerate}
	\item Eulerian required graph: The algorithm \algLP{}, derived from the network flow model, gives an optimal solution.
	\item Connected required graph: The 2-approximation algorithm \algTwoApx{} gives a solution with cost at most twice the optimal cost.
	\item General required graph: The $(\alpha(C) + 2)$-approximation algorithm, where $C$ is the number of connected components in the required graph, and $\alpha(C)$ is the approximation factor for the ATSP on a graph with $C$ vertices, gives a solution with cost at most $\alpha(C) +2$ times the optimal cost.
		When the number of connected components in the required graph is small, i.e., $C\in \mc O(\log(n))$, a 3-approximation solution is obtained.
\end{enumerate}

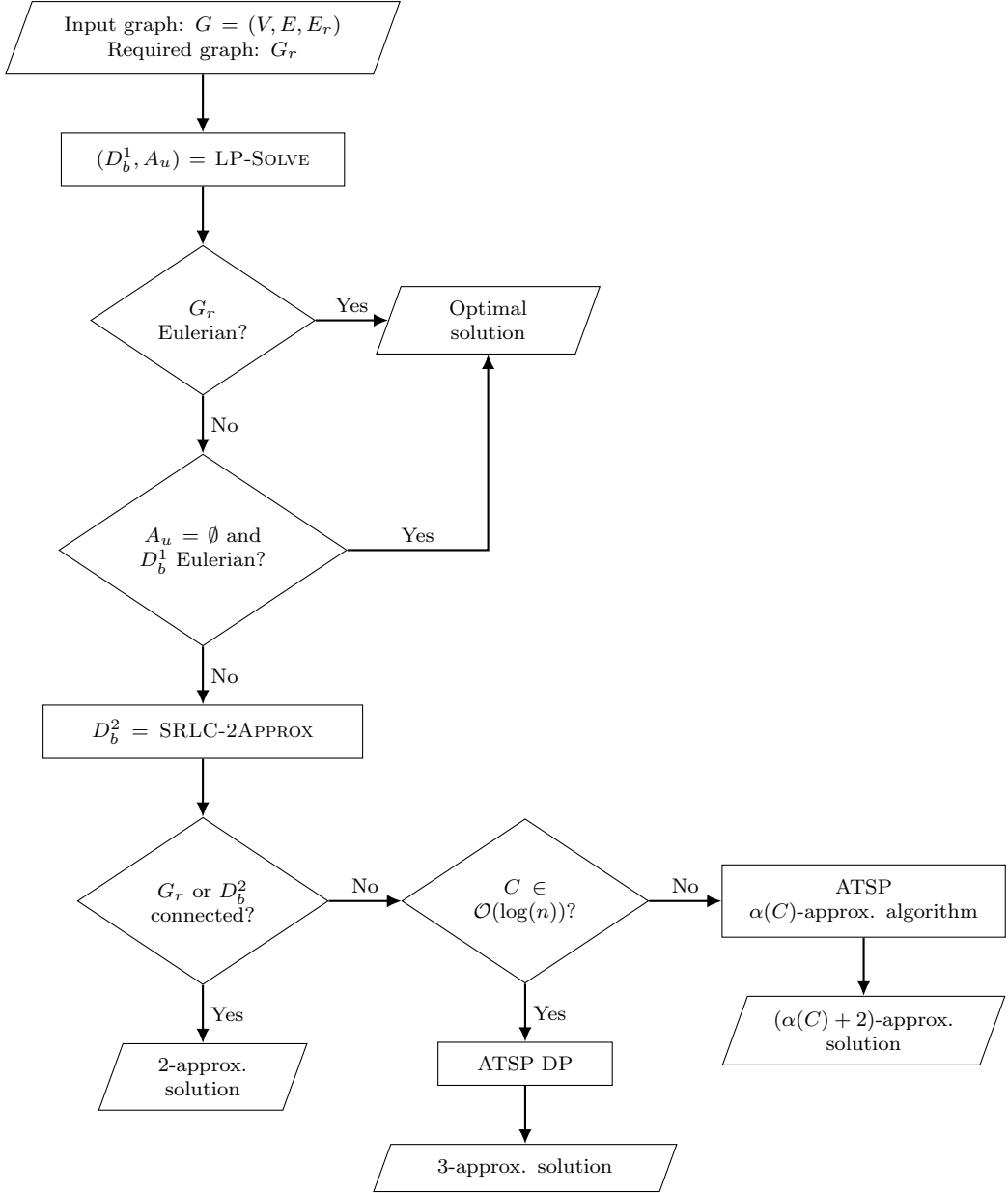
\begin{figure}[htbp]
	\centering
\begin{tikzpicture}[scale=0.5]
	\footnotesize
	\tikzstyle{arrow} = [thick,->,>=stealth]
	\tikzstyle{scEnd}=[near end, fill=none, mDarkRed]
	\tikzstyle{scSt}=[near start, fill=none, black]
	\tikzstyle{scMid}=[midway, fill=none, black]
	\def\arrlen{0.8}
	\node (in) [ioNode,text width=4.5cm] {Input graph: $G=(V, E, E_r)$\\ Required graph: $G_r$};
	\node (lp) [processNode, below=\arrlen cm of in, text width=3.5cm] {$(D^1_b, A_u)=\algLP{}$};
	\draw[Arc] (in) -- (lp);
	\node (euler) [decisionNode, below=\arrlen cm of lp, text width=1.5cm] {$G_r$ Eulerian?};
	\draw[Arc] (lp) -- (euler);
	\node (outEuler) [ioNode,text width=2.2cm, right=1cm of euler] {Optimal solution};
	\draw [Arc] (euler) -- node[scMid, above] {Yes} (outEuler);
	\node (eulerb) [decisionNode, below=\arrlen cm of euler, text width=2.2cm] {$A_u=\emptyset$ and\\ $D^1_b$ Eulerian?};
	\draw [Arc] (euler) -- node[scMid, right] {No} (eulerb);
	\draw [Arc] (eulerb) -| node[scSt, above, black] {Yes} (outEuler);
	\node (conn) [processNode, below=\arrlen cm of eulerb, text width=4cm] {$D^2_b= \algTwoApx{}$};
	\draw [Arc] (eulerb) -- node[scMid, right] {No} (conn);
	\node (connq) [decisionNode, below=\arrlen cm of conn, text width=1.8cm] {$G_r$ or $D^2_b$ connected?};
	\draw[Arc] (conn) -- (connq);
	\node (two) [ioNode,text width=2cm, below=\arrlen cm of connq] {$2$-approx.\\solution};
	\draw [Arc] (connq) -- node[scMid, right] {Yes} (two);
	\node (C) [decisionNode, right=1cm of connq, text width=1.7cm] {$C\in \mathcal O(\log(n))$?};
	\draw [Arc] (connq) -- node[scMid, above] {No} (C);
	\node (dp) [processNode, below=\arrlen cm of C, text width=2cm] {ATSP DP};
	\node (three) [ioNode,text width=3.5cm, below=\arrlen cm of dp] {$3$-approx. solution};
	\draw [Arc] (C) -- node[scMid, right] {Yes} (dp);
	\draw [Arc] (dp) -- (three);
	\node (capprox) [processNode, right=1cm of C, text width=3.5cm] {ATSP\\$\alpha(C)$-approx. algorithm};
	\node (plusTwo) [ioNode,text width=3cm, below=\arrlen cm of capprox] {$\left(\alpha(C) +2\right)$-approx.\\solution};
	\draw [Arc] (C) -- node[scMid, above] {No} (capprox);
	\draw [Arc] (capprox) -- (plusTwo);
\end{tikzpicture}
	\caption{Flowchart illustrating the different cases of the single robot line coverage problem based on the structure of the required graph $G_r$.
		The approximation factors of the algorithms depend on the structure of $G_r$.
		The approximation factor for the most general case is $(\alpha(C) + 2)$, where $C$ is the number of connected components in the required graph, and $\alpha(C)$ is the approximation factor for an ATSP algorithm on a graph with $C$ vertices.
	\label{fig:alg_chart}}
\end{figure}

We recast the theoretical results of the previous section in algorithmic form as Algorithm~\ref{alg:lp}, which computes a digraph by solving the relaxation SRLC-LP to the single robot line coverage problem and creates a balanced digraph $D_b=(V,A_b)$.
Given a graph $G = (V, E, E_r)$, we first compute the min-cost digraph $D_m=(V, A_m)$ (line~1) and construct the flow digraph $D_f=(V, A_f)$ from $D_m$ (line~2).
We then compute the minimum cost flow $\mv f$ for the flow digraph (line~3).
If the optimal flow across an arc $a$ is 0 or 2, we set the service direction of the corresponding edge to the direction of $a$ or $\bar a$, respectively, and add the corresponding arc to the arc set $A_b$ (lines~6--11).
For some of the required edges, the optimal flow through the reversal arcs can be~$1$, i.e., $\rv a=1$.
These arcs, denoted by $A_u$, correspond to the edges whose service direction remains {\em ambiguous} in the solution to the flow problem (lines~12--13).
Finally, we add deadheading arcs to the arc set $A_b$ according to the corresponding optimal flow through the deadheading arcs of the flow digraph (lines~14--20).
The digraph $D_b = (V, A_b)$ and the set of ambiguous arcs $A_u$ are the output of the algorithm.
Note that the digraph $D_b$ is balanced but may have multiple components, each of which is Eulerian.
An output of \algLP{} for the input graph given in \fgref{fig:input} is shown in \fgref{fig:flow}(b).
The arcs corresponding to the digraph~$D_b$ are shown in blue, whereas the ambiguous edge, corresponding to the edge set~$A_u$, is shown in dark red.
\begin{algorithm}[htbp]
	\Input{Graph $G=(V, E, E_r)$}
	\Output{Digraph $D_b=(V, A_b)$ and undirected arc set $A_u$}
	$D_m=(V, A_m) \gets$ \textsc{MinCostDigraph}($G$)\;
	$D_f=(V, A_f) \gets$ \textsc{ConstructFlowDigraph}($G, D_m$)\;
	Compute the minimum cost flow $\mv f$ for $D_f$\;
	\tcc{Generate digraph $D_b$}
	$A_b \leftarrow \emptyset$\;
	\For{$a \in A_m$}{
		\uIf{$\flow{a'} = 0$}{
			$a.\textsc{service}\gets \textsc{True}$\;
			$A_b$.\textsc{insert}\big($a$\big)\;
		}
		\uElseIf{$\flow{a'} = 2$}{
			$\bar a.\textsc{service}\gets \textsc{True}$\;
			$A_b$.\textsc{insert}\big($\bar a$\big)\;
		}
		\uElseIf{$\flow{a'} = 1$}{
			$A_u$.\textsc{insert}($a$)\;
		}
		$A_b.\textsc{insert}$($\flow{a}$ copies of $a$)\;
		$A_b.\textsc{insert}$($\flow{\bar a}$ copies of $\bar a$)\;
	}
	\For{$e \in E\setminus E_r$}{
		$A_b$.\textsc{insert}($\flow{a_e}$ copies of $a_e$)\;
		$A_b$.\textsc{insert}($\flow{\bar a_e}$ copies of $\bar a_e$)\;
	}
	\caption{\algLP}
	\label{alg:lp}
\end{algorithm}

\subsection{Eulerian Required Graph}
We first consider the case where the required graph $G_r=(V_r, E_r)$, for the input graph $G=(V, E, E_r)$, is Eulerian, i.e., the subgraph is connected, and the degree of each vertex in $V_r$ is even.
It should be noted that this special case is not the same as the Eulerian graphs for the asymmetric/windy postman problem (WPP)~\cite{RaghavachariV99SODA,Win89}.
This is because non-required edges are permitted, and the deadheading costs for the required edges can differ from their service costs.
Thus, an Eulerian tour on the required graph does not ensure a coverage tour with minimum cost.
The following lemma shows that the \algLP{} algorithm gives an optimal solution for this case in running time that is polynomial in the number of edges and vertices.

\begin{theorem}
	Let the input graph be $G=(V, E, E_r)$ such that the required graph $G_r=(V_r, E_r)$ is Eulerian.
	Then algorithm \textup{\algLP{}} gives an optimal feasible solution for the single robot line coverage problem in polynomial time.
	In particular, $\rv{a}= f_{a'} \in \{0, 2\}\;\; \forall a\in A_m\,$ and $\dd{a_e} =\flow{a_e}, \ \dd{\bar a_e}=\flow{\bar a_e} \in \mathbb N\cup \{0\}\;\;\forall e\in  E$.
	\label{lem:eulerian}
\end{theorem}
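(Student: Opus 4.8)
The plan is to prove two things simultaneously: that the reversal variables produced by \algLP{} never take the ambiguous value $1$, and that the resulting balanced digraph is an optimal coverage tour. The starting observation is a parity fact. Since $G_r$ is Eulerian, every vertex of $V_r$ has even degree, so for any orientation of the required edges --- in particular the one chosen by \textsc{MinCostDigraph} --- we have $\indeg(v)+\outdeg(v)$ even, whence the imbalance $\mc I(A_m,v)=\outdeg(v)-\indeg(v)$ is even at every vertex $v\in V$. This is the single structural property that drives the whole argument, because the node demands of the flow digraph $D_f$ are exactly $d(v)=\mc I(A_m,v)$ by \eqref{eqn:flowDemand}.

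First I would record that SRLC-LP lower-bounds the optimal coverage cost: the reversal formulation around the min-cost orientation is a cost-preserving change of variables, and dropping the connectivity constraints together with relaxing integrality can only decrease the optimum, so by Theorem~\ref{thm:lp} the minimum cost flow value on $D_f$ (plus $\scost{A_m}$) is a lower bound on the cost of every coverage tour. Conversely, I would show that an integral flow on $D_f$ with $f_{a'}\in\{0,2\}$ on all reversal arcs yields a feasible coverage tour of exactly that cost: such a flow leaves no ambiguous arc ($A_u=\emptyset$), so $D_b$ contains precisely one service arc per required edge and is balanced; since $G_r$ is connected the service arcs already connect $V_r$, and a balanced digraph whose underlying graph is connected is strongly connected, so (after discarding any deadhead-only component, which is itself balanced and cannot increase cost) $D_b$ is Eulerian and gives a coverage tour. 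Matching the lower bound then forces optimality.

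The heart of the proof --- and the step I expect to be the main obstacle --- is producing an optimal flow that is integral and even on the reversal arcs. I would obtain it by a halving argument that exploits the even demands. Consider the auxiliary min-cost flow problem $P'$ on the same digraph $D_f$ with the halved demands $d(v)/2$ (integers, by the parity fact) and each reversal-arc capacity reduced from $2$ to $1$, all costs unchanged. For any feasible $\mv f$ of the original problem $P$, the scaled vector $\mv f/2$ is feasible for $P'$ (it meets the halved demands and respects $f_{a'}/2\le 1$), so $\mathrm{OPT}(P')\le \tfrac12\mathrm{OPT}(P)$; conversely, for an optimal integral $\mv f'$ of $P'$ the doubled vector $2\mv f'$ is feasible for $P$ with $f_{a'}=2f'_{a'}\in\{0,2\}$, giving $\mathrm{OPT}(P)\le 2\,\mathrm{OPT}(P')$. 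Hence $\mathrm{OPT}(P)=2\,\mathrm{OPT}(P')$ and $2\mv f'$ is an optimal flow for $P$ that is integral (as $\mv f'$ is integral by total unimodularity of the node--arc incidence matrix) and satisfies $\rv a=f_{a'}\in\{0,2\}$ for every $a\in A_m$, while $\dd{a_e}=\flow{a_e}$ and $\dd{\bar a_e}=\flow{\bar a_e}$ are nonnegative integers. Using this flow inside \algLP{} yields the claimed solution, and matching the lower bound of the previous paragraph makes it optimal.

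Finally I would check running time: \textsc{MinCostDigraph} and \textsc{ConstructFlowDigraph} are linear, and the minimum cost flow is computable in strongly polynomial time (e.g.\ by Orlin~\cite{Orlin93}); the halving/doubling reduction only rescales the instance. The one delicate point I would argue carefully is the passage from ``an optimal even flow exists'' to ``\algLP{} returns it'': a generic min-cost flow routine could still report a solution with some $f_{a'}=1$ (the Remark after Theorem~\ref{thm:lp} notes this in general), so I would either run the flow computation on the halved instance $P'$ and double the result, or post-process an arbitrary optimal integral flow by cancelling the odd parity along the anti-parallel deadheading arcs, both of which preserve optimality and clear $A_u$.
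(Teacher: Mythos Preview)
Your proof is correct and follows essentially the same route as the paper's: both hinge on the observation that the Eulerian hypothesis makes every node demand $\mc I(A_m,v)$ even, and both use a halving argument---yours via an explicit auxiliary instance $P'$ with halved demands and unit reversal capacities, the paper's via the change of variables $r_a\mapsto 2\tilde r_a$, $d_a\mapsto 2\tilde d_a$ and total unimodularity of the resulting incidence matrix---to obtain an optimal flow with every $f_{a'}\in\{0,2\}$. You are more careful than the paper on one point it leaves implicit: you correctly flag that a generic min-cost-flow solver might still return a non-even optimum and propose solving the halved instance and doubling, which cleanly guarantees $A_u=\emptyset$.
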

\begin{proof}
	Let the min-cost digraph be $D_m=(V, A_m)$ and the flow digraph be $D_f=(V, A_f)$, for the input graph $G=(V,E,E_r)$.
	Since the required graph $G_r$ is Eulerian, the number of edges incident at a vertex in $G_r$ is even.
	The number of outgoing arcs and incoming arcs, at a vertex in $D_m$, will both be even or will both be odd.
	Therefore, the node flow demand for a vertex $v\in V$ computed for the digraph $D_m$ will be even.
	The capacities defined in \tbref{tb:flow} are either $2$ or $\infty$.
	Furthermore, the minimum cost flow algorithm gives integral solutions.
	Hence, the optimal flow is also even for each arc $a_f\in A_f$ and, in particular, $\rv a=f_{a'}\in\{0,2\}$.
	This result can also be derived by establishing the total unimodularity of the constraint matrix obtained from the balance constraints~\eqref{eqn:lp:symmetry} by replacing each $\rv a$ by $2\tilde r_a$, $\dd{a}$ by $2\tilde{d}_{a}$, and $\dd{\bar a}$ by $2\tilde{d}_{\bar a}$.
	Note that the imbalance $\mc I(A_m, v)$ is even for each vertex $v\in V$.
	Thus, we can divide the entire equation by $2$, and the constraints will still have integral coefficients and constants.%
	\begin{equation*}
		\sumSr{a_1\in H(A_m, v)} -\tilde{r}_{a_1}
		\,+
		\sumSr{b_1\in H(\mathcal A, v)}\tilde{d}_{b_1}
		\,	+
		\sumSr{a_2\in T(A_m, v)}\tilde{r}_{a_2}
		\,	-
		\sumSr{b_2\in T(\mathcal A, v)}\tilde{d}_{b_2}
		\quad=\quad\frac{\mc I(A_m, v)}{2}\quad\forall v \in V
	\end{equation*}
	The constraint matrix for the above equation	corresponds to an incidence matrix with integers, and thus is totally unimodular.
	The substituted variables will be integral, and the original variables will all be even.
	As the required graph is connected, the digraph $D_b$ obtained from the algorithm will also be connected, and thus will be Eulerian.

	Let $m=|E|$ and $n=|V|$.
	The min-cost digraph and the flow digraph can be constructed in $\mc O(n)$ and $\mc O(m+n)$ time, respectively.
	Thus, the running time of the algorithm is dependent on the algorithm for solving the minimum cost flow problem, i.e., $\mathcal O\big((m\log n)(m +n\log n)\big)$.
\end{proof}
\subsection{Connected Required Graph}
We now consider the case where the required graph $G_r=(V_r, E_r)$, for the input graph $G=(V, E, E_r)$, is connected but not necessarily Eulerian, i.e., the degree of each vertex in $G_r$ might not be even.
The created flow digraph $D_f=(V, A_f)$ may have vertices with odd flow demands~\eqref{eqn:flowDemand}.
As a result, the optimal flow values need not be even.
While it is not a problem if $\dd{a}$ or $\dd{\bar a}$ is odd for some arc $a\in \mc A$, we need to assign service directions to the edges for which the reverse variables $r_a$ is~$1$ for arcs $a\in A_m$ and potentially add deadheading arcs to make the corresponding digraph Eulerian.

Let $A_u$ be the set of arcs for which the flow for the arc corresponding to the reversal of service direction is~$1$, i.e., $A_u=\{a\ | \ \rv{a} = \flow{a'} = 1, a\in A_m\}$, as given by algorithm \algLP{}, and let $E_u$ be the corresponding edge set.
We first check for cycles in the graph $(V,E_u)$.
Such cycles will have a sequence of edges such that the total cost of the edges if oriented in the clockwise direction is the same as the total cost of the edges if oriented in the anti-clockwise direction since the flow obtained is optimal.
We can orient such cycles in either clockwise or anti-clockwise order without changing the cost of the solution.
Now, let us say we service the edge corresponding to some $a\in A_u$ in the same direction as $a$.
This creates an imbalance  of $+1$ at the tail $t(a)$ and $-1$ at the head $h(a)$.
We can resolve this by adding the shortest deadheading path from $h(a)$ to $t(a)$, the cost of which is denoted by $\dcost{h(a), t(a)}$.
The total cost due to traversals of this edge will be the sum of the service cost of $a$ and this deadheading cost.
Similarly, we can consider servicing in the direction of $\bar a$ and then consider deadheading from $h(\bar a)$ to $t(\bar a)$.
Of the two combinations, the one that has a lower total cost is selected.
This is done for each ambiguous edge corresponding to $a\in A_u$.
This idea is described concretely in algorithm \algTwoApx{}.
Raghavachari and Veerasamy~\cite{RaghavachariV99SODA} used a linear relaxation to get a lower bound on the optimal cost for the WPP.
Using a similar lower bounding approach, we show that the \algTwoApx{} algorithm computes a coverage tour with cost at most twice that of the optimal solution.
\begin{algorithm}[htpb]
	\Input{ Graph $G=(V, E, E_r)$}
	\Output{Digraph $D_b=(V, A_b)$}
	$(D^{\text{LP}}_b=(V, A_b^{\text{LP}}), \, A_u)\gets$ \algLP{}($G$)\;
	\tcc{Let $E_u$ be the edge set corresponding to $A_u$}
	$A_b \gets A_b^{\text{LP}}$\;
	Find cycles in the graph $(V, E_u)$\;
	Orient cycles in anti-clockwise orientation and add arcs to $A_b$\;
	\For{$a \in A_u$}{
		$p\gets$ shortest deadheading path from $h(a)$ to $t(a)$\;
		$\bar p\gets$ shortest deadheading path from $h(\bar a)$ to $t(\bar a)$\;
		\eIf{$\scost{a} + \dcost{p} \leq$ $ \scost{\bar a} + \dcost{\bar p}$}
		{
			$ a.\textsc{service}\gets \textsc{True}$\;
			$A_b$.\textsc{insert}($a$)\;
		$A_b.\textsc{insert}(p)$\;}
		{
			$\bar a.\textsc{service}\gets \textsc{True}$\;
			$A_b$.\textsc{insert}($\bar a$)\;
			$A_b.\textsc{insert}(\bar p)$\;
		}
	}
	\caption{\algTwoApx{}}
	\label{alg:2opts1}
\end{algorithm}

Let the optimal flow be $\mv f$, and the cost of the optimal tour be~$c^*$.
Then the optimal value~$z^*$ of the linear program SRLC-LP is:
\begin{equation}
	z^* = \scost{A_m} + c(\mv f)
	= \scost{A_m\setminus A_u} +\scost{A_u} + c(\mv f) \leq c^*\label{eqn:cstar}
\end{equation}
Let $A_d$ denote the set of arcs with service direction and deadheading decided unambiguously by the optimal flow.
Then,
\begin{equation}
	c(A_d) = \scost{A_m\setminus A_u} + c(\mv f) - \rcost{A_u}
\end{equation}
Thus,
\begin{equation}
	\quad z^* = c(A_d) + \scost{A_u} + \rcost{A_u}\label{eqn:zstar}
\end{equation}
Substituting the value of $\rcost{A_u}=\frac{\scost{\bar{A}_u} - \scost{A_u}}{2}$ in \eqref{eqn:zstar} and using \eqref{eqn:cstar}, we have,
\begin{align}
		&2c(A_d) + \scost{A_u} + \scost{\bar{A}_u} \leq 2c^*\nonumber\\
	\text{or, } &c(A_d) + \scost{A_u} + \scost{\bar{A}_u} \leq 2c^* - c(A_d)\label{eqn:cineq}
\end{align}
\begin{theorem}
	Let the input graph be $G=(V, E, E_r)$ such that the required graph $G_r=(V_r, E_r)$ is connected.
	Then algorithm \textup{\algTwoApx{}} generates a coverage tour with cost at most twice the cost of the optimal coverage tour in polynomial time.
	\label{lem:2approx}        
\end{theorem}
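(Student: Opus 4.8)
The plan is to establish two facts about the digraph $D_b$ returned by \algTwoApx{}: first, that it is balanced and connected, hence Eulerian and therefore corresponds to a feasible coverage tour; and second, that $c(D_b)\le 2c^{*}$. The cost bound is where the approximation ratio lives, and since the accounting identities \eqref{eqn:cstar}--\eqref{eqn:cineq} for the optimal flow have already been set up, it will reduce to a per-arc estimate of the price of disambiguating the arcs in $A_u$.

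For feasibility I would start from Theorem~\ref{thm:lp}: the optimal flow satisfies the balance constraints~\eqref{eqn:lp:symmetry}, so the (possibly fractional) LP solution is balanced at every vertex, and the only arcs left undirected are those of $A_u$, with $\rv a=\flow{a'}=1$. Each operation \algTwoApx{} performs is balance-preserving relative to this balanced solution: orienting a cycle of $E_u$ coherently adds exactly one incoming and one outgoing arc at every vertex it visits, while committing an isolated ambiguous edge to direction $a$ introduces imbalance $+1$ at $t(a)$ and $-1$ at $h(a)$ that is cancelled precisely by the added shortest deadheading path $p$ from $h(a)$ to $t(a)$ (and symmetrically for $\bar a$ and $\bar p$). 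Thus $D_b$ is balanced. Because every required edge is now serviced and $G_r=(V_r,E_r)$ is connected, the underlying undirected graph of $D_b$ contains the connected graph $G_r$ together with deadheading paths reaching any auxiliary vertices, so $D_b$ is connected; a balanced connected digraph is Eulerian, and an Euler diwalk yields a valid coverage tour as in Lemma~\ref{lem:connectivity}.

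For the cost bound I would split the disambiguation cost into a cycle part and a path part. By optimality of the flow, a cycle of $E_u$ costs the same in either orientation and requires no deadheading, so its contribution equals $\frac{1}{2}\big(\scost{A_u^{\mathrm{cyc}}}+\scost{\bar{A}_u^{\mathrm{cyc}}}\big)$. For each isolated ambiguous arc the algorithm pays $\min\!\big(\scost{a}+\dcost{p},\,\scost{\bar a}+\dcost{\bar p}\big)$; since a shortest deadheading path is no more expensive than the single opposing arc and service costs dominate deadhead costs, $\dcost{p}\le\dcost{\bar a}\le\scost{\bar a}$ and $\dcost{\bar p}\le\dcost{a}\le\scost{a}$, so each of the two options, and hence the minimum, is at most $\scost{a}+\scost{\bar a}$. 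Summing over $A_u$, the total disambiguation cost is at most $\scost{A_u}+\scost{\bar{A}_u}$, and therefore, using $c(A_d)\ge 0$ and \eqref{eqn:cineq},
\[
c(D_b)\;\le\;c(A_d)+\scost{A_u}+\scost{\bar{A}_u}\;\le\;2c^{*}-c(A_d)\;\le\;2c^{*}.
\]
Polynomiality follows because \algLP{} runs in minimum-cost-flow time, cycle detection in $(V,E_u)$ is linear, and the required shortest deadheading paths are computable in polynomial time.

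I expect the feasibility argument, rather than the cost bound, to demand the most care: one must check that each rounding step is genuinely balance-preserving relative to the fractional LP solution and, in particular, that connectivity of $D_b$ is inherited from connectivity of $G_r$ (this is exactly the hypothesis that distinguishes this case from the general disconnected one, which needs the ATSP subroutine). Once feasibility is in hand, the approximation ratio is essentially a two-line consequence of the already-derived \eqref{eqn:cineq} together with the per-arc inequality $\min(\scost{a}+\dcost{p},\scost{\bar a}+\dcost{\bar p})\le\scost{a}+\scost{\bar a}$.
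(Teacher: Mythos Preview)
Your proposal is correct and follows essentially the same route as the paper: you use the same accounting identity \eqref{eqn:cineq} and the same per-arc estimate (the chosen option is bounded by $\scost{a}+\dcost{\bar a}\le \scost{a}+\scost{\bar a}$) to reach $c(D_b)\le c(A_d)+\scost{A_u}+\scost{\bar A_u}\le 2c^{*}$, and you invoke connectivity of $G_r$ for feasibility exactly as the paper does. Your treatment is slightly more explicit than the paper's---you separately account for cycle arcs in $A_u$ and spell out why each algorithmic step is balance-preserving---but these are elaborations of, not departures from, the paper's argument.
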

\begin{proof}
	Let $A_s$ be the set of arcs corresponding to  $A_u$ with final service directions as oriented by the algorithm \algTwoApx{}.
	The total cost of the solution digraph $D_b=(V,A_b)$ generated by the algorithm is:
	\begin{equation*}
		c(A_b) = c(A_d) + \scost{A_s} + \sum_{a\in A_s}\dcost{h(a), t(a)}
		\leq c(A_d) + \scost{A_u} + \dcost{\bar A_u}
	\end{equation*}
	Note that the inequality is true because we selected the service and deadheading directions to minimize the sum of the costs for individual arcs in $A_u$.

	Furthermore, $\dcost{a} \leq \scost{a}$ for $a \in \bar A_u$ because the deadheading cost is assumed to be no greater than the corresponding service cost. Hence, 
	\begin{align}
		c(A_b) &\leq c(A_d) + \scost{{A}_u} + \scost{\bar A_u}\label{eqn:cb}
	\end{align}
	Combining \eqref{eqn:cb} with \eqref{eqn:cineq}:
	\begin{align*}
		c(A_b)\leq 2c^* - c(A_d)\leq 2c^*
	\end{align*}
	As the required graph $G_r$ is connected, the solution digraph $D_b$ is also connected.
	The digraph $D_b$ is also balanced, as discussed previously.
	Hence, a coverage tour can be generated by computing an Eulerian diwalk on $D_b$ with the same cost as that of $A_b$.
	Thus, we obtain a coverage tour of cost at most twice the cost of the optimal tour.

	The complexity of the algorithm is determined by the algorithm \algLP{}, which can be solved in $\mathcal O\big((m\log n)(m +n\log n)\big)$ time, where $m=|E|$ and $n=|V|$.
	Depending on the structure of the instance, one may need to compute the shortest deadheading paths between all pairs of vertices.
	This can be done using the Floyd-Warshall algorithm in $\mathcal{O}(n^3)$ computation time, see, e.g., \cite{DasguptaPV06book}.
\end{proof}

\subsection{General Required Graph}
We now consider input graphs for which the required graph $G_r$, induced by the required edges, may have multiple connected components.
For such graphs, algorithm \algTwoApx{} may output a disconnected digraph even though the individual connected components are Eulerian.
For the graph given in \fgref{fig:input}, with the flow digraph given in \fgref{fig:flow}, the output of algorithm \algTwoApx{} is shown in \fgref{fig:2opt}.
Note that the digraph has multiple connected components even though the individual components are Eulerian.
\begin{figure}[ht]
	\centering
\begin{center}
	\begin{tikzpicture}[scale=0.6]
		\clip (-6.2,-5.2) rectangle (5, 3.2);   
		\node[place] (1) at (0,-5)				{};
		\node[place] (2) at (1.67,-2.5)		{};
		\node[place] (3) at (-1.67,-2.5)	{};
		\node[place] (4) at (4.33, 2.5)		{};
		\node[place] (5) at (1.33, 2.69)	{};
		\node[place] (6) at (3,-0.19)			{};
		\node[place] (7) at (-4.33,2.5)		{};
		\node[place] (8) at (-3,-0.19)		{};
		\node[place] (9) at (-1.33,2.69)	{};

		\draw[mid2arc, mBlue] (1) -- (2);
		\draw[mid2arc, mBlue] (2) -- (3);
		\draw[mid2arc, mBlue] (3) -- (1);

		\draw[mid2arc, mBlue] (4) -- (5);
		\draw[mid2arc, mBlue] (5) -- (6);
		\draw[mid2arc, mBlue] (6) -- (4);

		\draw[mid2arc, mBlue] (7) -- (8);
		\draw[mid2arc, mBlue] (8) -- (9);
		\draw[mid2arc, mBlue] (9) -- (7);

		\draw[mid2arc, mDarkRed] (5) to [bend right] (9);
		\draw[mid2arc, mGreen, dashed] (9) to [bend right] (5);
	\end{tikzpicture}
\end{center}
	\caption{Digraph computed by the algorithm \algTwoApx{} for the graph in \fgref{fig:input} with the flow digraph shown in \fgref{fig:flow}.
		Note that the digraph has multiple connected components, each of which is Eulerian.
	\label{fig:2opt}}
\end{figure}
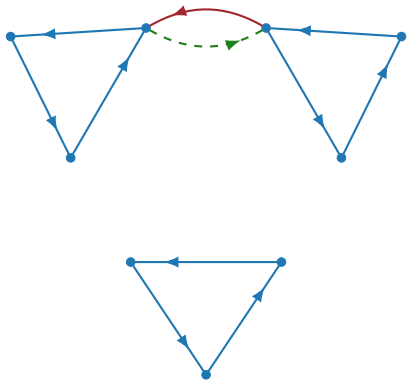

Our approach is to generate a tour through the connected components by solving the ATSP problem on an auxiliary graph whose vertices correspond to the components.
We combine the tour with the arcs generated in each component.
We develop an $\left(\alpha(C) + \beta\right)$-approximation algorithm where $C$ is the number of connected components in $G_r$ and $\beta$ is the approximation factor for the single robot line coverage problem on graphs with a connected required graph.
The $\alpha$ approximation factor depends on the approximation algorithm for the asymmetric traveling salesperson problem (ATSP), and a $\beta$ of 2 was discussed in the previous subsection using the \algTwoApx{} algorithm.
Constant factor approximation algorithms for ATSP were recently given by Svensson et al.~\cite{SvenssonTV18ATSP} and by Traub and Vygen~\cite{TraubV20ATSP}.

The output digraph $D_b$ of the \algTwoApx{} algorithm is processed to find strongly connected components.
Note that the number of strongly connected components in $D_b$ will be no greater than the number of connected components~$C$ of the required graph $G_r$ of $G$.
We then create an auxiliary graph $G_0=(V_0, E_0)$ with $V_0\subseteq V_r$ consisting of one arbitrary vertex from each connected component in $D_b$ such that a vertex $v\in V_0$ corresponds to a required edge.
The proof for the approximation factor, as we will see in Theorem~\ref{thm:lineCoverage} below, is agnostic to the choice of the required vertex in each connected component to form the vertex set~$V_0$.
We add an edge for each pair of vertices in $V_0$ to the edge set $E_0$.
For each edge $e\in E_0$, we assign two weights corresponding to the shortest deadhead cost path between the vertices of the edge in the two directions.
This makes the graph $G_0$ complete with asymmetric edge costs.%
An ATSP algorithm is then used to find a tour connecting all the vertices in $G_0$.
The arcs in the ATSP tour are then added to the disconnected diwalk generated from the \algTwoApx{} algorithm to obtain a connected coverage tour.
In the following theorem, we prove the approximation factor for our algorithm for general graphs.
The key ideas for the proof of the theorem are motivated by \cite{vanBevernKS17}.
\begin{theorem}
	The single robot line coverage problem can be solved in polynomial time with an approximation factor of $\alpha(C) + \beta$, where $\alpha(C)$ is the approximation factor for an algorithm for the asymmetric traveling salesperson problem with $C$ vertices, and $\beta$ is the approximation factor for line coverage on graphs with a connected required graph.
	\label{thm:lineCoverage}
\end{theorem}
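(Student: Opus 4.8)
The plan is to charge the two parts of the final solution---servicing within the components and connecting the components---separately against the optimal coverage tour cost $c^*$, and then add the bounds. Let $D_b=(V,A_b)$ be the balanced digraph returned by the connected-case $\beta$-approximation algorithm (concretely \algTwoApx{}, so $\beta=2$) run on the full graph $G$. The first thing to establish is that the cost estimate $c(A_b)\le\beta c^*$ continues to hold even though $G_r$ is now disconnected. This is immediate from the proof of Theorem~\ref{lem:2approx}: inequalities \eqref{eqn:cstar}--\eqref{eqn:cb} invoke connectivity of $G_r$ nowhere and rely only on the linear-programming lower bound $z^*\le c^*$ for SRLC-LP, which remains valid for disconnected required graphs since the connectivity constraints were dropped from the relaxation. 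Connectivity was used in Theorem~\ref{lem:2approx} only to conclude that $D_b$ forms a single Eulerian component; without it the cost estimate is untouched, so $c(A_b)\le\beta c^*$, while the strongly connected components of $D_b$ are each Eulerian and their number is at most $C$.

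Next I would lower-bound the auxiliary asymmetric TSP instance on $G_0=(V_0,E_0)$ by $c^*$. Because every vertex of $V_0$ is an endpoint of a required edge, the optimal coverage tour $\tau^*$ visits each vertex of $V_0$. Listing the representatives in the cyclic order in which $\tau^*$ encounters them yields a closed walk on the complete graph $G_0$; the segment of $\tau^*$ running between two consecutive representatives $u$ and $v$ has cost at least the shortest deadhead-path cost $\dcost{u,v}$ labeling edge $(u,v)$ in $G_0$, since on every edge the deadhead cost is at most the service cost. These segments partition $\tau^*$, so summing shows the induced closed walk has $G_0$-cost at most $c^*$. The edge costs of $G_0$ are shortest-path costs and hence obey the asymmetric triangle inequality, so shortcutting the closed walk to a Hamiltonian tour cannot increase its cost; therefore the optimal ATSP tour on $G_0$ costs at most $c^*$.

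Running the $\alpha$-approximation ATSP algorithm on $G_0$ then produces a tour $\tau_0$ with $c(\tau_0)\le\alpha(|V_0|)\cdot c^*\le\alpha(C)\cdot c^*$, using $|V_0|\le C$ and monotonicity of $\alpha$. I would finish by expanding each edge of $\tau_0$ into its corresponding shortest deadhead path in $G$ and adding these arcs to $A_b$. The resulting digraph is balanced---$A_b$ is balanced and $\tau_0$ realized as a closed walk of deadhead paths is balanced at every vertex---and it is strongly connected, because $\tau_0$ knits the representatives of all the Eulerian components of $D_b$ into one cycle while each such component is internally strongly connected. Hence the union is Eulerian, and any Eulerian diwalk on it is a feasible coverage tour of cost $c(A_b)+c(\tau_0)\le\beta c^*+\alpha(C)c^*=(\alpha(C)+\beta)c^*$. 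Polynomial running time follows from the polynomiality of \algTwoApx{}, the all-pairs shortest deadhead paths (Floyd--Warshall), and the ATSP approximation, which runs in time $t(C)$.

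The step I expect to be most delicate is the lower bound on the ATSP optimum: one must argue carefully that the inter-representative segments of $\tau^*$ truly partition the tour and that each dominates the corresponding $G_0$ edge cost, and that the shortcutting step is legitimate under the \emph{asymmetric} triangle inequality. A secondary point needing attention is verifying that balance is preserved once the abstract ATSP arcs are expanded into concrete deadhead paths passing through intermediate vertices, rather than treated as single arcs between representatives.
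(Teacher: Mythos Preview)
Your proposal is correct and follows essentially the same two-part charging argument as the paper: bound $c(D_b)\le\beta c^*$ via the LP relaxation (which, as you observe, ignores connectivity) and bound the ATSP tour by $\alpha(C)c^*$ since $\tau^*$ visits every representative in $V_0$. Your write-up is in fact more careful than the paper's on two points the paper leaves implicit: the explicit justification that the optimal ATSP on $G_0$ costs at most $c^*$ (via the segment-partition and shortcutting argument), and the verification that the final union is balanced and strongly connected after expanding the ATSP arcs into deadhead paths.
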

\begin{proof}
	Let~$\tau^*$ be the optimal coverage tour and digraph~$D_b$ be the output of the \algTwoApx{} algorithm.
	Note that~$D_b$ may contain multiple strongly connected components.
	However, the number of strongly connected components in the digraph~$D_b$ will be no more than the number of connected components~$C$ in the required graph~$G_r$.
	As the relaxation SRLC-LP does not consider the connectivity constraints, the solution~$D_b$ is an approximation result to a relaxation of the original problem.
	Hence, $c(D_b) \leq \beta\, c(\tau^*)$, where $\beta = 2$ for the \algTwoApx{} algorithm.

	Let $V_0\subseteq V_r$ be a set of vertices with one arbitrary vertex from each connected component in $D_b$.
	Then $|V_0| \leq C$.
	Any coverage tour must visit each vertex in $V_0$ because each vertex in $V_0$ lies on a required edge.
	For the graph $G_0$, let $T^*$ be the optimal ATSP tour, and $T$ be the ATSP tour returned by the $\alpha(C)$-approximation algorithm.
	Then $c(T) \leq \alpha(C)\, c(T^*)\leq \alpha(C)\, c(\tau^*)$.

	Let $\tau$ be the final coverage tour obtained by adding the arcs from $T$ to the digraph $D_b$ and generating an Eulerian tour.
	Then $c\left(\tau\right) = c\left(D_b\right) + c\left(T\right) \leq  \beta\; c(\tau^*) + \alpha(C)\, c(\tau^*)=\left(\alpha(C) + \beta\right)\; c(\tau^*)$.
\end{proof}

\begin{corollary}
	Combining Theorem~\ref{lem:2approx} and Theorem~\ref{thm:lineCoverage}, and noting that the number of connected components $C$ is usually small in practice, as stated by~\cite{vanBevernKS17}, we observe:
	\begin{enumerate}
		\item The single robot line coverage problem has an $\alpha(C) + 2$ approximation factor.
			This also improves the previously best-known approximation result of $\alpha(C) + 3$ for the asymmetric rural postman problem given by~\cite{vanBevernKS17}.
		\item If $C\in \mathcal O(\log n)$, an $\mathcal O(C^2 2^C)$ dynamic programming algorithm gives the optimal ATSP solution in polynomial time, giving a 3-approximation algorithm for the single robot line coverage problem.
	\end{enumerate}
\end{corollary}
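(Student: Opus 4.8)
The plan is to assemble the result by direct substitution into the two preceding theorems, treating the two items separately. For the first item, I would invoke Theorem~\ref{lem:2approx}, which establishes that \algTwoApx{} attains an approximation factor of $\beta = 2$ whenever the required graph is connected. Substituting $\beta = 2$ into the general guarantee $\alpha(C) + \beta$ of Theorem~\ref{thm:lineCoverage} then yields an $\alpha(C) + 2$ approximation for the single robot line coverage problem on an arbitrary required graph. To justify the comparison with prior work, I would recall from Section~\ref{sc:related_work} that every asymmetric RPP instance is a special case of single robot line coverage (obtained by setting each required edge's deadhead cost equal to its service cost), so the $\alpha(C) + 2$ bound transfers verbatim to the asymmetric RPP and strictly improves the $\alpha(C) + 3$ bound of van Bevern et al.~\cite{vanBevernKS17}.

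For the second item, the key step is to supply an exact, polynomial-time ATSP subroutine in the regime $C \in \mathcal{O}(\log n)$. I would use the Held--Karp dynamic program~\cite{HeldK62,Bellman62}, which solves ATSP optimally on the $C$-vertex auxiliary graph $G_0$ in $\mathcal{O}(C^2 2^C)$ time. The crux is the elementary observation that $2^{\mathcal{O}(\log n)} = n^{\mathcal{O}(1)}$ while $C^2 = \mathcal{O}(\log^2 n)$, so this running time is polynomial in $n$. Because the subroutine returns an \emph{optimal} ATSP tour, we have $\alpha(C) = 1$; substituting $\alpha(C) = 1$ and $\beta = 2$ into Theorem~\ref{thm:lineCoverage} gives a $3$-approximation. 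All remaining stages of the overall algorithm---the min-cost flow computation, \algTwoApx{}, the construction of $G_0$, and the splicing of the ATSP tour into the balanced digraph---are already polynomial by Theorems~\ref{lem:2approx} and~\ref{thm:lineCoverage}, so the complete procedure runs in polynomial time.

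I expect the bookkeeping to be routine; the only point requiring genuine care is the runtime argument in the $\mathcal{O}(\log n)$ regime, where I must confirm that the factor $2^C$, exponential in the number of components, collapses to a polynomial in $n$ once $C$ is logarithmically bounded. Everything else follows by direct substitution of the previously established approximation and complexity guarantees.
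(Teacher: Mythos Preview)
Your proposal is correct and mirrors the paper's own reasoning: the corollary is stated without a separate proof, as it follows immediately by substituting $\beta = 2$ from Theorem~\ref{lem:2approx} into the $\alpha(C) + \beta$ bound of Theorem~\ref{thm:lineCoverage}, and by invoking the Held--Karp dynamic program to get $\alpha(C) = 1$ when $C \in \mathcal{O}(\log n)$. Your additional runtime bookkeeping (that $2^{\mathcal{O}(\log n)}$ is polynomial and that the remaining stages are already polynomial) is exactly the routine verification the paper leaves implicit.
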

\begin{remark}
	For the special case when we have two connected components, we do not have an ATSP tour.
	In such a scenario, we can duplicate one of the vertices $v\in V_0$ and add a zero-cost edge from the duplicated vertex to $v$.
	The edge set $E_0$ will then be created on these three vertices.
\end{remark}

The final coverage tour for the example graph, given in \fgref{fig:input}, is shown in \fgref{fig:finalSol}.
For this example, the cost of the tour obtained using the approximation algorithm is optimal.
\begin{figure}[ht]
	\centering
\begin{center}
	\begin{tikzpicture}[scale=0.6]
		\small
		\node[place] (1) at (0,-5)				{};
		\node[place, fill=none, thick, mDarkRed] (2) at (1.67,-2.5)		{};
		\node[place] (3) at (-1.67,-2.5)	{};
		\node[place] (4) at (4.33, 2.5)		{};
		\node[place] (5) at (1.33, 2.69)	{};
		\node[place, fill=none, thick, mDarkRed] (6) at (3,-0.19)			{};
		\node[place] (7) at (-4.33,2.5)		{};
		\node[place] (8) at (-3,-0.19)		{};
		\node[place] (9) at (-1.33,2.69)	{};

		\tikzstyle{scEnd}=[near end, fill=none, mDarkRed]
		\tikzstyle{scSt}=[near start, fill=none, black]
		\tikzstyle{scMid}=[midway, fill=none, black]

		\draw[mid2arc, mBlue] (1) -- (2) node [scMid, right, mBlue] {$4$};
		\draw[mid2arc, mBlue] (2) -- (3) node [scMid, above, mBlue] {$4$};
		\draw[mid2arc, mBlue] (3) -- (1) node [scMid, left, mBlue] {$4$};

		\draw[mid2arc, mBlue] (4) -- (5) node [scMid, above, mBlue] {$4$};
		\draw[mid2arc, mBlue] (5) -- (6) node [scMid, left, mBlue] {$4$};
		\draw[mid2arc, mBlue] (6) -- (4) node [scMid, right, mBlue] {$4$};

		\draw[mid2arc, mBlue] (7) -- (8) node [scMid, left, mBlue] {$4$};
		\draw[mid2arc, mBlue] (8) -- (9) node [scMid, right, mBlue] {$4$};
		\draw[mid2arc, mBlue] (9) -- (7) node [scMid, above, mBlue] {$4$};

		\path[mBlue] (5)  edge [mid2arc, bend right]  node[scMid, above, mBlue]  {$2$} (9);
		\path[mGreen] (9)  edge [mid2arc, dashed, bend right]  node[scMid, below, mGreen]  {$2$} (5);
		\path[mGreen] (2)  edge [mid2arc, dashed, bend right=40]  node[scMid, right, mGreen]  {$1$} (6);
		\path[mGreen] (6)  edge [mid2arc, dashed, bend right=40]  node[scMid, left, mGreen]  {$1$} (2);
	\end{tikzpicture}
\end{center}
	\caption{%
		The final coverage tour, in the form of an Eulerian digraph, obtained for the input graph in \fgref{fig:input}.
		The algorithm generates an ATSP tour on the solution from \algTwoApx{}, shown in \fgref{fig:2opt}.
		For each connected component, an arbitrary vertex is selected, shown as red unfilled circles.
		As there are two connected components, the algorithm creates an auxiliary vertex by duplicating one of the selected vertices.
		A different choice for these vertices can result in a different coverage tour, potentially of a different cost. 
		The total cost of the coverage tour is 42, which is the same as the optimal cost.
	\label{fig:finalSol}}
\end{figure}
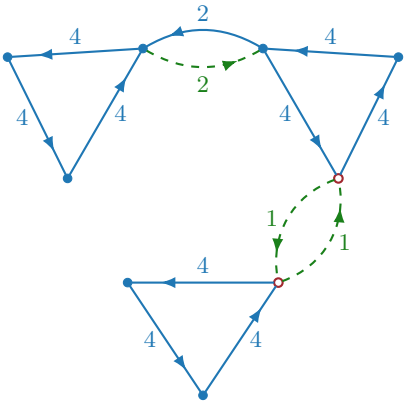

\subsection{Improvements and Extensions}
We now provide two heuristics that improve the quality of the solutions generated by the algorithms discussed in the paper.
We explore the use of the generalized traveling salesperson problem (GTSP) instead of the ATSP.
We discuss the practical aspects of implementing the algorithms.
Finally, we discuss the application of our algorithm to the line coverage problem with multiple robots.

{\bf Short-circuiting:}
Our first heuristic is to improve the final coverage tour by replacing a sequence of consecutive deadheading edges with the shortest path from the first vertex of the sequence to the last vertex of the sequence.

{\bf 2-opt heuristic:}
The quality of the solution can be further improved by employing a simple 2-change local neighborhood search, also known as 2-opt, similar to that for the TSP, see, e.g., \cite{AlgorithmIlluminated4}.
Since it is an {\em anytime} heuristic, i.e., it always maintains a feasible solution, we can ensure that the total number of constant-time local moves is restricted to $n^3$ to maintain the $\mathcal O(n^3)$ running time of the algorithm.
We discuss the computational costs and the improvement in the solutions on a dataset of 50 road networks in \scref{sc:sim}.

{\bf GTSP based algorithm:}
In the algorithm for the case of general required graphs, an arbitrary vertex was selected for each connected component to create an auxiliary graph $G_0$ required as input to the ATSP algorithm.
Since the choice of the vertices may affect the cost of the tour, an alternative technique is to formulate the problem as a generalized traveling salesperson problem (GTSP).
Each connected component forms a cluster, and the vertices in the connected component form nodes in the cluster.
The cost between any pair of nodes corresponds to the shortest deadheading path between the nodes.
The GTSP is then to compute a minimum cost tour such that at least one of the nodes in each cluster is visited.
A GTSP instance can be solved by converting it to an instance of the ATSP with $n$ vertices, where $n$ is the total number of nodes in the GTSP instance, as given by \cite{NoonB93}.
Such a solution has an approximation factor of $\alpha(n)+2$, where $\alpha(n)$ is the approximation factor of an algorithm for the ATSP on a graph with $n$ vertices.
In principle, a GTSP based algorithm can provide better solutions as we no longer select an arbitrary vertex from each connected component.
However, in practice, the GTSP based algorithm may require a longer computation time and, therefore, is not always suitable for robotics applications that require rapid computation of the coverage tours for the robots.

{\bf Practical considerations:}
The $(22+\epsilon)$-approximation algorithm for the ATSP given by Traub and Vygen~\cite{TraubV20ATSP} is not very practical for robotics applications because of its running time and challenging implementation.
However, it is very relevant for providing a constant-factor approximation guarantee.
As the number of connected components is usually very small, the dynamic programming based algorithm of Held and Karp~\cite{HeldK62} works well in practice.
The algorithm runs in $\mathcal{O}(n^2 2^n)$, where $n$ is the number of vertices.
Techniques for bitwise operations by Knuth~\cite{Knuth2011TACP} are used to run through the $2^n$ combinations.
We also use a state-of-the-art solver for ATSP from Helsgaun~\cite{Helsgaun00} for instances with a larger number of connected components.
The results are discussed in Section~\ref{sc:sim}.

{\bf Multiple robots:}
Our approximation algorithms have implications for the algorithms for arc routing problems with multiple robots.
In the capacitated arc routing problem (CARP), the edges of the graph have a demand associated with them, and the robots have a capacity $Q$, see, e.g., \cite{CorberanL14}.
The task is to find a set of tours for a team of $k$ robots such that the total demand for any of the robots does not exceed its capacity $Q$.
The objective function is the total cost of all the tours for the robots.
One of the strategies for the CARP and its variants is first to find a large tour ignoring the demand constraints by employing algorithms for the single robot problem.
Then the tour is split into smaller components that respect the capacity constraints \cite{Wohlk08}.
Thus, any improvements to the algorithms for the single robot version improve the quality of the solution for the version with multiple robots.
A tour-splitting algorithm was given by van Bevern et al.~\cite{vanBevernKS17} for the CARP on mixed and windy graphs.
The algorithm has an approximation factor of $8\alpha(C+1) +27$ in general and an approximation factor of $35$ when the number of connected components $C$ is small, i.e., $C\in \log(n)$.
Our results immediately improve these approximation factors to $8\alpha(C+1) + 19$ and $27$, respectively.

\section{Simulations and Experiments}
\label{sc:sim}

\newcommand{\BtwoA}{$\beta2$-ATSP}
\newcommand{\BtwoAtwo}{$\beta$2-ATSP-2opt}
\newcommand{\BthreeA}{$\beta$3-ATSP}
\newcommand{\BtwoG}{$\beta$2-GTSP}

We now establish the efficiency and efficacy of the presented algorithms for the single robot line coverage problem through simulations and experiments.
The algorithms are implemented in \texttt{C++} and executed on a desktop computer with an Intel Core i9-7980XE processor.
We take advantage of the advances in linear programming solvers and use Gurobi~\cite{Gurobi} to obtain solutions rapidly for the minimum cost flow problem.
Smaller instances of the ATSP are solved using the dynamic programming algorithm given by Held and Karp~\cite{HeldK62}, while larger instances are solved using the LKH solver from Helsgaun~\cite{Helsgaun00}.
The GLKH solver by Helsgaun~\cite{Helsgaun15} is used to solve the GTSP instances.
The algorithm for the RPP on mixed and windy graphs by van Bevern et al.~\cite{vanBevernKS17} is also adapted for the single robot line coverage problem for comparison with the algorithms presented in this paper.
The short-circuiting based tour improvement routine is applied to the solutions generated from each algorithm as it replaces consecutive deadheadings with the shortest deadheading paths.
The SRLC-ILP formulation is solved using Gurobi and executed on a cluster node with 48 cores.
The solutions from the approximation algorithm developed in this paper are used to provide an initial solution to the ILP formulation, which helps in upper-bounding the branch-and-bound algorithms and obtaining solutions faster.
Solving an ILP to obtain an optimal solution can take a long time; it took around 20 hours for one of the instances with 635 vertices and 730 required edges.

\subsection{Simulation Results on Road Networks}
An important application of the single robot line coverage problem is the mapping, inspection, and surveillance of road networks.
We generated a dataset\footnote{The dataset and a tool for extracting road network data are available at: \\ \url{https://github.com/UNCCharlotte-CS-Robotics/LineCoverage-dataset}.} consisting of road networks from the $50$ most populous cities in the world.
These road networks differ considerably in structure from one another, and thus allow testing of the algorithms on a variety of graphs.
The data was obtained from OpenStreetMap~\cite{OpenStreetMap} by selecting a bounding polygon of 0.5 to 2.0\SIkmsqr{} area using a web-based tool.
The dataset consists of road networks with 75 to 635 vertices and 93 to 730 required edges.
As UAVs can fly from one location to another, non-required edges are added between each pair of vertices, resulting in tens of thousands of non-required edges.
In the case of no-fly zones, the corresponding non-required edges can be pruned in practice.
The servicing and deadheading speeds are set to 7\SIvel{} and 10\SIvel{}, respectively.
A wind of 2\SIvel{} is simulated from the south-west direction, i.e., $\pi/4$ radians from the horizontal axis.
These parameters are set based on real-world experiments discussed in the following subsection.

Denote the speed of the UAV by~$v$ and the wind speed by~$w$.
For the traversal of an edge from a tail vertex $t$ to a head vertex $h$, let the travel vector $\mv{t}$ denote the vector from $t$ to $h$.
Let $\phi$ be the angle between the wind vector and the travel vector $\mv t$ for an edge.
Then the effective speed of the UAV is given by:
\begin{align}
	v_{\text{eff}} = w \cos{\phi} + \sqrt{v^2 - w^2 \sin^2{\phi}}\label{eqn:traveltime}
\end{align}
The cost function is defined as the time taken for the UAV to traverse an edge:
\begin{align}
	c(t, h) = \frac{\norm{\mv t}_2}{v_\text{eff}}\label{eqn:traveltimea}
\end{align}
Here, $\norm{\mv t}_2$ is the Euclidean distance from $t$ to $h$.
The speed~$v$ of the UAV is set to the servicing or deadheading speed according to its travel mode.
Note that the cost function is asymmetric due to wind.

We use the following notation for brevity:
\begin{enumerate}
	\item \BtwoA{}: Algorithm \algTwoApx{} along with the dynamic programming algorithm for the ATSP.
	\item \BtwoG{}: Algorithm \algTwoApx{} along with the GLKH solver for the generalized ATSP.
	\item \BtwoAtwo{}: Algorithm \algTwoApx{} along with the dynamic programming algorithm for the ATSP and 2-opt improvement heuristic.
	\item \BthreeA{}: Algorithm given by van Bevern et al.~\cite{vanBevernKS17} along with the dynamic programming algorithm for the ATSP.
\end{enumerate}

\fgref{srlc:fig:sample} shows four of the fifty road networks in the dataset, along with the coverage tours obtained from the ILP formulation and the \BtwoAtwo{} algorithm presented in the paper.
\begin{figure}[htbp]
	\centering
	\includegraphics[width=0.94\textwidth]{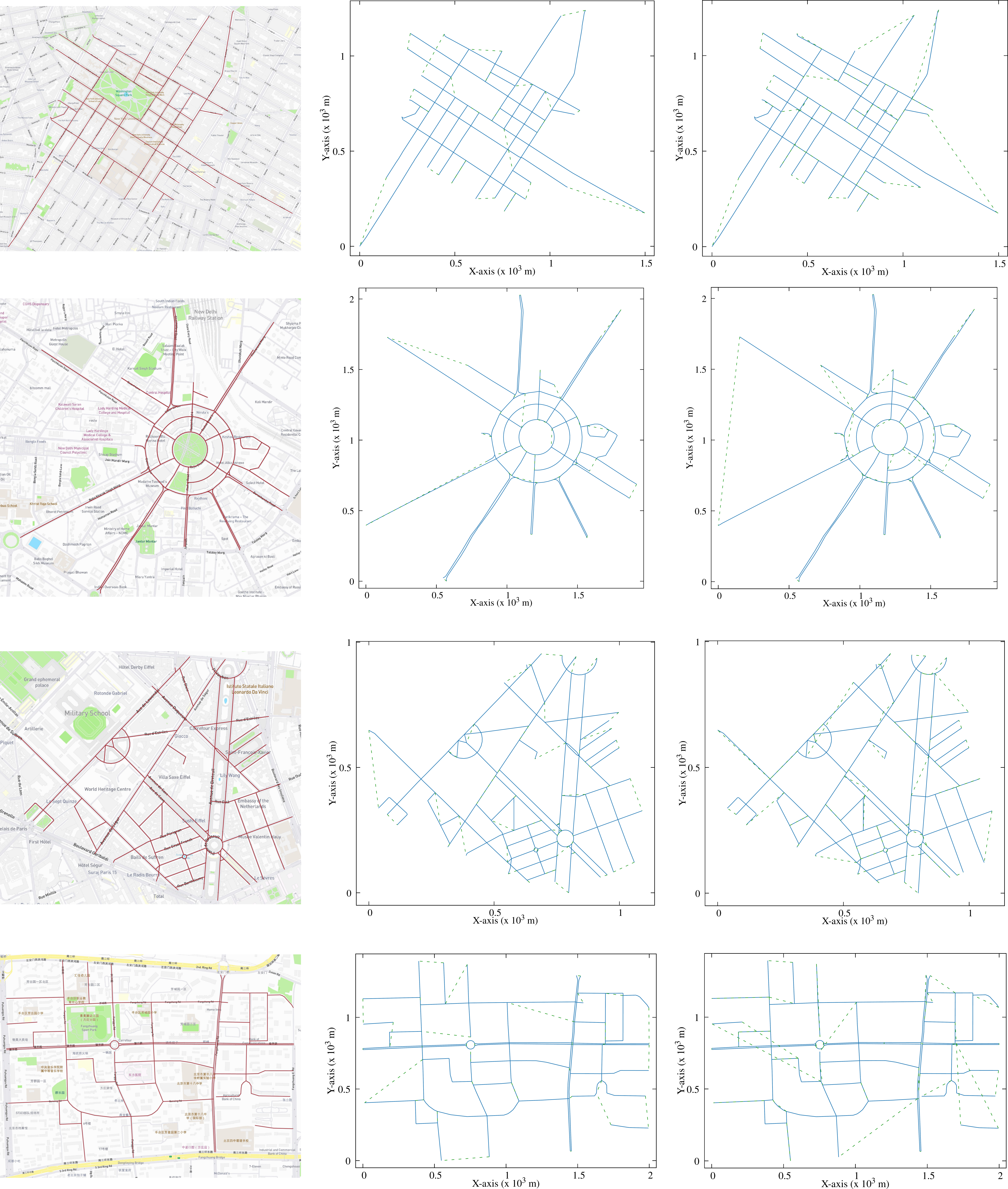}
	\caption{Four of the fifty sample road networks obtained from the most populous cities: The first column is the map representing the input graph, the second column is the optimal solution obtained using the SRLC-ILP formulation, and the third column is the final result of the algorithm (\BtwoAtwo{}) presented in this paper.
		The road networks, from top to bottom, are from (a) New York, (b) Delhi, (c) Paris, and (d) Beijing.
		Only the required edges representing the road network are shown on the map.
		There is a non-required edge for each pair of vertices in the graph.
		For example, the New York dataset has one connected component, $379$ vertices, $402$ required edges, and $71,631$ non-required edges.
		The cost of the optimal coverage tour (middle column) generated by the ILP formulation is $2018.69$, whereas the cost of the solution (right column) computed by the \BtwoAtwo{} algorithm is $2199.82$.
		The solid blue lines represent servicing, while the dashed green lines represent deadheading travel.
	\label{srlc:fig:sample}}
\end{figure}

A cost comparison of the solutions obtained from the \BtwoA{}, the \BtwoAtwo{}, and the \BthreeA{} algorithms is shown in \fgref{fig:cost}.
The $y$-axis shows the percentage difference in cost with respect to the optimal solution, i.e., $\frac{c-c^*}{c^*}\times 100$, where $c$ is the cost of the coverage tour given by the corresponding algorithm, and $c^*$ is the optimal solution obtained using the ILP formulation.
The solutions obtained by our final algorithm \BtwoAtwo{} are within $10$\% of the optimal solution.
Algorithm \algTwoApx{} with the DP algorithm for ATSP, denoted by \BtwoA{}, generally performs better than the algorithm given by van Bevern et al.~\cite{vanBevernKS17}, denoted by \BthreeA{}.
There seems to be no perceptible trend in cost difference percentage with the increase in the instance size.
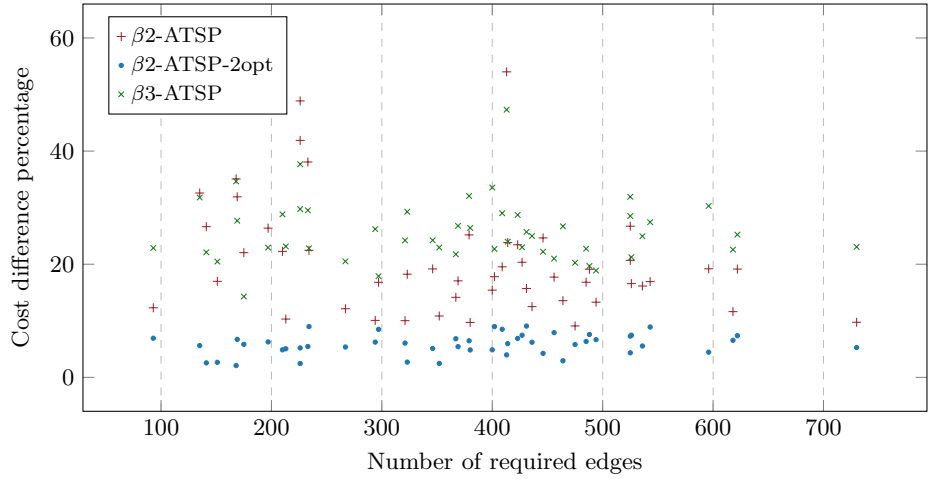
\begin{figure}[htbp]
	\centering
	\begin{tikzpicture}
	\small
	\begin{axis}[
		enlargelimits=true,
		xlabel = {Number of required edges},
		ylabel = {Cost difference percentage},
		legend pos=north west,
		legend cell align=left,
		legend style={font=\footnotesize},
		xmajorgrids=true,
		grid style=dashed,
		xtick={0,100,200,300,400,500,600,700,800},
		ymin=0,
		ymax=60,
		ylabel near ticks,
		width=0.8\textwidth,
		height=7cm
		]
		\addplot[
		only marks,
		color=mDarkRed,
		mark=+,
		mark size=1.7pt]
		table[x=m,y=b2a_cost, col sep=comma]
		{./graphics/results.txt};
		\addplot[
		only marks,
		color=mBlue,
		mark size=.8pt]
		table[x=m,y=b2a2_cost, col sep=comma]
		{./graphics/results.txt};
		\addplot[
		only marks,
		color=mGreen,
		mark=x,
		mark size=1.5pt]
		table[x=m,y=b3a_cost, col sep=comma]
		{./graphics/results.txt};
	\legend{\BtwoA{},\BtwoAtwo{},\BthreeA{}}
	\end{axis}
\end{tikzpicture}
	\caption[Cost comparison of the approximation algorithm with ILP formulation]{%
		Cost comparison of the algorithms for the $50$ road network dataset:
		The \BtwoA{} algorithm, shown by red plus $\color{mDarkRed}+$ markers, generally performs better than the \BthreeA{} algorithm given by van Bevern et al.~\cite{vanBevernKS17}, shown by green cross markers $\color{mGreen} \times$.
		The solutions obtained by the \BtwoAtwo{} algorithm, shown by blue dots $\color{mBlue}\bullet$, are consistently within $10$\% of the optimal.
	\label{fig:cost}}
\end{figure}
\begin{figure}[htbp]
	\centering
\begin{tikzpicture}
	\begin{axis}[
		enlargelimits=true,
		xlabel = {Number of required edges},
		ylabel = {Computation time ($s$)},
		legend pos=north west,
		legend cell align=left,
		legend style={font=\footnotesize},
		xmajorgrids=true,
		grid style=dashed,
		xtick={0,100,200,300,400,500,600,700,800},
		ymin=0,
		ymax=3,
		ylabel near ticks,
		width=0.8\textwidth,
		height=7cm
		]
		\addplot[
		only marks,
		color=mDarkRed,
		mark=+,
		mark size=1.7pt]
		table[x=m,y=b2a_time_s, col sep=comma]
		{./graphics/results.txt};
		\addplot[
		only marks,
		color=mBlue,
		mark size=.8pt]
		table[x=m,y=b2a2_time_s, col sep=comma]
		{./graphics/results.txt};
		\addplot[
		only marks,
		color=mGreen,
		mark=x,
		mark size=1.5pt]
		table[x=m,y=b3a_time_s, col sep=comma]
		{./graphics/results.txt};
	\legend{\BtwoA{},\BtwoAtwo{},\BthreeA{}}
	\end{axis}
\end{tikzpicture}
	\caption[Computation time comparison of various algorithms]{%
		Computation time comparison of various algorithms in the paper:
		The \BtwoA{} algorithm takes time comparable to the \BthreeA{} algorithm.
		The 2-opt heuristic improvement over the \BtwoA{} algorithm takes very small additional time for small instances and up to $2$\,s for larger instances.
		The running times are averaged over $100$ runs.
	\label{fig:time}}
\end{figure}
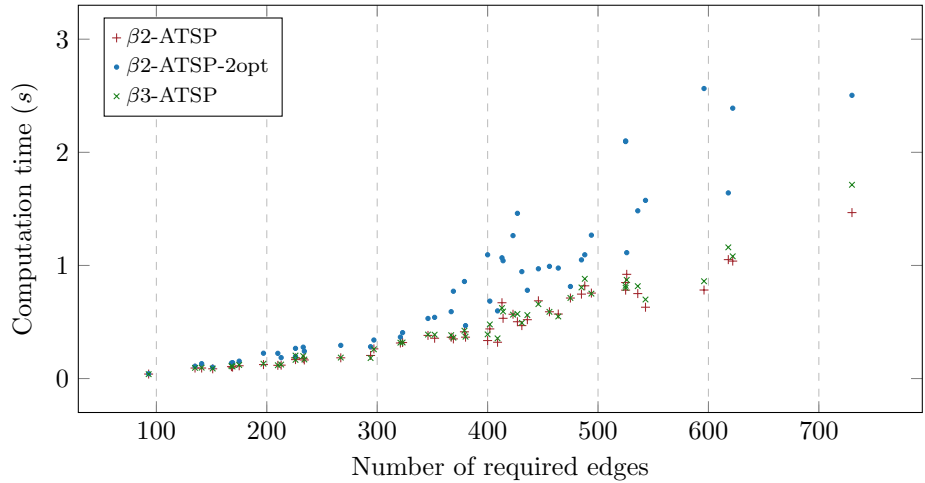

The computation times, shown in \fgref{fig:time}, were obtained by averaging over $100$ runs.
The algorithm \BtwoA{} is comparable in running time to \BthreeA{} while providing better solutions, in general.
For the \BtwoAtwo{} algorithm, additional time is spent to run the 2-opt heuristic, which runs very fast for smaller instances and takes up to an additional $2$\,s for some of the larger instances.
All $50$ instances were solved within $3$\,s with a mean time of 0.83\,s and a median time of 0.73\,s, using the \BtwoAtwo{} algorithm.

Road networks, for which the solution obtained by the \algTwoApx{} algorithm has more than one connected component, require the additional step of selecting the vertices to form the auxiliary graph $G_0$.
We select the first indexed required vertex in each connected component to form the auxiliary graph $G_0$.
Hence, the process is deterministic for a given input graph.
In general, one could randomly select a required vertex from each connected component.
Note that the approximation factor is not affected by the selection of the vertices in $G_0$, as shown in Theorem~\ref{thm:lineCoverage}.
The \BtwoG{} algorithm uses GTSP, and thus considers all vertices in each connected component.
\fgref{fig:atspvsgtsp} shows comparisons of costs and computation time for the \BtwoA{} and the \BtwoG{} algorithms.
The comparison is performed for the instances with at least three connected components in the required graph.
Using the GTSP gives better solutions in general, as the algorithm has the flexibility to select any vertex in each of the connected components.
In contrast, an arbitrary vertex is selected for each connected component for the \BtwoA{} algorithm.
In only one of the instances (Ahmedabad), the final coverage tour obtained using the DP algorithm for the ATSP resulted in slightly better results than that for using the GTSP, and this is because the ATSP based solution was more favorable for the short-circuiting routine and resulted in a better solution overall.
The \BtwoA{} algorithm is computationally much more efficient.

The simulation results\footnote{Results are available in the repository:\url{https://github.com/UNCCharlotte-CS-Robotics/LineCoverage-dataset}.} indicate that all the ATSP based algorithms are sufficiently fast.
With a small additional computational cost for the 2-opt heuristic, the \BtwoAtwo{} algorithm computes high-quality solutions. 

\begin{figure}[ht]
	\centering
	\begin{center}
	\begin{tabular}{r}
		\small
		\begin{tikzpicture}
			\pgfplotstableread[col sep=comma]{./graphics/atspvsgtsp.csv}{\agdata};
			\begin{axis}[
				enlargelimits=true,
				xlabel = {Number of connected components},
				ylabel = {Cost difference percentage},
				legend pos=north west,
				legend cell align=left,
				legend style={font=\footnotesize},
				xmajorgrids=true,
				grid style=dashed,
				width=7.5cm,
				xtick=data,
				xticklabels from table={\agdata}{num_cc},
				ylabel near ticks
				]
				\addplot[
					only marks,
					color=mDarkRed,
					mark=+,
					mark size=1.7pt]
					table[x expr=\coordindex,y=b2a_cost]{\agdata};
				\addplot[
					only marks,
					mark=diamond*,
					color=mSteelGray,
					mark size=1.2pt]
					table[x expr=\coordindex,y=b2g_cost]{\agdata};
				\legend{\BtwoA{},\BtwoG{}}
			\end{axis}
		\end{tikzpicture}\hspace{0.5cm}
		\begin{tikzpicture}
			\pgfplotstableread[col sep=comma]{./graphics/atspvsgtsp.csv}{\agdata};
			\begin{axis}[
				enlargelimits=true,
				xlabel = {Number of connected components},
				ylabel = {Computation time (s)},
				width=7.5cm,
				legend pos=north west,
				legend cell align=left,
				legend style={font=\footnotesize},
				xmajorgrids=true,
				grid style=dashed,
				xtick=data,
				xticklabels from table={\agdata}{num_cc},
				ytick={0,2,4,6,8},
				ymin=-4,
				ylabel near ticks
				]
				\addplot[
					only marks,
					color=mDarkRed,
					mark=+,
					mark size=1.7pt]
					table[x expr=\coordindex,y=b2a_time_s]{\agdata};
				\addplot[
					only marks,
					mark=diamond*,
					color=mSteelGray,
					mark size=1.2pt]
					table[x expr=\coordindex,y=b2g_time_s]{\agdata};
				\legend{\BtwoA{},\BtwoG{}}
				\node[rotate=90,anchor=south,right,fill=white,inner sep=0] at (axis cs: 0,-4.5) {\footnotesize Surat};
				\node[rotate=90,anchor=south,right,fill=white,inner sep=0] at (axis cs: 1,-4.5) {\footnotesize Beijing};
				\node[rotate=90,anchor=south,right,fill=white,inner sep=0] at (axis cs: 2,-4.5) {\footnotesize Hong Kong};
				\node[rotate=90,anchor=south,right,fill=white,inner sep=0] at (axis cs: 3,-4.5) {\footnotesize S\~ao Paulo};
				\node[rotate=90,anchor=south,right,fill=white,inner sep=0] at (axis cs: 4,-4.5) {\footnotesize Jakarta};
				\node[rotate=90,anchor=south,right,fill=white,inner sep=0] at (axis cs: 5,-4.5) {\footnotesize Suzhou};
				\node[rotate=90,anchor=south,right,fill=white,inner sep=0] at (axis cs: 6,-4.5) {\footnotesize Wuhan};
				\node[rotate=90,anchor=south,right,fill=white,inner sep=0] at (axis cs: 7,-4.5) {\footnotesize Chennai};
				\node[rotate=90,anchor=south,right,fill=white,inner sep=0] at (axis cs: 8,-4.5) {\footnotesize Xi'an};
				\node[rotate=90,anchor=south,right,fill=white,inner sep=0] at (axis cs: 9,-4.5) {\footnotesize Nanjing};
				\node[rotate=90,anchor=south,right,fill=white,inner sep=0] at (axis cs: 10,-4.5) {\footnotesize Ahmedabad};
				\node[rotate=90,anchor=south,right,fill=white,inner sep=0] at (axis cs: 11,-4.5) {\footnotesize Guangzhou};
			\end{axis}
		\end{tikzpicture}
	\end{tabular}
\end{center}
	\caption[Computation time and cost comparisons using generalized ATSP as subroutine]{%
		Computation time and cost comparisons for the \BtwoA{} and the \BtwoG{} algorithms:
		The \BtwoG{} algorithm gives better solutions, in general.
		However, the \BtwoA{} algorithm is computationally much more efficient.
		The comparison is performed for the instances with at least three connected components in the required graph.
	\label{fig:atspvsgtsp}}
\end{figure}
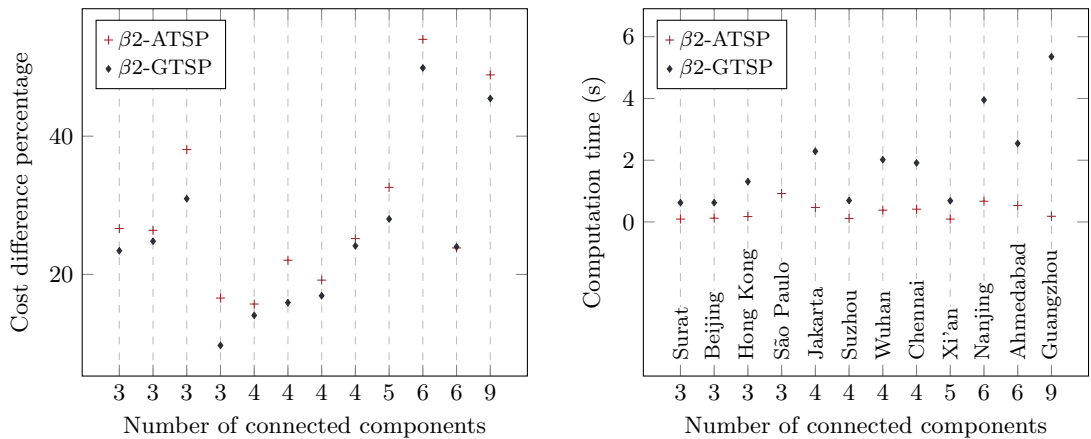

\subsection{Experiments with UAVs on Road Networks}
We performed line coverage on two different portions of the UNC Charlotte road network using a DJI Phantom~4 quadrotor UAV.
Figure~\ref{fig:uncc_example} shows a portion of the road network, and \fgref{fig:lot_input} shows a network of lanes on a set of parking lots. 
The experiments were performed with two different sets of operating conditions.
The servicing and deadheading speeds, along with wind speeds and directions, are specified in Table~\ref{tb:exp}.
The cost functions are based on the time to traverse the respective edge; the wind conditions make the costs asymmetric, as specified by Equations~\eqref{eqn:traveltime} and~\eqref{eqn:traveltimea}.
The computed line coverage tour costs using the SRLC-ILP formulation and the \BtwoAtwo{} algorithm are provided in Table~\ref{tb:exp}.
The table also provides the actual flight times.
Figures~\ref{fig:uncc} and~\ref{fig:lot} show the computed coverage tours using the \BtwoAtwo{} algorithm, the actual flight paths, and orthomosaics for the two datasets.
We generated orthomosaics from the images collected during the flights.
The images are taken only during servicing (and not during deadheading), leading to a smaller number of images and reducing the time to compute the orthomosaic.

We have the following observations from our experiments.%
\begin{enumerate}
	\item The actual flight time differs from the computed flight time.
		Since we use a commercial mobile phone application to fly the UAV autonomously along the coverage tour, we do not have access to a model of the controller.
		As our formulation allows arbitrary cost functions, a high-fidelity model of the trajectory controller and wind effects can be incorporated for better results.
		Another aspect is that we do not model turning costs, and UAVs need to slow down to take sharper turns.
		This increases the actual flight time and indicates the importance of modeling the effect of turns in the objective function in the future. 
	\item Since we flew the UAV at a relatively high altitude (compared to the distance between parallel required edges and the sensor field of view), the generated orthomosaic provides an area coverage of the parking lots.
		Line coverage can, in fact, be used as a subroutine for area coverage~\cite{AgarwalA22RAL}.
\end{enumerate}

In practice, UAVs are generally launched from an elevated position to maintain line of sight.
Similarly, there may be additional physical and safety constraints that restrict the launch location of robots.
Such a location need not be part of the road network, as shown by the blue marker in Figure~15(b), and induce additional deadheading travel to and from the launch location.
Although the costs of deadheadings from the launch location have not been considered in the experiments, one can easily incorporate them by adding an artificial required edge with zero service costs such that both the vertices of the edge correspond to the launch location.
The two experiments demonstrate the use of our line coverage formulation and the algorithms to generate efficient coverage tours for linear infrastructure.
The two modes of travel---servicing and deadheading---can be conveniently modeled in the formulation allowing lower operation times.
Furthermore, allowing deadheading reduces the amount of sensor data required for analysis.
\begin{table}[htbp]
	\begin{center}
		\renewcommand{\arraystretch}{1.8}
		\centering
		\begin{tabular}{lrr}
												& \multicolumn{1}{c}{Road network} & \multicolumn{1}{c}{Parking lots} \\ \cline{2-3} 
			Service speed           & $7.00$\SIvel{}                     & $3.33$\SIvel{}                    \\
			Deadheading speed       & $10.00$\SIvel{}                    & $5.00$\SIvel{}                    \\
			Wind speed		          & $2.00$\SIvel{}                     & $1.34$\SIvel{}                    \\
			Wind direction          & $45.00^\circ$										     &$67.50^{\circ}$                    \\
			SRLC-ILP cost           & $485$\SIs{}                           & $1,123$\SIs{}                         \\
			SRLC-ILP flight time    & $502$\SIs{}                           & $925$\SIs{}                          \\
			\BtwoAtwo{} cost        & $492$\SIs{}                           & $1,172$\SIs{}                         \\
			\BtwoAtwo{} flight time & $527$\SIs{}                           & $1,023$\SIs{}                        
		\end{tabular}
	\end{center}
	\caption{
	Operating conditions, computed coverage tour costs, and actual flight times for experiments with a quadrotor UAV.\label{tb:exp}}
\end{table}

\begin{figure}[hbtp]
	\centering
	\subfloat[Coverage tour]{%
	\includegraphics[width=0.35\textwidth]{./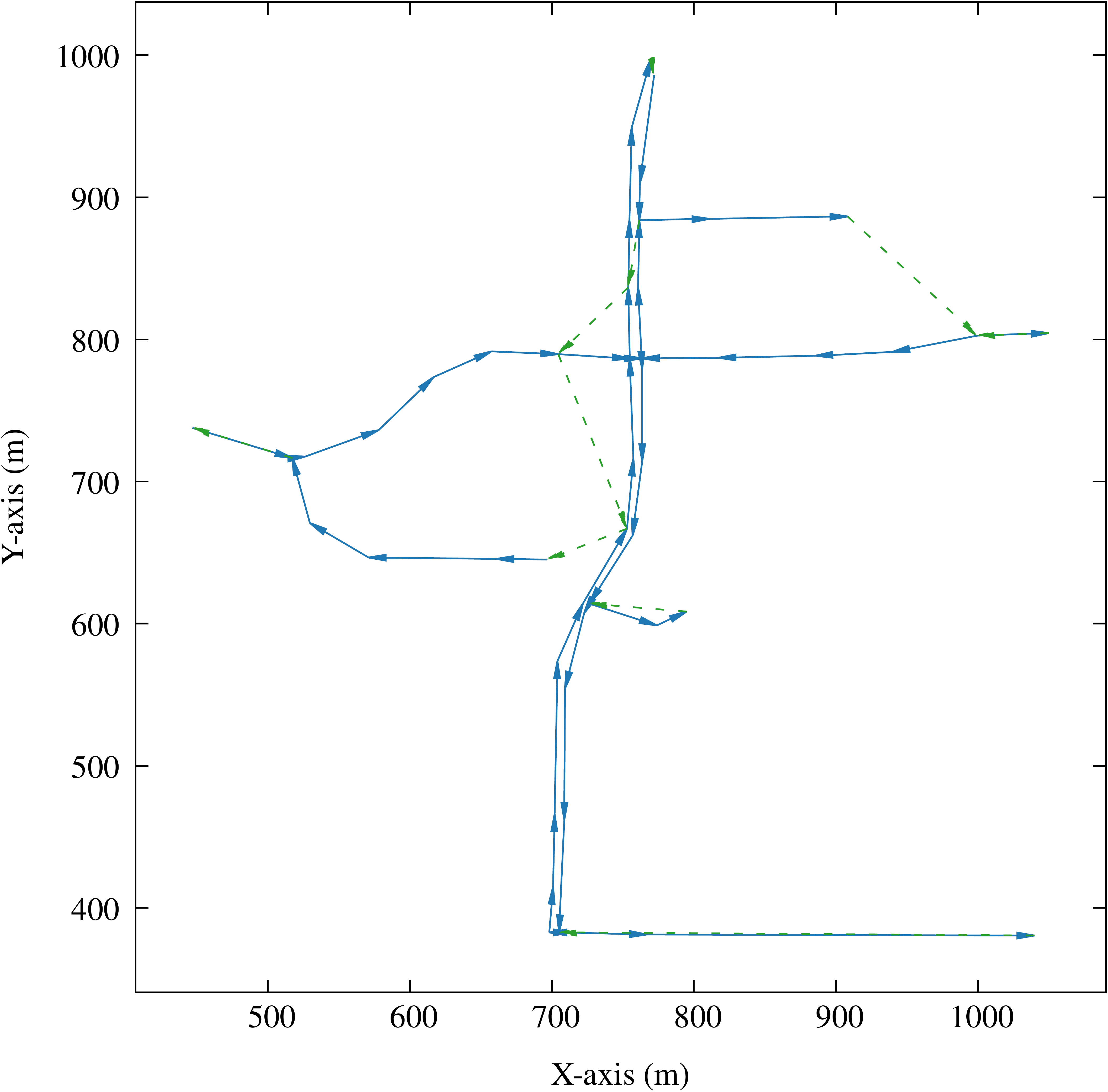}}
	\hfill
	\subfloat[Actual flight path]{%
	\includegraphics[width=0.283\textwidth]{./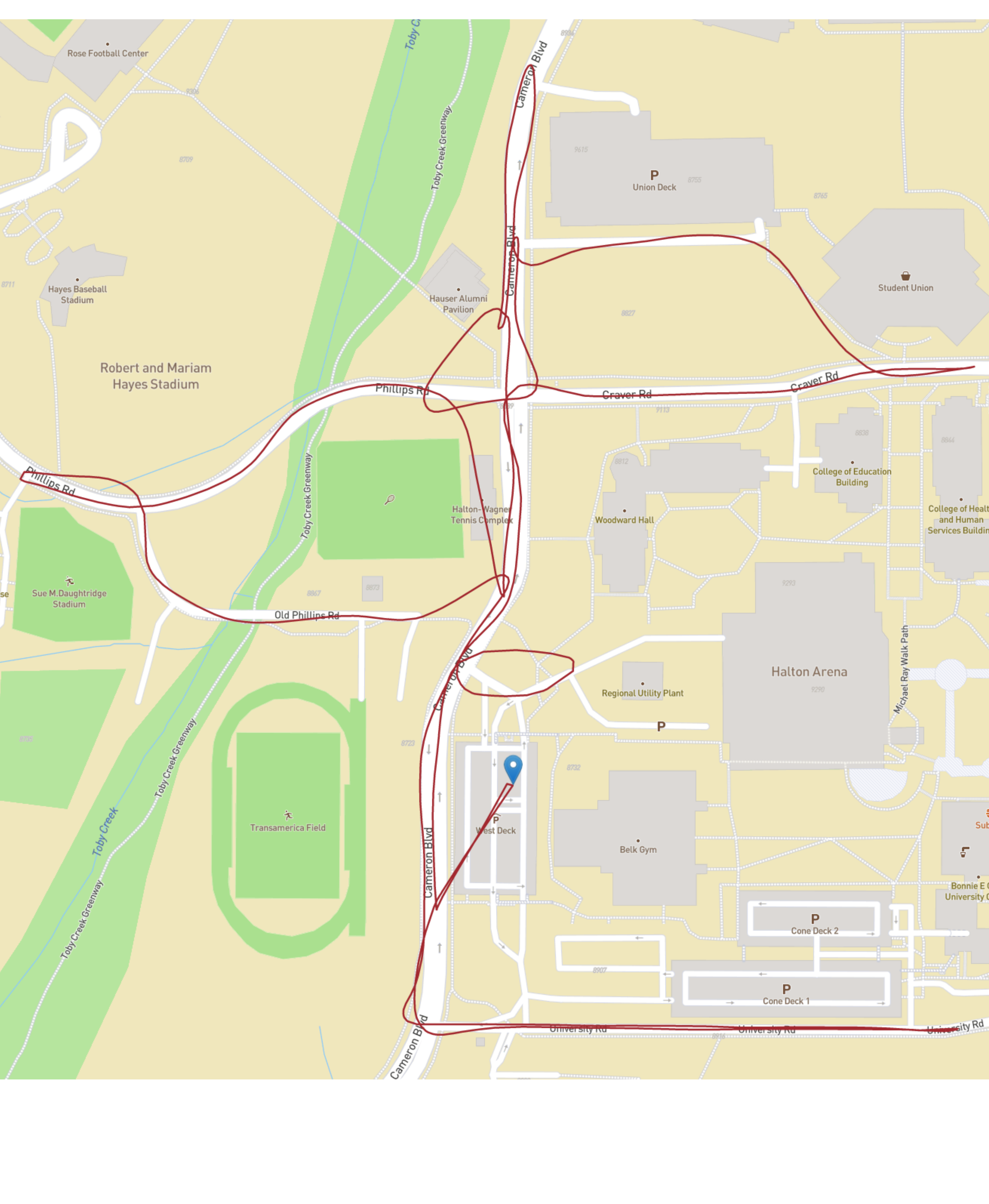}}
	\hfill
	\subfloat[Orthomosaic]{%
	\includegraphics[width=0.324\textwidth]{./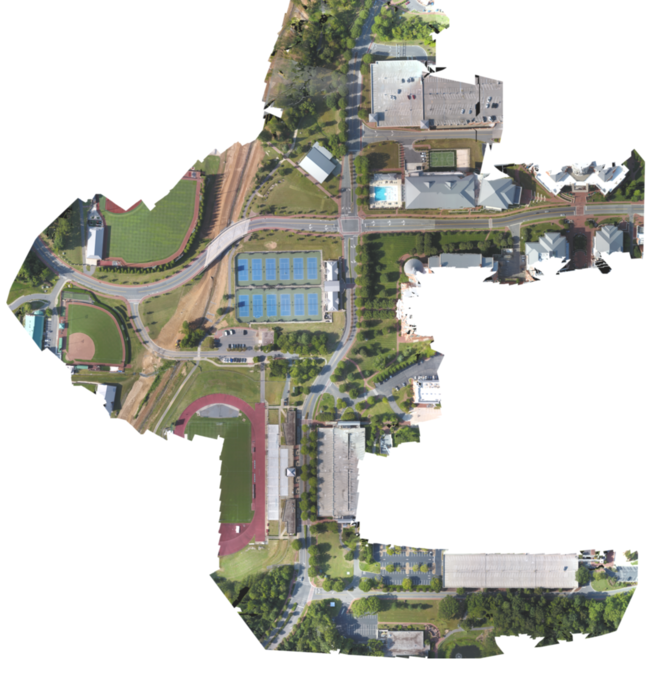}}
	\caption[Line coverage of a portion of the UNC Charlotte road network]{%
		Coverage of a portion of the UNC Charlotte road network:
		The required graph has one connected component.
		The road network has a length of 2,658\SIm{} with 48 vertices and 48 required edges.
		There are 1,128 non-required edges formed by each pair of vertices.
		(a) Coverage tour generated using the \BtwoAtwo{} algorithm.
		The cost of the solution is 492.48\SIs{}.
		The servicing travel is denoted by solid blue lines, while dashed green lines denote the deadheading travel.
		The arrowheads indicate the direction of travel.
		(b) The actual flight path of a UAV executing the coverage tour autonomously.
		The blue marker denotes the launch location of the UAV.
		(c) Orthomosaic generated from the images collected during servicing travel along the coverage tour.
		Collecting images only during servicing reduces the number of images that need to be processed for mapping and analysis.
	\label{fig:uncc}}
\end{figure}

\begin{figure}[htbp]
	\centering
	\includegraphics[width=0.5\textwidth]{./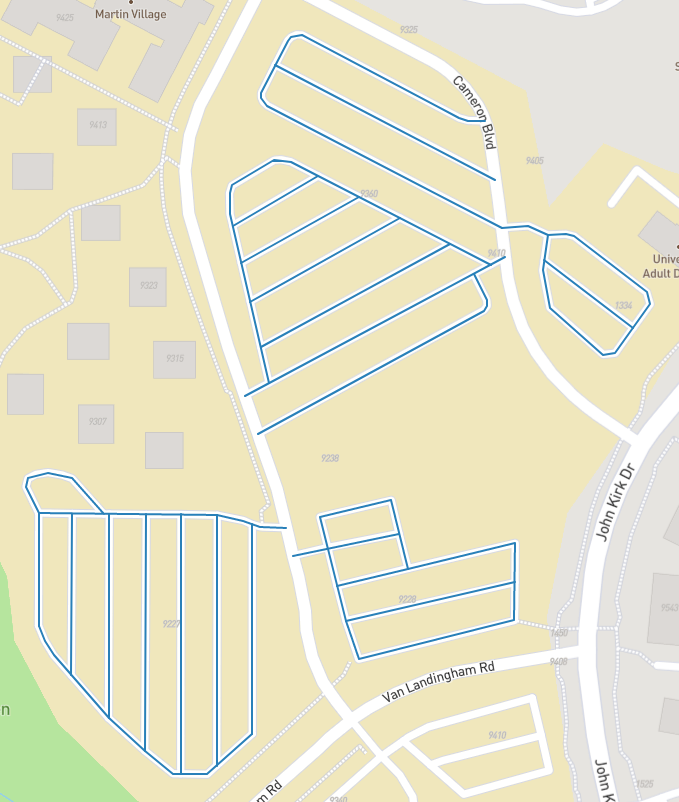}
	\caption[A network of lanes on a set of parking lots]{%
		A network of lanes specified on a set of parking lots:
		The total length of the lanes is 2,982\SIm{}.
		There are 90 vertices, 104 required edges, and 4,005 non-required edges.
		The required edges form four connected components.
	\label{fig:lot_input}}
\end{figure}
\begin{center}
	\begin{figure}[hbtp]
		\centering
		\subfloat[Coverage tour]{%
		\includegraphics[width=0.32\textwidth]{./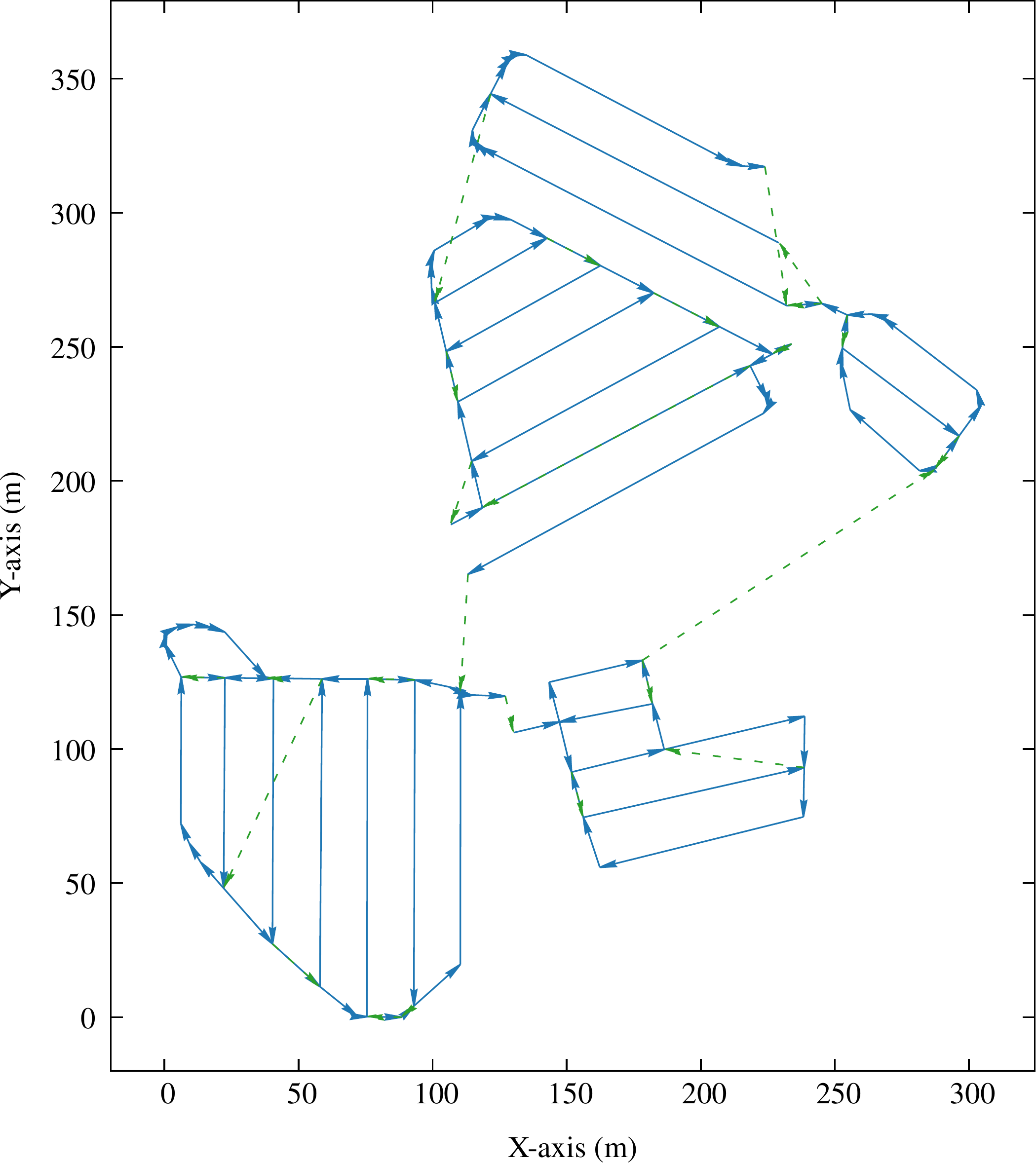}}
		\hfill
		\subfloat[Actual flight path]{%
		\includegraphics[width=0.32\textwidth]{./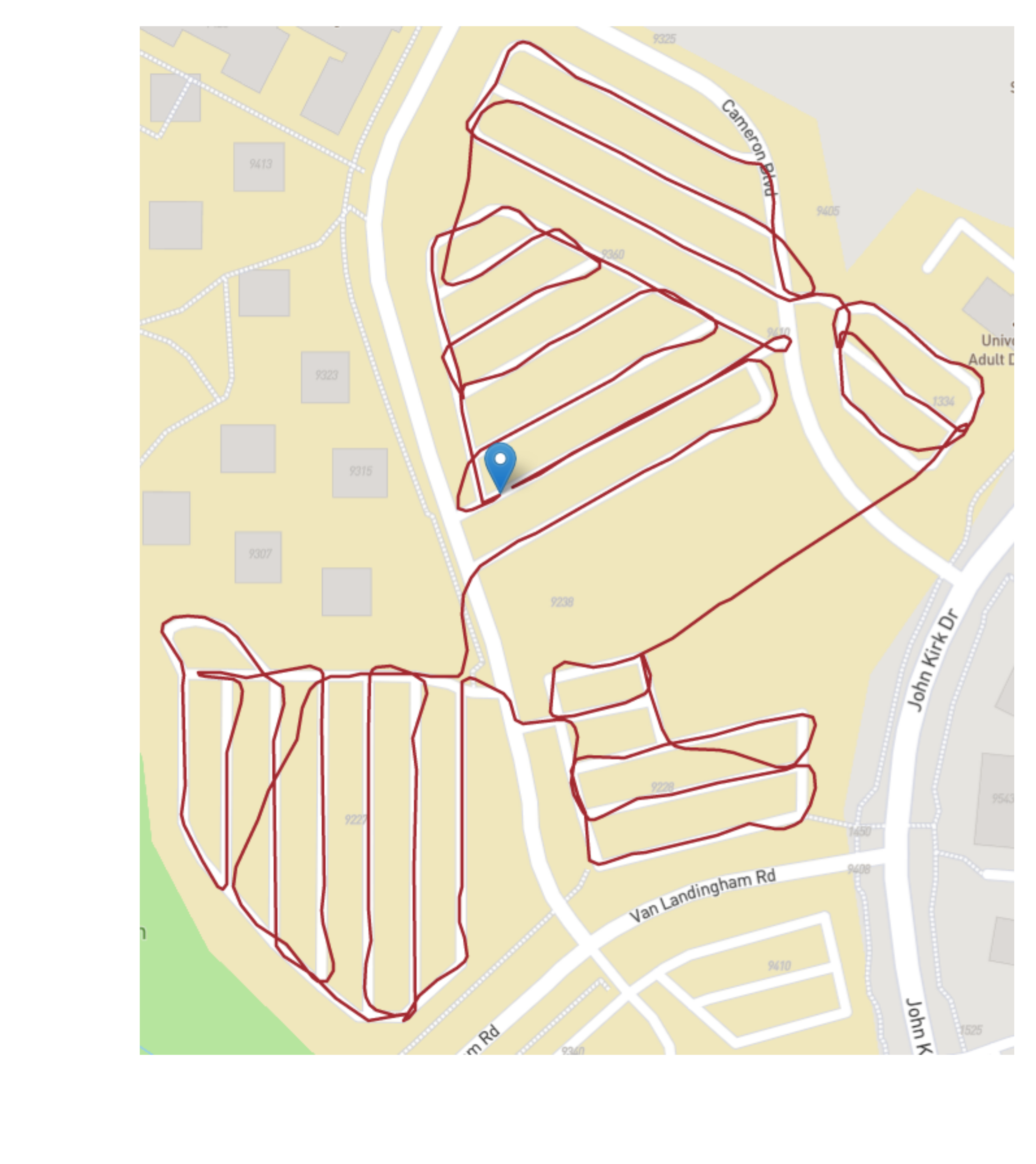}}
		\hfill
		\subfloat[Orthomosaic]{%
		\includegraphics[width=0.32\textwidth]{./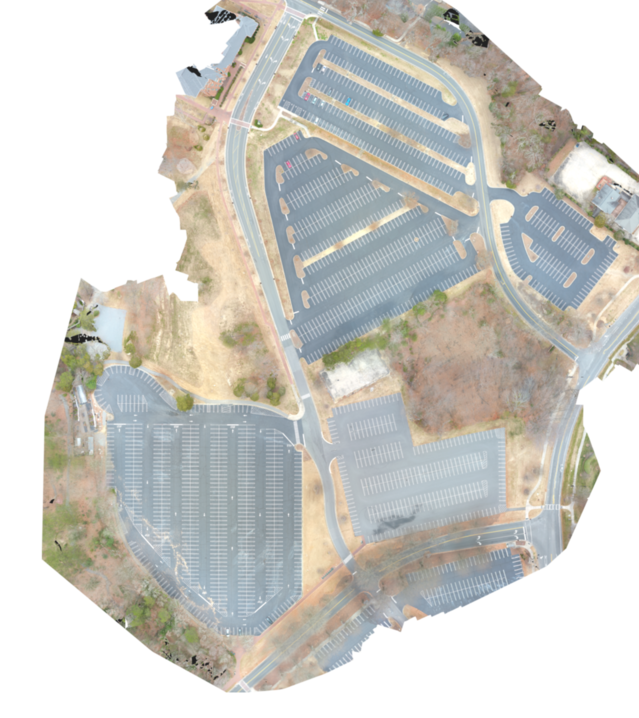}}
		\caption[Line coverage of lanes specified on a set of parking lots]{%
			Line coverage of lanes specified on a set of parking lots:
			The required graph has four connected components.
			(a) Coverage tour generated using the \BtwoAtwo{} algorithm.
			The cost of the solution is 1,172\SIs{}.
			The servicing travel is denoted by solid blue lines, while dashed green lines denote the deadheading travel.
			The arrowheads indicate the direction of travel.
			(b) The actual flight path of a UAV executing the coverage tour autonomously.
			The blue marker denotes the launch location of the UAV.
			The actual flight took 1,023\SIs{}.
			(c) Orthomosaic computed from images collected during the flight.
		\label{fig:lot}}
	\end{figure}
\end{center}

\section{Conclusion}
\label{sc:conclusion}
Motivated by coverage applications for linear infrastructure such as road networks, power lines, and oil and gas pipelines, we addressed the single robot line coverage problem for autonomous aerial and ground robots.
The linear features are modeled as required edges in a graph that the robot must service.
Additional non-required edges, which do not require servicing, provide flexibility for a robot to select its path.
The two modes of travel---servicing and deadheading---permit better modeling of real-world scenarios where a robot needs to perform task-specific actions such as taking images only along specified features.
This reduces the workload of the robot, permits further optimization of the travel cost, and decreases the amount of sensor data that needs to be analyzed.
Our formulation models asymmetric cost functions and permits multiple copies of edges.
This enables one-way streets and repeated servicing of segments.

We formulated the single robot line coverage problem as an optimization problem on graphs and developed an ILP formulation that gives optimal solutions.
Formal proofs establish the correctness of the formulation.
As the problem is NP-hard, we developed approximation algorithms that provide a guarantee on the quality of the solutions.
Studying the structure of the required graph---the graph induced by the linear features---provided insights into the problem, which were used to develop the approximation algorithms.
The algorithms were developed in stages, going from a simple version of the problem to the most general one.
First, an optimal algorithm based on the minimum cost flow problem was discussed for the case where the required graph is Eulerian.
For the case where the required graph is connected but not necessarily Eulerian, a 2-approximation algorithm was developed.
Finally, an $\left(\alpha(C) + 2\right)$-approximation algorithm was given for the general case of a required graph with $C$ components, where $\alpha(C)$ is the approximation factor for an algorithm for the ATSP.
Proofs for the approximation factor were provided for each of the algorithms.
Heuristics that improve the quality of the solutions were incorporated into the algorithm, and a GTSP based alternative was evaluated.

Simulation results on a road network dataset of the $50$ most populous cities in the world show that our main algorithm computes high-quality solutions that are within $10$\% of the optimum in less than $3$\,s.
The algorithms are fast enough for rapid replanning.
Experiments with a commercial UAV were performed on a portion of the UNC Charlotte road network and on lanes of a set of parking lots to generate orthomosaic maps.
The autonomous flights resulted in fewer images that capture only the features of interest, as the images are taken only during servicing and not while deadheading.

We are currently exploring the application of our algorithms to the line coverage problem with multiple resource-constrained robots.
Our plan is to use tour-splitting techniques to generate solutions for multiple robots.
Our preliminary study indicates that very efficient solutions can be generated for the multi-robot line coverage problem using the high-quality solutions computed by our algorithms for the single robot line coverage problem.

\section*{Acknowledgments}
The map tiles in the figures use map data from Mapbox and OpenStreetMap and their data sources: \url{https://www.mapbox.com/about/maps/} and \url{http://www.openstreetmap.org/copyright}.
We thank Gurobi for making their ILP solvers available for academic use.
We also thank Keld Helsgaun for releasing the programs LKH and GLKH for solving the ATSP and the GTSP.

\bibliographystyle{IEEEtranN}


\begin{thebibliography}{45}
\providecommand{\natexlab}[1]{#1}
\providecommand{\url}[1]{#1}
\csname url@samestyle\endcsname
\providecommand{\newblock}{\relax}
\providecommand{\bibinfo}[2]{#2}
\providecommand{\BIBentrySTDinterwordspacing}{\spaceskip=0pt\relax}
\providecommand{\BIBentryALTinterwordstretchfactor}{4}
\providecommand{\BIBentryALTinterwordspacing}{\spaceskip=\fontdimen2\font plus
\BIBentryALTinterwordstretchfactor\fontdimen3\font minus
  \fontdimen4\font\relax}
\providecommand{\BIBforeignlanguage}[2]{{%
\expandafter\ifx\csname l@#1\endcsname\relax
\typeout{** WARNING: IEEEtranN.bst: No hyphenation pattern has been}%
\typeout{** loaded for the language `#1'. Using the pattern for}%
\typeout{** the default language instead.}%
\else
\language=\csname l@#1\endcsname
\fi
#2}}
\providecommand{\BIBdecl}{\relax}
\BIBdecl

\bibitem[Corber{\'a}n and Laporte(2014)]{CorberanL14}
A.~Corber{\'a}n and G.~Laporte, Eds.,
  \emph{\href{https://doi.org/10.1016/j.engappai.2018.09.015}{Arc routing:
  problems, methods, and applications}}.\hskip 1em plus 0.5em minus 0.4em\relax
  Philadelphia, PA, USA: Society for Industrial and Applied Mathematics (SIAM),
  2014.

\bibitem[Corber{\'a}n et~al.(2021)Corber{\'a}n, Eglese, Hasle, Plana, and
  Sanchis]{CorberanEHPS21}
A.~Corber{\'a}n, R.~Eglese, G.~Hasle, I.~Plana, and J.~M. Sanchis,
  ``\href{https://doi.org/10.1002/net.21965}{Arc routing problems: A review of
  the past, present, and future},'' \emph{Networks}, vol.~77, no.~1, pp.
  88--115, 2021.

\bibitem[Orloff(1974)]{Orloff74}
C.~S. Orloff, ``A fundamental problem in vehicle routing,'' \emph{Networks},
  vol.~4, no.~1, pp. 35--64, 1974.

\bibitem[Lenstra and Kan(1976)]{LenstraK76}
J.~K. Lenstra and A.~H. G.~R. Kan, ``On general routing problems,''
  \emph{Networks}, vol.~6, no.~3, pp. 273--280, 1976.

\bibitem[Agarwal and Akella(2020)]{AgarwalA20ICRA}
S.~Agarwal and S.~Akella,
  ``\href{https://doi.org/10.1109/ICRA40945.2020.9197292}{Line coverage with
  multiple robots},'' in \emph{IEEE International Conference on Robotics and
  Automation (ICRA)}, Paris, France, 2020, pp. 3248--3254.

\bibitem[Agarwal and Akella(2021)]{AgarwalA20WAFR}
------, ``Approximation algorithms for the single robot line coverage
  problem,'' in \emph{Algorithmic Foundations of Robotics XIV}, S.~M. LaValle,
  M.~Lin, T.~Ojala, D.~Shell, and J.~Yu, Eds.\hskip 1em plus 0.5em minus
  0.4em\relax Cham, Switzerland: Springer International Publishing, 2021, pp.
  534--550.

\bibitem[Edmonds and Johnson(1973)]{EdmondsJ73}
J.~Edmonds and E.~L. Johnson, ``Matching, {E}uler tours and the {C}hinese
  postman,'' \emph{Mathematical Programming}, vol.~5, no.~1, pp. 88--124, Dec.
  1973.

\bibitem[Frederickson(1979)]{Frederickson79}
G.~N. Frederickson, ``Approximation algorithms for some postman problems,''
  \emph{Journal of the ACM}, vol.~26, no.~3, pp. 538--554, Jul. 1979.

\bibitem[Raghavachari and Veerasamy(1999{\natexlab{a}})]{RaghavachariV99}
B.~Raghavachari and J.~Veerasamy, ``A 3/2-approximation algorithm for the mixed
  postman problem,'' \emph{SIAM J. Discrete Math.}, vol.~12, pp. 425--433,
  1999.

\bibitem[Guan(1984)]{Guan84}
M.~Guan, ``On the windy postman problem,'' \emph{Discrete Applied Mathematics},
  vol.~9, no.~1, pp. 41--46, Sep. 1984.

\bibitem[Win(1989)]{Win89}
Z.~Win, ``On the windy postman problem on {E}ulerian graphs,''
  \emph{Mathematical Programming}, vol.~44, no.~1, pp. 97--112, May 1989.

\bibitem[Raghavachari and Veerasamy(1999{\natexlab{b}})]{RaghavachariV99SODA}
B.~Raghavachari and J.~Veerasamy, ``Approximation algorithms for the asymmetric
  postman problem,'' in \emph{Tenth Annual ACM-SIAM Symposium on Discrete
  Algorithms}, ser. SODA 1999, Baltimore, Maryland, USA, 1999, pp. 734--741.

\bibitem[Christofides(1976)]{Christofides76}
N.~Christofides, ``Worst-case analysis of a new heuristic for the travelling
  salesman problem,'' Graduate School of Industrial Administration, Carnegie
  Mellon University, Pittsburgh, PA, Tech. Rep. 388, 1976.

\bibitem[Benavent et~al.(2007)Benavent, Carrotta, Corber{\'a}n, Sanchis, and
  Vigo]{BenaventCCSV07}
E.~Benavent, A.~Carrotta, A.~Corber{\'a}n, J.~M. Sanchis, and D.~Vigo, ``Lower
  bounds and heuristics for the windy rural postman problem,'' \emph{European
  Journal of Operational Research}, vol. 176, no.~2, pp. 855--869, 2007.

\bibitem[Benavent et~al.(2005)Benavent, Corber{\'a}n, Pinana, Plana, and
  Sanchis]{BenaventCPPS04}
E.~Benavent, A.~Corber{\'a}n, E.~Pinana, I.~Plana, and J.~M. Sanchis, ``New
  heuristic algorithms for the windy rural postman problem,'' \emph{Computers
  \& Operations Research}, vol.~32, no.~12, pp. 3111--3128, 2005.

\bibitem[{van Bevern} et~al.(2017){van Bevern}, Komusiewicz, and
  Sorge]{vanBevernKS17}
R.~{van Bevern}, C.~Komusiewicz, and M.~Sorge, ``A parameterized approximation
  algorithm for the mixed and windy capacitated arc routing problem: Theory and
  experiments,'' \emph{Networks}, vol.~70, no.~3, pp. 262--278, Oct. 2017.

\bibitem[Corber{\'a}n and Prins(2010)]{CorberanP10}
A.~Corber{\'a}n and C.~Prins, ``Recent results on arc routing problems: An
  annotated bibliography,'' \emph{Networks}, vol.~56, no.~1, pp. 50--69, 2010.

\bibitem[Held and Karp(1962)]{HeldK62}
M.~Held and R.~M. Karp, ``A dynamic programming approach to sequencing
  problems,'' \emph{Journal of the Society for Industrial and Applied
  Mathematics}, vol.~10, no.~1, pp. 196--210, 1962.

\bibitem[Bellman(1962)]{Bellman62}
R.~Bellman, ``Dynamic programming treatment of the travelling salesman
  problem,'' \emph{Journal of the ACM}, vol.~9, no.~1, pp. 61--63, Jan. 1962.

\bibitem[Svensson et~al.(2018)Svensson, Tarnawski, and
  V\'{e}gh]{SvenssonTV18ATSP}
O.~Svensson, J.~Tarnawski, and L.~A. V\'{e}gh, ``A constant-factor
  approximation algorithm for the asymmetric traveling salesman problem,'' in
  \emph{50th Annual ACM SIGACT Symposium on Theory of Computing}, ser. STOC
  2018, Los Angeles, CA, USA, 2018, pp. 204--213.

\bibitem[Traub and Vygen(2020)]{TraubV20ATSP}
V.~Traub and J.~Vygen, ``An improved approximation algorithm for {ATSP},'' in
  \emph{52nd Annual ACM SIGACT Symposium on Theory of Computing}, ser. STOC
  2020, Chicago, IL, USA, 2020, pp. 1--13.

\bibitem[Dille and Singh(2013)]{DilleS13}
M.~Dille and S.~Singh, ``Efficient aerial coverage search in road networks,''
  in \emph{AIAA Guidance, Navigation, and Control Conference}, Boston, MA, USA,
  Aug. 2013, pp. 5048--5067.

\bibitem[Oh et~al.(2014)Oh, Kim, Tsourdos, and White]{OhKTW14}
H.~Oh, S.~Kim, A.~Tsourdos, and B.~White, ``Coordinated road-network search
  route planning by a team of {UAVs},'' \emph{International Journal of Systems
  Science}, vol.~45, no.~5, pp. 825--840, 2014.

\bibitem[Easton and Burdick(2005)]{EastonB05}
K.~Easton and J.~Burdick, ``A coverage algorithm for multi-robot boundary
  inspection,'' in \emph{IEEE International Conference on Robotics and
  Automation}, Barcelona, Spain, Apr. 2005, pp. 727--734.

\bibitem[Williams and Burdick(2006)]{WilliamsB06}
K.~Williams and J.~Burdick, ``Multi-robot boundary coverage with plan
  revision,'' in \emph{IEEE International Conference on Robotics and
  Automation}, Orlando, USA, 2006, pp. 1716--1723.

\bibitem[Xu and Stentz(2010)]{XuS10}
L.~Xu and T.~Stentz, ``A fast traversal heuristic and optimal algorithm for
  effective environmental coverage,'' in \emph{Proceedings of Robotics: Science
  and Systems}, Zaragoza, Spain, June 2010, pp. 161--168.

\bibitem[Xu and Stentz(2011)]{XuS11ICRA}
L.~Xu and A.~Stentz, ``An efficient algorithm for environmental coverage with
  multiple robots,'' in \emph{IEEE International Conference on Robotics and
  Automation (ICRA)}, Shanghai, China, May 2011, pp. 4950--4955.

\bibitem[Campbell et~al.(2018)Campbell, Corber{\'a}n, Plana, and
  Sanchis]{CampbellCPS18}
J.~F. Campbell, A.~Corber{\'a}n, I.~Plana, and J.~M. Sanchis, ``Drone arc
  routing problems,'' \emph{Networks}, vol.~72, no.~4, pp. 543--559, 2018.

\bibitem[Arkin et~al.(2000)Arkin, Fekete, and Mitchell]{ArkinFM00}
E.~M. Arkin, S.~P. Fekete, and J.~S.~B. Mitchell, ``Approximation algorithms
  for lawn mowing and milling,'' \emph{Computational Geometry}, vol.~17, no.~1,
  pp. 25--50, 2000.

\bibitem[{Mannadiar} and {Rekleitis}(2010)]{MannadiarR10}
R.~{Mannadiar} and I.~{Rekleitis}, ``Optimal coverage of a known arbitrary
  environment,'' in \emph{IEEE International Conference on Robotics and
  Automation (ICRA)}, Anchorage, USA, 2010, pp. 5525--5530.

\bibitem[{Karapetyan} et~al.(2017){Karapetyan}, {Benson}, {McKinney},
  {Taslakian}, and {Rekleitis}]{KarapetyanBMTR17}
N.~{Karapetyan}, K.~{Benson}, C.~{McKinney}, P.~{Taslakian}, and
  I.~{Rekleitis}, ``Efficient multi-robot coverage of a known environment,'' in
  \emph{IEEE/RSJ International Conference on Intelligent Robots and Systems
  (IROS)}, Vancouver, Canada, 2017, pp. 1846--1852.

\bibitem[Frederickson et~al.(1976)Frederickson, Hecht, and
  Kim]{FredericksonHK76}
G.~N. Frederickson, M.~S. Hecht, and C.~E. Kim, ``Approximation algorithms for
  some routing problems,'' in \emph{17th Annual Symposium on Foundations of
  Computer Science}, Houston, USA, 1976, pp. 216--227.

\bibitem[Papadimitriou and Steiglitz(1982)]{PapadimitriouSBook}
C.~H. Papadimitriou and K.~Steiglitz, \emph{Combinatorial Optimization:
  Algorithms and Complexity}.\hskip 1em plus 0.5em minus 0.4em\relax Hoboken,
  NJ, USA: Prentice-Hall, Inc., 1982.

\bibitem[Gouveia et~al.(2010)Gouveia, Mour\~{a}o, and Pinto]{GouveiaMP10}
L.~Gouveia, M.~C. Mour\~{a}o, and L.~S. Pinto, ``Lower bounds for the mixed
  capacitated arc routing problem,'' \emph{Computers \& Operations Research},
  vol.~37, no.~4, pp. 692--699, 2010.

\bibitem[Orlin(1993)]{Orlin93}
J.~B. Orlin, ``A faster stronger polynomial minimum cost flow algorithm,''
  \emph{Operations Research}, vol.~41, pp. 338--350, 1993.

\bibitem[Dasgupta et~al.(2006)Dasgupta, Papadimitriou, and
  Vazirani]{DasguptaPV06book}
S.~Dasgupta, C.~Papadimitriou, and U.~Vazirani, \emph{Algorithms}.\hskip 1em
  plus 0.5em minus 0.4em\relax New York, USA: McGraw-Hill Higher Education,
  2006.

\bibitem[Roughgarden(2020)]{AlgorithmIlluminated4}
T.~Roughgarden, \emph{Algorithms Illuminated, Part 4: Algorithms for {NP}-Hard
  Problems}.\hskip 1em plus 0.5em minus 0.4em\relax New York, USA:
  Soundlikeyourself Publishing, LLC, 2020.

\bibitem[Noon and Bean(1993)]{NoonB93}
C.~E. Noon and J.~C. Bean, ``An efficient transformation of the generalized
  traveling salesman problem,'' \emph{INFOR: Information Systems and
  Operational Research}, vol.~31, no.~1, pp. 39--44, 1993.

\bibitem[Knuth(2011)]{Knuth2011TACP}
D.~E. Knuth, \emph{The Art of Computer Programming, Volume 4A: Combinatorial
  Algorithms, Part 1}.\hskip 1em plus 0.5em minus 0.4em\relax Boston, MA, USA:
  Addison-Wesley, 2011.

\bibitem[Helsgaun(2000)]{Helsgaun00}
K.~Helsgaun, ``An effective implementation of the {L}in-{K}ernighan traveling
  salesman heuristic,'' \emph{European Journal of Operational Research}, vol.
  126, no.~1, pp. 106--130, 2000.

\bibitem[W{\o}hlk(2008)]{Wohlk08}
S.~W{\o}hlk, ``\href{https://dx.doi.org/%2010.2174/1874243200802010008}{An
  Approximation Algorithm for the Capacitated Arc Routing Problem},'' \emph{The
  Open Operational Research Journal}, vol.~2, no.~1, pp. 8--12, Oct. 2008.

\bibitem[{Gurobi Optimization, {LLC}}(2021)]{Gurobi}
\BIBentryALTinterwordspacing
{Gurobi Optimization, {LLC}}, ``Gurobi optimizer reference manual,'' 2021.
  [Online]. Available: \url{http://www.gurobi.com}
\BIBentrySTDinterwordspacing

\bibitem[Helsgaun(2015)]{Helsgaun15}
K.~Helsgaun, ``Solving the equality generalized traveling salesman problem
  using the {L}in-{K}ernighan-{H}elsgaun algorithm,'' \emph{Mathematical
  Programming Computation}, vol.~7, no.~3, pp. 269--287, Sep 2015.

\bibitem[{OpenStreetMap contributors}(2022)]{OpenStreetMap}
{OpenStreetMap contributors}, ``{Planet dump retrieved from
  \url{https://planet.osm.org}},'' Online: \url{https://www.openstreetmap.org},
  2022.

\bibitem[Agarwal and Akella(2022)]{AgarwalA22RAL}
S.~Agarwal and S.~Akella,
  ``\href{https://ieeexplore.ieee.org/document/9697431/authors#authors}{Area
  Coverage With Multiple Capacity-Constrained Robots},'' \emph{IEEE Robotics
  and Automation Letters}, vol.~7, no.~2, pp. 3734--3741, 2022.

\end{thebibliography}

\end{document}